\documentclass{article}

\usepackage[utf8]{inputenc} 
\usepackage[T1]{fontenc}    
\usepackage[
  colorlinks=true,
  linkcolor=black,
  citecolor=black,
  urlcolor=black
]{hyperref}                 
\usepackage{url}            
\usepackage{booktabs}       
\usepackage{amsfonts, dsfont}       
\usepackage{nicefrac}       
\usepackage{microtype}      
\usepackage{graphicx}
\usepackage{amsmath,amssymb,amsthm,mathtools,fullpage}
\usepackage{float} 

\usepackage{arxiv}

\usepackage{tikz}
\usetikzlibrary{arrows.meta, positioning, fit, backgrounds}
\usepackage{pgfplots}
\usepgfplotslibrary{fillbetween}
\pgfplotsset{compat=1.18}

\usepackage[sorting=nty, style=ieee, backend=biber, citestyle=numeric-comp]{biblatex} 
\newcommand{\N}{\mathbb{N}}
\newcommand{\R}{\mathbb{R}}

\newcommand{\e}{\varepsilon}
\newcommand{\arccosh}{\operatorname{arccosh}}

\newtheorem{theorem}{Theorem}
\newtheorem{lemma}[theorem]{Lemma}
\newtheorem{claim}[theorem]{Claim}
\newtheorem{corollary}[theorem]{Corollary}
\newtheorem{remark}[theorem]{Remark}

\theoremstyle{definition}

\newtheorem{definition}[theorem]{Definition}

\usepackage{xcolor}

\begin{document}

\title{Limitations of~Learning Tanh Neural Networks\\with Finite Precision}

\author{
Philipp Grohs$^{1,2}$ and~Matěj Trödler$^{1}$
}

\newcommand{\affiliations}{
$^{1}$ Faculty of~Mathematics, University of~Vienna\\
<firstname>.<lastname>@univie.ac.at\\
$^{2}$ RICAM, Austrian Academy of~Sciences\\
<firstname>.<lastname>@oeaw.ac.at
}

\maketitle

\begin{abstract}
    We investigate limitations of~learning $\tanh$ neural networks from~point evaluations under finite-precision computations and~$L^p$ accuracy guarantees, building on~Berner, Grohs, and~Voigtl\"ander (2023). Our approach is based on~a~novel construction of~sharply localized bump functions via iterated $\tanh$ activations. Using this mechanism, we show that, in~a~finite-precision setting, no adaptive randomized algorithm based on $m$ samples can achieve a convergence rate higher than the Monte Carlo rate $O(m^{-1/p})$ in the $L^p$ norm, unless the sampling budget grows exponentially with the~size of the network parameters and~architecture. The~results reveal fundamental limitations imposed by~finite precision on~the~learnability of~classes containing localized bump functions, extending previous results for~ReLU networks to~the~$\tanh$ setting.
\end{abstract}
%


\section{Introduction}
Since neural networks play a~crucial role in~modern machine learning, the~theoretical understanding of~their complexity, expressivity, and~limitations has become a~central topic of~mathematical research. By~now, there exist deep theoretical results showing that neural networks possess enormous expressive power and~can approximate continuous functions with arbitrary accuracy; see, for~example,~\cite{pinkus1999approximation,grohs2023proof,elbrachter2021deep,bolcskei2019optimal}. In~particular, universal approximation results for~feedforward neural networks with sigmoidal activation functions\footnote{The classical sigmoid activation is merely a~shifted and~rescaled version of~$\tanh$.} are established; see, e.g.,~\cite{barron2002universal,de2021approximation}. However, practical learning problems are substantially more challenging. In~realistic applications, the~target function is generally unknown, and~one typically has access only to~partial information about it, most often in~the~form of~finitely many sampled values.

More precisely, given a~model class $U\subset C([0,1]^d,\R)$, in~supervised learning one aims to~recover an~unknown function $u\in U$ from~its evaluations at~finitely many sample points $x_1,\dots,x_m\in[0,1]^d$ via  an~algorithm $A$, which, using only the~sampled values $u(x_1),\dots,u(x_m)$, constructs an~approximation satisfying $A(u)\approx u$.
If~the~reconstruction error is measured in~an~$L^2$ norm, statistical learning theory yields probabilistic bounds in~terms of~the~number of~noisy samples~\cite{cucker2002mathematical,bousquet2003introduction,berner2020analysis,lugosi2019risk}, typically giving the~Monte Carlo error rate $m^{-1/2}$, with~implicit constants depending on~the~complexity of~$U$. For~noiseless samples, one would typically expect much higher reconstruction rates; for~instance, when~$U$ consists of~finite element functions, a~fixed number of~samples can already allow exact reconstruction of~any~$u \in U$~\cite{ern2004theory}.

To clarify the~optimal rate for~classes of~neural networks, we are interested in~deriving lower bounds on~the~error in~terms of~the~number of~samples. More specifically, we ask the~following question: how many samples $m$ are necessary to~guarantee that the~best possible algorithm can approximate every function $u\in U$ to~accuracy $\e$ based solely on~sampled function values?

In~\cite{berner2023learning,grohs2024proof,grohs2023sobolev}, the~authors investigated this question in~the~ReLU setting, in~particular the~limitations of~learning classes containing ReLU networks. They showed that, as the~architecture size increases, the~learning error grows exponentially, while it decays in~terms of~the~number of~samples at~the~rate $m^{-1/d-1/p}$, where $d$ denotes the~input dimension and~$p$ the~exponent in~the~$L^p$ error metric. Their result is remarkable in~that it is entirely independent of~the~computational running time of~the~learning algorithm: it demonstrates that even if the~optimization problems associated with learning such function classes were computationally feasible, the~required number of~samples alone renders efficient learning impossible.
For further motivation, discussion, and~a comprehensive survey of this line of work, we refer to~\cite{berner2023learning}.

Motivated by~this work, we investigate similar phenomena for~neural networks with the~widely used hyperbolic tangent activation function $\tanh$. This activation appears prominently in~recurrent architectures such as RNNs~\cite{sherstinsky2020fundamentals} and~their enhanced variants including LSTM~\cite{hochreiter1997long} and~GRU~\cite{cho2014properties}. Moreover, $\tanh$ is frequently employed in~physics-informed neural networks (PINNs)~\cite{raissi2019physics}, where its smoothness ($\tanh \in C^\infty(\mathbb{R})$), odd symmetry, and~boundedness provide convenient analytical and~numerical properties~\cite{girault2024approximation}. The~activation is also closely related to~modern activations such as GELU~\cite{hendrycks2016gaussian}, and~thus indirectly connected to~architectures widely used in~NLP and~LLMs, including BERT~\cite{devlin2019bert} and~foundational GPT models~\cite{radford2018improving}. More generally, smooth sigmoidal activations play an~important role in~many neural-network architectures due to~these properties, making them attractive both in~applications and~theoretical analysis.

Unfortunately, these advantages also introduce substantial analytical difficulties.
In~particular, repeated compositions of~smooth sigmoidal activations rapidly lead to~highly intricate functional expressions involving nonlinear interactions of~exponential terms. Moreover, the~qualitative behavior of~such compositions strongly depends on~the~underlying weight matrices and~biases: depending on~the~parameter regime, the~resulting functions may resemble sharp sign-type transitions, nearly vanishing functions, or~highly nonlinear interpolations between these extremes, often exhibiting exponential sensitivity with respect to~the~network parameters. 
An additional complication stems from~the~fact that sigmoidal activations are real-analytic. This has an~important practical consequence: for~nonzero weights and~nontrivial inputs, the~output is typically nonzero as well. As a~result, it is generally not possible to~enforce exact vanishing of~the~network output on~sets of~positive measure. This is a~key difference compared to~the~approach of~\cite{berner2023learning}, where localized bump constructions supported on~small cubes are possible and~play an~essential role. In~the~sigmoidal setting, such compactly supported constructions are no longer available in~exact form. 
Moreover, the~results of~\cite{berner2023learning} crucially rely on~the~homogeneity property of~the~ReLU activation $\rho(x)=\max\{0,x\}$, namely $
\rho(\lambda x)=\lambda \rho(x)$ for~$\lambda\in (0,\infty) $,
which provides a~powerful structural tool in~theoretical analysis. In~the~setting of~nonlinear sigmoidal activations, however, this mechanism is no longer available.
To summarize: in~order to~study the~optimal sampling complexity for~neural network classes with sigmoidal activation functions, different approaches and~perspectives become necessary. 

A main novel perspective of~this work is therefore the~incorporation of~finite precision arithmetic. Since neural networks are evaluated on~finite-precision machines, values below some threshold $\e_p>0$ are effectively indistinguishable from~zero, which we take into account. This enables us to~construct localized bumps of~positive measure whose tails decay exponentially fast outside a~small cube and~fall below machine precision, where they are effectively treated as zero by~the~computational device. This allows us to~establish results analogous to~those in~\cite{berner2023learning}, now relying solely on~the~finite-precision assumption, which forms a~crucial ingredient.

Another technical contribution concerns the~mathematical mechanism used to~construct localized bump functions which leverages the~fixed-point structure of~the~$\tanh$ activation, in~particular with the~existence of~nontrivial solutions $x_\star$ of~$\tanh(ax_\star)=x_\star.$ Such fixed-point phenomena have classical roots in~Brouwer's and~Banach's fixed-point theorems and~have been extensively studied in~various fields. Fixed-point properties of~neural networks themselves have also recently attracted attention, see e.g.~\cite{davydov2024non, jafarpour2021robust}. In~contrast to~these works, our analysis exploits the~fixed-point behavior directly at~the~level of~compositions of~network layers. This mechanism forms one of~the~key ingredients of~the~proof and~is investigated in~detail mainly in~Subsection~\ref{subsection:properties_of_sigma_n}.

The main result of~this paper proves that hardness results as in~\cite{berner2023learning} persist also in~the~case of~$\tanh$ networks. In~the~simplified theorem below, which constitutes a~condensed version of~the~main result of~this paper, we prove that under the~assumption of~finite precision and~relatively mild additional conditions induced by it, the~necessary number of~samples for~learning the~class $U$ exhibits a~dependence similar to~that obtained in~the~ReLU setting. In~particular, the~required number of~samples grows exponentially with respect to~the~network architecture.

\begin{theorem}[Simplified main result]\label{thm:main_result_simp}
Let $p,q\in[1,\infty]$.
Suppose that $U$ contains all $\tanh$ neural networks with input dimension $d$, depth $L$, width $B$, and~coefficients bounded by~$c$ in~$\ell^q$ norm.
Assume that
$\tilde c:=B^{1-2/q}c>1$, $B\ge 2d$,
and~that $s,j\in\mathbb N$ with $s\le d$, $7\le L$ and~$j\le L-6$ are chosen depending on~parameters above.
Assume that algorithms operate under finite precision $\e_p>0$.
Then for~all
\[
m
\lesssim
c^{\frac s2(5+j)}
B^{\frac s2\left(
1+j-\frac1q(2j+7)
\right)},
\]
every algorithm using $m$ point samples for~reconstructing $U$ incurs $L^p$ approximation error $\e$ at~least
\[
\e
\ge
c\,
\frac{\sqrt{\tilde c^2-1}}{4\tilde c}
\left(
\frac{3\tanh\big(B^{-\frac1q}c\big)}
{5B^{-\frac 1q}c\,
2^{1+\frac2s}\sqrt{s}}
\right)^{\frac{s}{p}}
m^{-\frac1p}.
\]
\end{theorem}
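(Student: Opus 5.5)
We follow the lower-bound scheme of \cite{berner2023learning}: find many functions in $U$ with essentially disjoint supports, so that $m$ samples cannot detect all of them. The new ingredient is a family of sharply localized $\tanh$ bumps, which replace the compactly supported ReLU bumps.

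\textbf{Step 1: one-dimensional bump.} Set $a:=\tilde c>1$. Then $\tanh(ax)=x$ has a nontrivial fixed point $x_\star$. Under iteration of $x\mapsto\tanh(ax)$, the point $x_\star$ attracts $(0,\infty)$, $-x_\star$ attracts $(-\infty,0)$, and $0$ is repelling. To get a plateau-plus-tail profile, we will feed a shifted difference of two steep $\tanh$ ramps (a first layer of width $2$ per coordinate, hence $B\ge 2d$) into $j$ iterated layers $\sigma_n$. Each layer uses a matrix with all entries of size $B^{-1/q}c$, so the $\ell^q$ norm stays $\le c$, while the effective gain per layer is $B\cdot B^{-2/q}c=\tilde c$ on the relevant direction.

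We then need the quantitative properties of $\sigma_n$ from Subsection~\ref{subsection:properties_of_sigma_n}:
\begin{itemize}
\item on a central interval of length $\asymp 1/(\text{first-layer slope})$, the output stays near $x_\star$, and $x_\star$ is bounded below by $\sqrt{\tilde c^2-1}/\tilde c$ up to constants;
\item outside a slightly larger interval, the output decays like $\tilde c^{-j}$ times the first-layer tail, hence exponentially.
\end{itemize}
The width of the localization cube is governed by the factor $3\tanh(B^{-1/q}c)/(5B^{-1/q}c)$.

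\textbf{Step 2: $s$-dimensional bump and finite precision.} Take the product structure in $s\le d$ coordinates, realized by summing the coordinate bumps and applying a further thresholding $\tanh$ layer. This gives a bump $\phi$ that is $\gtrsim c\sqrt{\tilde c^2-1}/(4\tilde c)$ on a cube of side $\delta$ and has tails below $\e_p$ outside the doubled cube. Under the finite-precision assumption, those tails are stored as zero, so the bumps are effectively compactly supported. Translating $\phi$ over a grid yields $N\asymp\delta^{-s}$ disjoint bumps $\phi_1,\dots,\phi_N\in U$, together with $0\in U$. Matching the total depth $\le L$ forces $j\le L-6$. The requirement that the tails fall below $\e_p$ gives the upper bound on $m$ stated in the theorem: $N$ is at most of order $c^{\frac s2(5+j)}B^{\frac s2(1+j-\frac1q(2j+7))}$, and we choose $\delta$ so that $N\approx 2m$ is still admissible.

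\textbf{Step 3: averaging argument.} Fix an adaptive randomized algorithm using $m$ samples. Run it on the input $0$. For each realization of its randomness, at most $m$ of the $N=2m$ cubes contain a sample point. Averaging over $i$ and over the randomness, some $\phi_i$ is left unsampled with probability at least $1/2$. On that event the algorithm sees exactly the same data for $\phi_i$ as for $0$, since all readings are zero in finite precision, and adaptivity preserves this identical query path. Hence it returns the same output for both. The triangle inequality then gives
\[
\max\{\|A(0)\|_p,\ \|A(\phi_i)-\phi_i\|_p\}\ \ge\ \tfrac12\|\phi_i\|_p\ \gtrsim\ c\,\tfrac{\sqrt{\tilde c^2-1}}{4\tilde c}\,\delta^{s/p}.
\]
Substituting $\delta^{s}\asymp(\cdots)^{s}\,m^{-1}$, with the factors $2^{1+2/s}\sqrt s$ coming from packing and from the $\ell^2$-to-cube geometry, gives the claimed $m^{-1/p}$ bound.

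\textbf{Main obstacle.} The hard part is Step 1: producing explicit bounds on the iterated $\tanh$ compositions. We need a lower bound on the plateau value and exponential tail decay simultaneously. For the plateau, we would compare with the fixed-point iteration and use monotonicity and concavity of $\tanh$ on $[0,\infty)$. For the tail, we would use the linear bound $\tanh(ax)\le ax$ near $0$, which gives contraction once the input is tiny relative to the gain. It must also be checked that the parameter normalization keeps the $\ell^q$ bound exactly $c$.
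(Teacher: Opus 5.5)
Your Steps 1--2 have a genuine gap: the tail mechanism you describe does not work. For $\tilde c>1$ the point $0$ is a \emph{repelling} fixed point of $\sigma_{\tilde c}$, since $\sigma_{\tilde c}(x)>x$ on $(0,x_\star)$. The bound $\tanh(\tilde c x)\le \tilde c x$ is therefore a growth bound, not a contraction. A small positive tail is multiplied by up to $\tilde c$ per layer and is eventually driven to $x_\star$, which is the plateau value, so ``decays like $\tilde c^{-j}$'' has the direction reversed. Your first-layer profile $f(y)=\tanh(Ay+C)-\tanh(Ay-C)$ is strictly positive. With coefficients bounded by $c$, its value on $[0,1]^d$ stays within a factor of order $e^{-2A}\ge e^{-2c}$ of its peak, which is nowhere near $\e_p$ in the relevant regimes (e.g.\ $c=2$, $\e_p=10^{-69}$). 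Iterating only enlarges it. So your network is never below $\e_p$ off the cube, and the indistinguishability argument of Step 3 has nothing to act on.

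The missing idea is to use the attracting fixed point $-x_\star$ together with an affine shift. In the paper, the second layer averages the coordinate profiles and subtracts the threshold $f(\alpha M)$ \emph{before} iterating. The coarse bump is then $\ge 0$ on $\alpha[-M,M]^d$ and $\le-\tilde\xi_n<0$ outside $[-M,M]^d$ (Lemma~\ref{lemma:condition_on_alpha_and_M}). This step forces $\alpha=\Omega/\sqrt d$ and produces the factor $\Omega/\sqrt s$ in the bound; the two cubes are not related by a doubling. Next, $j$ layers of $\sigma_{\tilde c}$ use precisely the expansion near $0$ to carry $-\tilde\xi_n$ out of the repelling interval $I$. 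Then $k\ge 3$ further layers contract toward $-x_\star$ at rate $\bar\eta=O(\tilde c e^{-2\tilde c})$ (Lemmas~\ref{lemma:basin_of_attraction} and~\ref{lemma:error_of_basin_of_attraction}). Finally, the output bias $c\nu x_\star$ turns ``close to $-x_\star$'' into ``close to $0$'', while on the inner cube $|\Phi|\ge c x_\star$.

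This also corrects your bookkeeping. The depth is $L=j+k+3$ with $k\ge3$, which is where $j\le L-6$ comes from. The requirement that tails lie below $\e_p$ is a condition on $k$ alone, which is why $\e_p$ does not appear in the bound on $m$. The bound on $m$ instead comes from the smallest admissible $M\asymp m^{-1/s}$: here $M^2$ is proportional to the margin $\tilde\xi_n$, which is of order $\xi^{-j}$. So the exponential growth of $m$ in $j$ comes from exactly the expansion you tried to suppress.

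Your Step 3 is fine in substance; pairing with $0$ and averaging with $\mathbb{E}\,\mathbf m\le m$ works as well as the paper's $\pm$ bumps with Markov's inequality. One correction: ``for each realization at most $m$ cubes are sampled'' is false when $\mathbf m(\omega)$ is random. Only the averaged statement holds, and it costs a factor $\tfrac12$ in your final display.
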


The~complete theorem, including all explicit parameter dependencies and~technical assumptions, is stated in~Theorem~\ref{thm:main_result_full_statement} and~proved in~Subsection~\ref{subsection:final_lower_bound}. The~simplified form above highlights the~principal qualitative behavior of~the~bound. To~illustrate its implications, we show that even for~networks of~moderate size, learning to~uniform accuracy becomes computationally infeasible in~the~finite-precision setting. Consider $p=q=\infty$, $c=2$, $d=15$, $B=45$, $L=12$, and~machine precision $\e_p=10^{-69}$. In~this regime, the~assumptions are satisfied and~the~admissible sample size becomes $m \le 10^{71}$, while the~resulting lower bound yields $\e \ge 0.49$, which makes the~uniform recovery infeasible.\label{numerical_example}

Moreover, the~result reveals a~crucial intrinsic instability: for~every algorithm using at~most $10^{71}$ samples, there exist neural networks whose sampled values differ by~an almost negligible perturbation $2\cdot 10^{-69}$, while their $L^\infty$-distance remains still of~order $0.5$. Hence, even $10^{71}$ samples do not determine the~underlying function in~a~stable way. In~other words, even if highly accurate reconstruction methods existed, they would necessarily be extremely unstable. Similar phenomena appear for~analytic interpolation problems. For~instance, \cite{platte2011impossibility} shows that highly accurate reconstruction of~analytic functions from~equispaced interpolation points is necessarily unstable, whereas suitable choices of~interpolation nodes, such as Chebyshev points, permit both stability and~accuracy. By contrast, our result implies a~substantially stronger instability phenomenon for~$\tanh$ neural networks: regardless of~how the~sampling points are chosen, no reconstruction procedure can be simultaneously accurate and~stable within the~stated sampling regime.

For~comparison with~the~ReLU setting and~$q\ge2$, the~result of~\cite[Theorem 2.2]{berner2023learning} yields the~lower bound
\[
\e
\ge
\mathrm{const}\cdot
\frac{c^LB^{(L-1)(1-\frac2q)}}{(32s)^{1+\frac{s}{p}}}
m^{-\frac1p-\frac1s}.
\]
The structure of~the~two estimates is substantially different. 
First of~all,  the~characteristic factor $m^{-1/d}$ no longer appears in~our result.
Moreover, in~the~ReLU setting, the~lower bound grows exponentially with respect to~the~network architecture parameters. This behavior originates from~the~definition of~the~ReLU activation, which preserves and~amplifies large values through compositions. In~contrast, for~$\tanh$ networks the~amplitude of~the~lower bound remains uniformly bounded, since $|\tanh|\le1$. On~the~other hand, the~exponential dependence of~the~admissible number of~samples $m$ on~the~architecture parameters remains present also in~the~sigmoidal setting.

Comparing the~two estimates more closely, one observes that after factoring out the~common term $m^{-1/p}$, the~ReLU bound contains the~additional factor $c^LB^{(L-1)(1-\frac2q)}\,s^{-1}\,m^{-\frac1s}$,
which suggests that the~admissible number of~samples in~the~ReLU setting may asymptotically be roughly quadratically larger than in~the~$\tanh$ setting of~Theorem~\ref{thm:main_result_simp}. On~the~other hand, the~remaining factor $(32s)^{-s/p}$ in~the~ReLU estimate may become smaller than the~corresponding sigmoidal term involving $B^{s/(qp)}(c\sqrt{s})^{-s/p}$. Moreover, an~important distinction appears in~the~case $p=\infty$: while the~ReLU lower bound still decays with increasing $m$, the~corresponding lower bound in~the~$\tanh$ setting remains bounded from~below by~a~positive constant.

For a~more precise non-asymptotic comparison, one has to~use the~full statement of~our lower bound in~Theorem~\ref{thm:main_result_full_statement}. To~facilitate numerical verification of~the~involved expressions, we have developed an~interactive application that allows users to~specify network parameters (such as architecture, regularization, machine precision, and~number of~samples) and~to~compute the~resulting worst-case approximation error. The~application is publicly available at
\begin{center}
    \url{https://learning-tanh-nn.streamlit.app/}
\end{center}
and automatically checks whether all assumptions of~the~theorem are satisfied. The~user interface is illustrated in~Figure~\ref{fig:app_screenshot}.

\begin{figure}[ht]
    \centering
    \includegraphics[width=\linewidth]{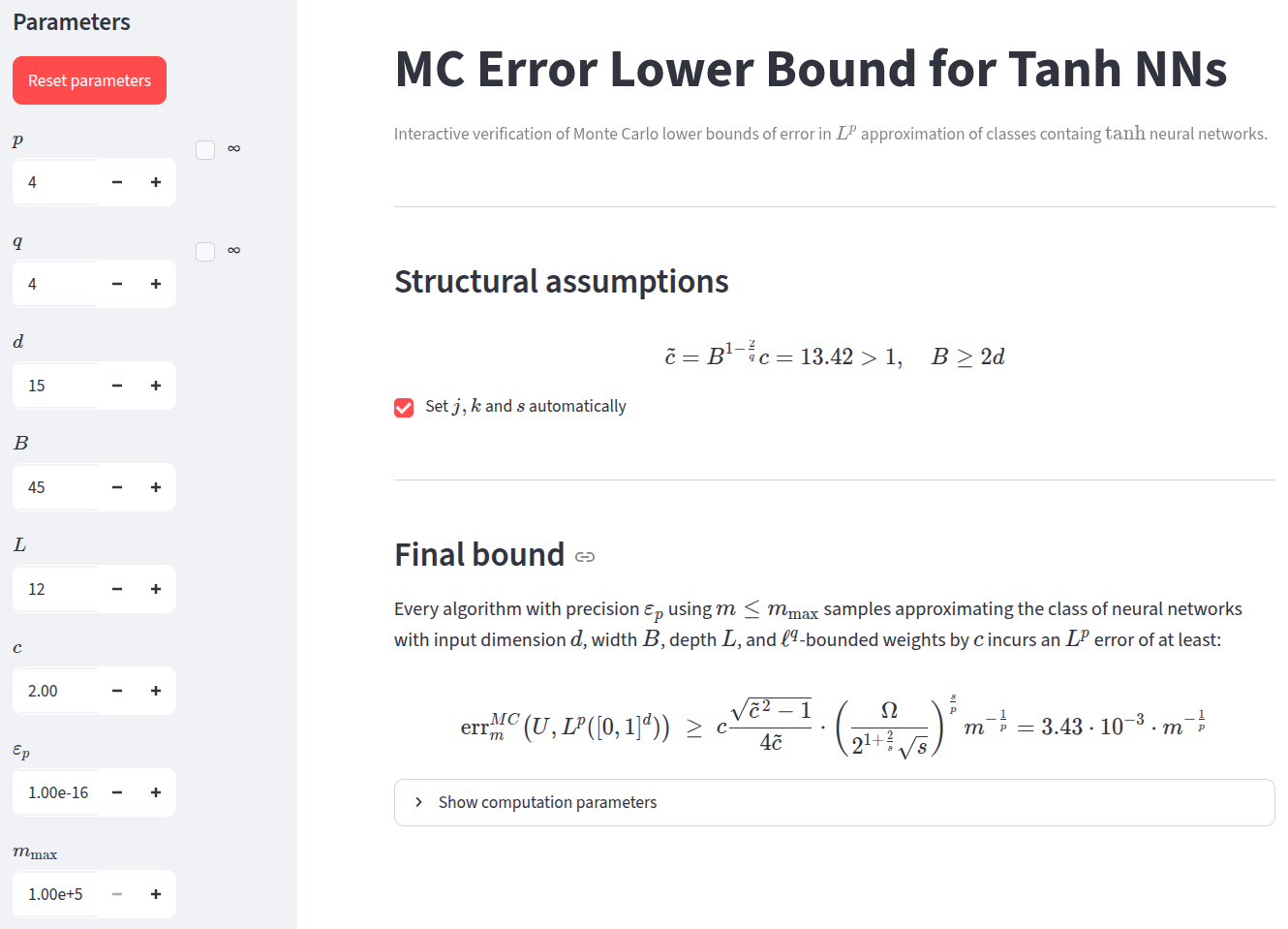}
    \caption{Screenshot of~the~application for~numerical verification of~the~main result in~Theorem~\ref{thm:main_result_full_statement}.}
    \label{fig:app_screenshot}
\end{figure}

\section{Preliminaries}

We now introduce the~notation used throughout the~text, largely following~\cite{berner2023learning} where appropriate.

\subsection{Notation}
For $d\in\N$, we denote $C([0,1]^d)$ the~space of~continuous functions $f\colon [0,1]^d\to\R$, and~$[d] = \{1,\dots,d\}$. For~a~matrix $W\in\R^{n \times k}$ and~$q\in[1,\infty)$, its $\ell^q$ norm is $\|W\|_{\ell^q}
=
\big(\sum_{i,j}|W_{i,j}|^q\big)^{1/q}.$
For $q=\infty$, we set $\|W\|_{\ell^q}
=
\max_{i,j}|W_{i,j}|.$ For~vectors $b\in\R^{n}$, we identify $b$ with a~matrix in~$\R^{n\times1}$ and~use the~same definition.
Finally, for~$n,k\in\N$, we write
\[
1_{n\times k}
:=
\begin{pmatrix}
    1&\dots&1\\
    \vdots&\ddots&\vdots \\
    1 & \dots & 1
\end{pmatrix} \in \R^{n\times k}.
\]

\begin{table}[ht]
\centering
\renewcommand{\arraystretch}{1.25}
\setlength{\tabcolsep}{6pt}
\begin{tabular}{@{} p{3.8cm} l @{}}
\toprule
\textbf{Symbol} & \textbf{Definition} \\
\midrule

$\sigma_a(x)$ &
$\displaystyle \sigma_a(x):=\tanh(ax), \qquad \sigma:=\sigma_1$
\\

$\sigma_a^{\,n}(x)$ &
$\displaystyle
\sigma_a^{\,0}(x)=x,\quad
\sigma_a^{\,n}(x)=\sigma_a\!\left(\sigma_a^{\,n-1}(x)\right),
\quad n\in\mathbb N
$
\\
$x_\star(a)$ &
positive fixed point satisfying $\sigma_a(\pm x_\star)= \pm x_\star$
\\

$\bar{x}(a)$ &
positive critical point satisfying $\sigma_a'(\pm \bar{x}) = 1$
\\

$\tau(a)$ &
chosen such that $\sigma_a(\bar{x}-\tau) > \bar{x}+\tau$
\\

$I(a,\tau)$ &
$\displaystyle I := (-\bar{x}+\tau, \bar{x}-\tau)
$ \\

$J(a,\tau)$ &
$\displaystyle J := (-\infty, -\bar{x} - \tau) \cup (\bar{x}+\tau, \infty)
$ \\

$n_0(x) = n_0(x\,|\,a,\tau)$ &
first exit time:
$\displaystyle n_0(x) := \sup\{n\in\N : \sigma_a^{\,n}(x)\in I\}$
\\

$\xi(a,\tau)$ &
$\displaystyle \xi := \inf_{x \in I} \sigma_a'(x) > 1
$ \\

$\eta(a,\tau)$ &
$\displaystyle \eta := \sup_{x \in J} \sigma_a'(x) < 1
$ \\

$\bar{\eta}(a,\tau)$ &
$\displaystyle \bar{\eta} := \sup_{x \in J} \sigma_a'(\sigma_a(x)) < 1
$ \\

$\mu(a)$ &
$\displaystyle \mu := \sigma_a'(x_\star) < 1
$ \\

$\xi_n(a,\tau,\kappa)$ &
$\displaystyle \xi_n := \xi^{-\lceil \kappa n \rceil}(\bar{x} - \tau)
$ \\

$\e_n(a,\tau,\kappa)$ &
$\displaystyle \e_n := \eta\, \bar{\eta}^{\lfloor (1-\kappa)n \rfloor - 3}
$ \\

$\rho(a)$ &
$\displaystyle 
\rho(a) := \frac{1}{\sqrt{a^2 - \frac{4a^2}{(a+1)^2}\,\arccosh^2(\sqrt{a})}+1}
$ \\

$\chi(a)$ & $\displaystyle \chi(a) := \frac{1}{a+1}$ \\

$\pi(a)$ &
$\displaystyle 
\pi(a) := \left(\sqrt{a} + \sqrt{a-1}\right)^{4(1-\rho(a))}
$ \\

$f(y) = f(y\,|\,B^{-1/q}, c, b)$ & $\displaystyle f(y) := \tanh\left(B^{-1/q}\big((c-b)y + b\big)\right) - \tanh\left(B^{-1/q}\big((c-b)y - b\big)\right)$ \\

\bottomrule
\end{tabular}
\caption{Definitions of~frequently used quantities and~parameters (strictly positive where applicable). Throughout, we often omit repeated parameter dependencies when they are clear from~the~context. Note that $\sigma_a^{\,n}$ denotes the~$n$-fold iterate of~$\sigma_a$ under composition, not a~power.}
\label{tab:definitions}
\end{table}

\subsection{Adaptive (randomized) algorithms}

We adopt the~framework of~adaptive algorithms from~\cite{berner2023learning}, allowing for~a~general quantization map $Q$ (the setting in~\cite{berner2023learning} corresponds to~$Q=\operatorname{Id}$). Such algorithms have access only to~quantized function values at~sampled points. The~term \emph{adaptive} refers to~the~fact that each new sampling point may depend on~all previously observed samples and~quantized function values. In~the~randomized setting, the~algorithm may additionally employ internal randomness and~use a~random number of~samples, subject only to~a~prescribed bound on~the~expected sampling cost.

\begin{definition}[Adaptive algorithms]
Let $d,m\in\N$, let $Y$ be a~Banach space, let $U\subset C([0,1]^d)\cap Y$, and~let $Q:\R\to\R$ be a~precision mapping. We say that a~mapping $A\colon U\to Y$ is an~adaptive deterministic method using $m$ point samples and~working under precision mapping $Q$ if there exist sampling rules selecting the~sampling points adaptively based on~previously acquired function values,
\[
f_1\in [0,1]^d,
\qquad
f_i\colon \big([0,1]^d\big)^{i-1}\times \big(Q(\R)\big)^{i-1}\to [0,1]^d,
\quad i\in\{2,\dots,m\},
\]
and a~reconstruction map
\[
F\colon \big([0,1]^d\big)^m\times \big(Q(\R)\big)^m\to Y,
\]
such that for~every $u\in U$, the~sampling points $\mathbf{x}(u)=(x_1,\dots,x_m)\subset [0,1]^d$ are chosen recursively by
\[
x_1=f_1,
\qquad
x_i
=
f_i\Big(
x_1,\dots,x_{i-1},
Q\big(u(x_1)\big),\dots,Q\big(u(x_{i-1})\big)
\Big),
\quad i\in\{2,\dots,m\},
\]
and
\[
A(u)
=
F\Big(
x_1,\dots,x_m,
Q\big(u(x_1)\big),\dots,Q\big(u(x_m)\big)
\Big).
\]
The set of~all adaptive deterministic methods using $m$ samples with mapping $Q$ is denoted by~$\operatorname{Alg}_{m,Q}(U,Y)$.
\end{definition}

\begin{definition}[Adaptive randomized algorithms]
Let $(\Omega,\mathcal F,\mathbb P)$ be a~probability space, let $Y$ be a~Banach space, let $U\subset C([0,1]^d)\cap Y$, $Q\colon  \R \to \R$ and~let $m\in\N$. A~pair
\[
(\mathbf A,\mathbf m),
\qquad
\mathbf A=(A_\omega)_{\omega\in\Omega},
\]
is called an~adaptive randomized method under the~precision mapping $Q$ using $m$ samples on~average if the~following conditions are satisfied:
\begin{enumerate}
    \item the~mapping $\mathbf m\colon \Omega\to\N$ is measurable and~$\mathbb E(\mathbf m)\le m$,

    \item for~every $u\in U$, the~mapping $\omega\mapsto A_\omega(u)$ is measurable with respect to~the~Borel $\sigma$-algebra on~$Y$,

    \item for~every $\omega\in\Omega$, the~mapping $A_\omega$ is an~adaptive deterministic method using $\mathbf m(\omega)$ point samples under mapping $Q$.
\end{enumerate}
The collection of~all such methods is denoted by
$\operatorname{Alg}^{MC}_{m,Q}(U,Y)$.

Moreover, the~optimal randomized error is defined by
\[
\operatorname{err}^{MC}_{m,Q}(U,Y)
:=
\inf_{(\mathbf A,\mathbf m)\in \operatorname{Alg}^{MC}_{m,Q}(U,Y)}
\;
\sup_{u\in U}
\;
\mathbb E\!\left(
\|u-A_\omega(u)\|_Y
\right).
\]
\end{definition}

\subsection{Quantization map}

Our main result is formulated in~a~finite-precision setting. For~our purposes, the~only essential property of~a~quantization scheme is that sufficiently small values are rounded to~zero. For~an~overview of~various quantization techniques, we refer, for~instance, to~\cite{gray2002quantization}.

\begin{definition}[Quantizer]
Fix $\e>0$. A~finite-precision quantizer is a~map
\[
Q_\e:\R\to X_{Q_\e},
\]
where $X_{Q_\e}\subset\R$ is a~(possibly finite) set such that
\[
Q_\e(x)=0
\qquad\text{whenever } |x|\le\e.
\]
\end{definition}

\subsection{Neural network classes}

We consider feedforward neural networks with hyperbolic tangent activation $\sigma=\tanh$, denoted by~$\Phi$. The~input dimension is $d\in\N$, the~depth (number of~layers) is $L\in\N$, and~the~layer widths are $N_0,\dots,N_L\in\N$. In~the~special case
\[
N_1=\dots=N_L=:B,
\]
we call $B\in\N$ the~common width and~introduce the~width ratio $r:=B/d$.
For affine maps $\phi^i\colon \R^{N_{i-1}} \to \R^{N_i}$ defined as $\phi^i(x)=W_i x+b_i$ for~$i\in[L]$, the~realization of~a~network is given by
\[
R(\Phi)=\phi^L \circ \sigma \circ \phi^{L-1} \circ \cdots \circ \sigma \circ \phi^1
\in C(\R^d,\R^{N_L}).
\]
Let $q\in[1,\infty]$. The~$\ell^q$-norm of~a~network $\Phi$ is defined by
\[
\|\Phi\|_{\ell^q}
=
\max_{i\in [L]}
\left\{
\|W_i\|_{\ell^q},
\|b_i\|_{\ell^q}
\right\}.
\]
We will frequently use the~parameters
\[
c:=\|\Phi\|_{\ell^q},\qquad
\tilde c:=B^{1-\frac2q}c.
\]
Further notation is summarized in~Table~\ref{tab:definitions}.

\begin{definition}[Class of~neural networks]
For $d,L\in\N$ with $N_0=d$, widths $N_1,\dots,N_L\in\N$, $q\in[1,\infty]$, and~$c>0$, we define the~class of~$\tanh$ neural networks by
\[ 
\mathcal{H}_{(N_0,\dots,N_L),c}^q 
:= 
\left\{ R(\Phi)\;:\; \Phi = \bigl((W_\ell,b_\ell)\bigr)_{\ell=1}^L,\; W_\ell \in \R^{N_{\ell-1}\times N_\ell},\; b_\ell \in \R^{N_\ell},\; \|\Phi\|_{\ell^q} \le c \right\}. 
\]
\end{definition}

\section{Main result}

We now state the~main result in~full form. It provides a~lower bound on~the~sampling complexity for~learning $\tanh$ neural networks under finite-precision arithmetic $Q_{\e_p}$. The~proof relies on~constructing networks that exhibit a~separation between a~small localized mass region and~a~tail that lies below the~quantization threshold $\e_p$. The~architecture is designed so that part of~the~depth ($k$ layers) enforces decay below this threshold, while the~remaining depth $j$ is used to~maximize the~admissible sampling budget. The~parameter $s$ captures an~effective dimension reduction arising from~a~restricted active subspace, which improves the~resulting lower bound on~the~error.

\begin{theorem}[Main result]\label{thm:main_result_full_statement}
Let $p,q\in[1,\infty]$, $d,B,L,j,k\in\N$ and~$c>0$ satisfy $B\ge 2d$, $k\geq3$, $j+k=L-3$ and~$\tilde c :=B^{1-\frac{2}{q}}c>1$.
Define
\[
\Theta := \frac{9\,\tanh\left(B^{-1/q}\tanh\left(\frac c2\right)\right)}{50\,\cosh^2\left(B^{-1/q}\tanh\left(\frac c2\right)\right)}
\qquad 
\text{and}
\qquad
\Omega := \frac35 \cdot \frac{\tanh\left(B^{-1/q}\left(c-\tanh\left(\frac c2\right)\right)\right)}{B^{-1/q}\left(c-\tanh\left(\frac c2\right)\right)}.
\]

Let 
\[
\mathcal{H}_{(d,B,\dots,B,1),c}^q \subset U \subset C([0,1]^d),
\]
where the~width $B$ appears $L-1$ times in~$(d,B,\dots,B,1)$. 
Assume that computations are performed under a~quantizer $Q_{\e_p}$ with precision $\e_p>0$, and~that $q,c,B$ are sufficiently large so that
\begin{equation}\label{eq:assumption_on_k_num_of_lyers}
3
+ \frac{
\ln\!\left( \frac{4c \tilde c}{\e_p \pi(\tilde c)\left(1+\pi(\tilde c)^{-1}\right)^2} \right)
}{
\ln \!\left( \frac{\cosh^2\!\left(\tilde c\frac{\pi(\tilde c)-1}{\pi(\tilde c)+1}\right)}{\tilde c} \right)
}
\le
k
\end{equation}
and
\begin{equation*}
\frac{\ln\left( \frac{5 B^{\frac3q}\arccosh\left(\sqrt{\tilde c}\right)\,\rho(\tilde c)}{c^2\tanh(2B^{-1/q}\tanh(\frac c2))\tanh^2\left[B^{-1/q}(c - \tanh(\frac c2))\right]} \right)}
{\ln\left(  \frac
{\tilde c}
{\cosh^2\left[2\arccosh(\sqrt{\tilde c})\rho(\tilde c)\right]} \right)
}
\leq
j.
\end{equation*}
Let $m_{\max}\in\N$ and~choose $s\in\N$ satisfying
\begin{equation}\label{eq:assumption_on_s_m_max}
d \;\ge\; s \;\ge\;
\frac{2\ln(4m_{\max})}{
j\,\ln\!\Big( \frac{\tilde c}{\cosh^2\!\left[2\arccosh(\sqrt{\tilde c})\,
\rho(\tilde c)\right]} \Big)
+\ln\!\left(\frac{ \Theta\,c^2(c-\tanh(c/2))^2}{16\,B^{5/q}\arccosh(\sqrt{\tilde c})\,\rho(\tilde c)}\right)
}
\end{equation}
provided that the~denominator is positive.

Then for~all $m\in\N$ with $m\le m_{\max}$,
\begin{equation}\label{eq:monte_carlo_error_lower_bound}
\operatorname{err}_{m,Q_{\e_p}}^{MC}\!\left(U, L^p([0,1]^d)\right)
\;\ge\;
c\frac{\sqrt{\tilde c^2-1}}{4\tilde c}
\cdot
\left(\frac{\Omega}{2^{1+\frac{2}{s}}\sqrt{s}}\right)^{\frac{s}{p}}
m^{-\frac{1}{p}}.
\end{equation}
\end{theorem}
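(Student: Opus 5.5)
The plan follows the bump-function strategy of~\cite{berner2023learning}, with exact compact support replaced by tails that vanish under the~quantizer $Q_{\e_p}$.

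\textbf{Step 1: a localized bump network.} We build, for a grid of $N\approx (\text{const}\cdot\text{scale}^{-1})^s$ disjoint small cubes $C_i$ in an $s$-dimensional coordinate subspace of $[0,1]^d$, networks $\Phi_i\in\mathcal H^q_{(d,B,\dots,B,1),c}$ with three properties:
\begin{enumerate}
\item[(a)] $|R(\Phi_i)|\le \e_p$ outside $C_i$;
\item[(b)] $|R(\Phi_i)|\ge \gamma := c\sqrt{\tilde c^2-1}/(4\tilde c)$ on a subcube of $C_i$ of relative volume controlled by $\Omega$;
\item[(c)] the same holds for sign-flipped sums $\sum_i \theta_i R(\Phi_i)$ with $\theta\in\{\pm1\}^N$, or at least the~construction is compatible with such combinations.
\end{enumerate}

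The~construction proceeds in~layers.
\begin{itemize}
\item The~first layer uses $2s\le 2d\le B$ neurons and differences $f(y)=\tanh(B^{-1/q}((c-b)y+b))-\tanh(B^{-1/q}((c-b)y-b))$ to form one-dimensional ridges around the~cube center. Here $\Theta$ and $\Omega$ enter as lower bounds on the~peak height and on the~width ratio of~$f$.
\item Summing across the $s$ coordinates and applying a threshold produces a~quantity that lies in the expanding interval $I$ only near the~center.
\item The~next $j$ layers, each with weights chosen so that the effective slope is $\tilde c=B^{1-2/q}c$ (all-ones $B\times B$ blocks with entries $B^{-1/q}c$), use the expansion $\xi>1$ on $I$ from Subsection~\ref{subsection:properties_of_sigma_n}. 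Each layer shrinks the~region that escapes to $\pm x_\star$ by a factor $\xi^{-1}$, which gives a~bump scale of order $\xi_n$.
\item The~final $k$ layers use the contraction $\bar\eta<1$ on $J$ to push the tail below $\e_p$. Concretely, the~analysis of~$\sigma_a^n$ gives $\e_n=\eta\bar\eta^{\lfloor(1-\kappa)n\rfloor-3}$, and assumption~\eqref{eq:assumption_on_k_num_of_lyers} is exactly the requirement that this is at most $\e_p/(\ldots)$.
\item The~last affine layer subtracts $x_\star$ and rescales by $c$, which yields an amplitude of about $c\,x_\star$. Using $x_\star\ge\sqrt{\tilde c^2-1}/\tilde c$ scaled down gives the constant $\gamma$.
\end{itemize}
The~quantities $\rho,\pi,\arccosh$ come from explicit estimates of $\bar x=\arccosh(\sqrt{\tilde c})/\tilde c$ and of $\xi,\bar\eta$ proved earlier, which I would invoke directly.

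\textbf{Step 2: counting.} The~assumption on $j$ guarantees that the~bump width $\delta$ is small enough for the~cubes to be disjoint. Assumption~\eqref{eq:assumption_on_s_m_max} is arranged so that the~number of cubes satisfies $N=\lfloor \delta^{-1}\rfloor^s\ge 4m_{\max}\ge 4m$. I would verify this by taking logarithms. One gets $\ln N\approx s\,(j\ln(\tilde c/\cosh^2[\cdot])+\ln(\Theta\ldots))/2$, where the factor $1/2$ comes from the~$\sqrt{\cdot}$ in the width estimate, and this matches the~displayed denominator.

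\textbf{Step 3: the standard randomized lower bound.} Fix a deterministic algorithm $A$ using $n$ samples. Its sample points hit at most $n$ cubes. On the~other cubes every $\pm R(\Phi_i)$ quantizes to $0$ at each sample point, and so does the zero function. Hence $A$ returns the same output for $u=0$ and for $u=\pm R(\Phi_i)$ for all untouched $i$. Note that adaptivity is harmless here: running $A$ on $u=0$ fixes all sample points, and these points are reused for every $u$ that quantizes identically. By the~triangle inequality,
\[
\max(\|A(\Phi_i)-\Phi_i\|_{L^p},\|A(-\Phi_i)+\Phi_i\|_{L^p})\ge \|R(\Phi_i)\|_{L^p}\ge \gamma\,(\Omega/(2^{1+2/s}\sqrt s))^{s/p}N^{-1/p}\cdot(\ldots),
\]
using the~volume of the plateau. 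For the~randomized method, average over $u$ uniform on $\{\pm R(\Phi_i)\}_{i\le N}$. We use Markov's inequality, $\mathbb P(\mathbf m\le 2m)\ge1/2$, together with the~bound that at most $2m\le N/2$ cubes are touched. This gives expected error at least $\tfrac14\cdot\|R(\Phi_i)\|_{L^p}$, and $N\sim m$ up to the constant factor $4$ produces $m^{-1/p}$ with the~stated constant. This step follows \cite[Sec.~3]{berner2023learning} essentially verbatim.

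\textbf{Main obstacle.} The hard part is Step 1: showing simultaneously that the~plateau volume is large, which gives the factor $\Omega^{s/p}$, and that the~tail drops below $\e_p$ uniformly on the complement. The~difficulty is that $\tanh$ never vanishes exactly. Points just outside the plateau may take a very long time to leave $I$, so the~exit time $n_0(x)$ must be controlled. I would split the~depth as $j=\lceil\kappa n\rceil$ escape layers plus $k$ decay layers, and bound the~transition zone separately using $\xi_n$. I expect some slack in the constants here, namely the~factors $2^{2/s}$ and $16$.
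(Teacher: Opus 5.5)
Your proposal is correct in structure and follows the paper's proof essentially step for step. The shared ingredients are:
\begin{itemize}
\item $\pm$ bump networks built from thresholded, averaged $\tanh$ differences and sharpened by $\sigma_{\tilde c}^{\,L-3}$, with the exit time controlled via $\xi_n$ over $j$ layers and decay below $\e_p$ over $k$ layers;
\item a grid of cells of side $2M$ with $M=1/(2\lceil (4m)^{1/s}\rceil)$ in $s$ coordinates;
\item the count of at least $|\Gamma|/2$ untouched cells for $2m$ samples (your remark that adaptivity is harmless, because the points are those chosen for $u=0$, is a welcome precision);
\item the $\pm$ triangle inequality and Markov's inequality.
\end{itemize}

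Some bookkeeping needs repair:
\begin{itemize}
\item The plateau height is $c\,x_{\star,\tilde c}\ge c\sqrt{\tilde c^2-1}/\tilde c$. The factor $\tfrac14$ should come only from averaging plus Markov; as written, you lose an extra factor $4$.
\item The middle-layer entries must be $cB^{-2/q}$, not $cB^{-1/q}$, to respect the $\ell^q$ bound and produce slope $\tilde c$.
\item The assumption on $j$ makes the threshold $\tilde\xi_n$ small enough for the coarse-bump separation lemma. It does not make the cells disjoint; disjointness comes from the grid.
\item The cell size must be chosen from $m$, not $m_{\max}$.
\item Your property (c) is not available in the class, since sums of bumps are not width-$B$ networks with norm at most $c$. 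It is also not needed.
\end{itemize}
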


\begin{figure}[!ht]
\centering
\begin{tikzpicture}[
  base/.style={
    rectangle, rounded corners=5pt,
    minimum width=8.5cm, align=center,
    font=\small, inner sep=5pt, line width=0.4pt
  },
  inputnode/.style={base,
    fill=gray!12, draw=gray!50,
    minimum width=4cm, minimum height=0.8cm
  },
  layer1/.style={base, fill=violet!12, draw=violet!50},
  layer2/.style={base, fill=teal!12, draw=teal!50},
  layermid/.style={base, fill=orange!14, draw=orange!55},
  layerout/.style={base, fill=red!10, draw=red!45,
    minimum width=8.5cm},
  outputnode/.style={base, fill=gray!12, draw=gray!50,
    minimum width=4cm, minimum height=0.8cm},
  arr/.style={-{Stealth[length=5pt,width=4pt]}, line width=0.5pt, gray!70}
]

\node[inputnode] (input)
  {$x \in [0,1]^d$};

\node[layer1, below=0.8cm of input] (L1) {%
\textbf{Layer 1 -- shifted sign-flipped $\tanh$ pairs centered at~$y_\ell$}\\[3pt]
$2d$ neurons\\[3pt]
Each coordinate: $\pm\,\sigma\Bigl(\tfrac{c-b}{B^{1/q}}(x_i-(y_\ell)_i)\pm \tfrac{b}{B^{1/q}}\Bigr)$\\[3pt]
Shifted sign-flipped $\tanh$ pairs, arranged for~averaging in~Layer~2
};

\node[layer2, below=0.55cm of L1] (L2) {%
\textbf{Layer 2 -- bump by~aggregation + threshold}\\[3pt]
$B$ neurons\\[3pt]
$W_2=\tfrac{c}{B^{1+1/q}}\mathbf{1}_{B\times 2d}$,\quad
$b_2=-\tfrac{c}{rB^{1/q}}f(\alpha M)\mathbf{1}_B$
for suitable $\alpha\in(0,1)$\\[3pt]
Averaging creates centered bump:
$\sigma\bigl(\tfrac{c}{rB^{1/q}}(\frac1d\sum_i f(x_i-(y_\ell)_i)-f(\alpha M))\bigr)$ \\[3pt]
Bias enforces $\Phi_{y_\ell, \nu}\gg0$ on~$y_\ell+\alpha[-M,M]^d$ and~$\Phi\approx0$ outside $y_\ell +[-M,M]^d$
};

\node[layermid, below=0.55cm of L2] (Lmid) {%
\textbf{Layers $3,\ldots,L{-}1$ -- sharpening}\\[3pt]
$W_i=\tfrac{c}{B^{2/q}}\mathbf{1}_{B\times B}$,\quad $b_i=0$\\[3pt]
Iterates $\sigma_{\tilde c}^{\,L-3}$ to~the~bump,\quad $\tilde c>1$\\[3pt]
Sharpens toward a~step function by~$\quad \sigma_{\tilde c}^{\,L-3}(z) \xrightarrow{L\to\infty} \operatorname{sign}(z)\cdot x_{\star, \tilde c}$
};

\node[layerout, below=0.55cm of Lmid] (LL) {%
\textbf{Layer $L$ -- output + shift}\\[3pt]
$W_L=c \,\nu \, e_1^\top,\quad b_L=c \, \nu \, x_{\star, \tilde c} \,,\quad \nu\in\{\pm1\}$\\[3pt]
Selects one neuron output, shifts by~$x_{\star, \tilde c}$ and~$\nu$ sets the~sign
};

\node[outputnode, below=0.55cm of LL] (output)
  {$\Phi_{y_\ell, \nu}(x) \colon [0,1]^d \to \R$};

\node[below left=0.7cm and 0.5cm of output, draw=teal!40, fill=teal!6, rounded corners=4pt,
      line width=0.4pt, inner sep=5pt, font=\footnotesize,
      text width=3.2cm, align=center] (outL)
{$\Phi_{y_\ell, \nu}(x)\approx 0$\\ outside $y_\ell+[-M,M]^d$};

\node[below right=0.7cm and 0.5cm of output, draw=orange!40, fill=orange!6, rounded corners=4pt,
      line width=0.4pt, inner sep=5pt, font=\footnotesize,
      text width=3.2cm, align=center] (outR)
{$\Phi_{y_\ell, \nu}(x)\ge a\, x_{\star, \tilde c}$\\ inside $y_\ell+\alpha[-M,M]^d$};

\draw[-, gray!60, thick] (output) -- (outL);
\draw[-, gray!60, thick] (output) -- (outR);

\draw[arr] (input) -- (L1)  node[midway, right] {$W_1,b_1$};
\draw[arr] (L1)  -- (L2)  node[midway, right] {$W_2,b_2$};
\draw[arr] (L2)  -- (Lmid)  node[midway, right] {$W_i,b_i$ for~$i\in\{3,\dots, L-1\}$};
\draw[arr] (Lmid)  -- (LL)  node[midway, right] {$W_L,b_L$};
\draw[arr] (LL)  -- (output);

\begin{pgfonlayer}{background}
  \node[draw=gray!40, dashed, rounded corners=5pt, line width=0.4pt,
        fit=(L1)(L2)(Lmid)(LL), inner sep=6pt] (normfit) {};
\end{pgfonlayer}
\node[above=2pt of normfit.north east,
      anchor=south east,
      font=\footnotesize, gray!60,
      align=right]
{$\sigma=\tanh$ \\ $b = \sigma(\frac{c}{2})$ \\ $\tilde c=B^{1-2/q}c$};

\end{tikzpicture}
\caption{Layer-by-layer construction of~the~bump network $\Phi_{y_\ell, \nu}$.
Layer~1 builds $d$ pairs of~shifted $\tanh$ activations with opposite signs centered at~$y_\ell$, forming coordinate-wise primitives.
Layer~2 averages these and~applies a~threshold at~$f(\alpha M)$, producing a~localized bump that is positive on~$y_\ell+\alpha[-M,M]^d$ and~negative outside $y_\ell+[-M,M]^d$.
The middle layers iteratively refine this signal via $\sigma_{\tilde c}^{L-3}$, sharpening it toward a~step function while preserving its sign structure.
The final layer selects one coordinate and~shifts by~the~fixed point $x_{\star, \tilde c}$, yielding values close to~$x_{\star, \tilde c}$ on~the~positive region and~close to~$0$ outside.
Overall, the~network produces sharply localized bumps with small support, large slope, and~exponential decay outside.}
\label{fig:network-architecture}
\end{figure}

\begin{proof}[Sketch of~proof]
The full proof is provided at~the~end of~Subsection~\ref{subsection:final_lower_bound}. The~key idea is that even a~small family of~localized $\tanh$ networks cannot be reliably distinguished from~the~zero function under finite precision constraints.

We first prove Theorem~\ref{thm:existence_of_Phi_for_M>} showing that, around every point $y_\ell\in[0,1]^d$, one can build a~neural network $\Phi_{y_\ell,\nu}$ with a~localized bump structure of~sign $\nu\in\{\pm1\}$ -- for~the architecture, see Figure~\ref{fig:network-architecture}. The~construction proceeds in~two stages. First, suitable shifted pairs of~$\tanh$ activations are combined into a~coarse bump, which is positive on~$y_\ell+\alpha[-M,M]^d$ and~sufficiently negative outside $y_\ell+[-M,M]^d$ for~suitable parameters $M,\alpha\in(0,1)$.

The coarse bump is then sharpened by~iterating the~map $\sigma_a(x)=\tanh(ax)$ for~$a>1$. Repeated composition amplifies the~sign pattern and~drives the~output toward a~step-function profile; see Lemma~\ref{lemma:limit_of_gn_sgn}. The~negativity outside this cube is shown in~Lemma~\ref{lemma:condition_on_alpha_and_M}, while the~sufficiency of~the~construction is established in~Lemma~\ref{lemma:error_of_basin_of_attraction}. Consequently, after sufficiently many layers, the~network possesses two key properties: it has nontrivial $L^p$ mass concentrated on~a~small cube, with $\|\Phi_{y_\ell,\nu}\|_{L^p}\asymp c(\alpha M)^{s/p}$, where $\alpha\asymp B^{1/q}/(c\sqrt{s})$ and~$s\le d$, while outside a~slightly larger cube its values decay exponentially fast. Moreover, $M$ may be exponentially small; see Theorem~\ref{thm:existence_of_Phi_for_M>}.

By choosing the~architecture parameters, namely depth, width, and~norm constraints, suitably, this decay can be pushed below the~finite-precision threshold $\e_p$. Hence outside the~larger cube the~network becomes indistinguishable from~zero under the~quantizer, but still having positive $\|\Phi_{y_\ell, \nu}\|_{L^p}$.

Next suppose arbitrary sample points $x_1,\dots,x_m\subset[0,1]^d$. We set $M \asymp m^{-1/s}$ and~place the~centers $y_\ell$ on~a~grid that is dense only in~$s$ selected coordinates with spacing $2M$ and~constant in~the~remaining $d-s$ coordinates while preserving disjointness of~the~associated bumps due to~their support scale. A~pigeonhole argument then shows that at~least half of~the~bumps avoid all sampled points. For~such functions, the~algorithm receives exactly the~same quantized information as for~the~zero function, despite the~functions having positive $L^p$ norm.

Finally, following the~averaging argument of~\cite{berner2023learning}, this indistinguishability principle extends from~deterministic to~adaptive randomized algorithms and~yields the~stated lower bound.
\end{proof}

\begin{remark}[Generality of~the~quantizer]
The proofs only use that the~quantizer $Q_{\e_p}$ maps all values with magnitude at~most $\e_p$ to~zero. Hence, the~results apply to~any finite-precision arithmetic satisfying this threshold property; the~representation of~non-zero values is irrelevant. This includes standard IEEE~754 arithmetic, where the~effective relative precision is about $10^{-16}$, with smallest representable number of~order $2^{-1074}$.

Apart from~the~fact that the~quantitative example below Theorem~\ref{thm:main_result_simp} remains valid even for~extremely small thresholds (e.g. $\e_p=10^{-306}$ and~$m_{\max}=10^{13}$), the~construction reveals another crucial property: it is stable under shifts of~the~reference level. If parameters are chosen such that $c>x_{\star,\tilde c}$, one may replace the~last layer of~$\Phi$ by setting $b_L=c\,\nu$, thereby changing the~zero-level comparison to~a~constant shift $\nu\,(c-x_{\star,\tilde c})$. In~this regime, the~relevant scale is of~order $1$, so IEEE~754 effectively operates with relative accuracy $\approx 10^{-16}$ rather than a~tiny absolute threshold $2^{-1074}$, which leads to much larger powers in $m_{\max}$ and thus makes learning even less feasible.
\end{remark}

\begin{remark}[Choice of~$b$]
We emphasize that the~choice of~bias $b>0$ in~the~first layer is crucial. The~only constraint is $0<b<c$. A~natural choice is $b:=\tanh\!\left(c/2\right) < 1$, which clearly satisfies the~norm constraints. However, despite the~simplification in~the~subsequent computations, this choice introduces an~undesirable dependence on~$B^{1/q}/c$ in~\eqref{eq:monte_carlo_error_lower_bound}. An~alternative scaling $b\asymp cM$ could potentially remove this dependence; this approach was also used in~\cite{berner2023learning}, but it leads to~more involved estimates and~is therefore left for~future work.
\end{remark}

\begin{remark}[Effective dimension reduction]
The construction allows reducing the~effective dimension in~the~packing argument from~$d$ to~any $s\le d$ by~restricting the~grid to~an $s$-dimensional active subspace. Only $s$ coordinates contribute to~the~combinatorial growth, while the~remaining $d-s$ coordinates are kept fixed. This yields the~same separation mechanism, with the~resulting scaling governed by~$s$ instead of~$d$, leading to~the~factor $s^{-s/(2p)}$. Hence, choosing smaller $s$ strongly improves the~resulting lower bound.
\end{remark}

The proof of~the~lower bound consists of~several auxiliary lemmas and~partial results. In~the~following section, we therefore develop the~main technical ingredients in~a~structured way. In~Subsection~\ref{subsection:properties_of_sigma_n}, we study the~iterated $\tanh$ maps $\sigma_a^{\,n}$ for~$a>0$ and~analyze its convergence properties toward the~sign function. In~Subsection~\ref{subsection:bump_analysis}, we investigate the~coarse bump
\[
\frac1d\sum_{i=1}^d f(x_i - (y_\ell)_i) - f(\alpha M)
\]
and show that it can be made sufficiently small to~enforce a~controlled decay of~the~tail error below machine precision. Finally, in~Subsection~\ref{subsection:final_lower_bound}, we combine these ingredients to~derive the~final lower bound on~the~learning error.

\section{Auxiliary results and~proofs}
\subsection{Properties of~iterated tanh}\label{subsection:properties_of_sigma_n}

In this subsection we study the~iterates of~the~hyperbolic tangent map $\sigma_a(x)=\tanh(ax)$. We first characterize the~fixed-point structure of~$\sigma_a$ depending on~the~parameter $a>0$. We then analyze the~asymptotic behavior of~the~iterates $\sigma_a^{\,n}$, showing that they converge to~a~step-like limit as $n\to\infty$, see Figure~\ref{fig:sigma_n_step_function}. Finally, we quantify this transition by~deriving estimates on~the~convergence rate in~terms of~the~distance to~the~stable fixed points.

\begin{claim}[Properties of~$\sigma_a$]\label{cl:properties_of_sigma_and_sigma_n}
For all $a>0$ and~$n\in\N$, the~following properties hold:
\begin{enumerate}
    \item\label{it:range_of_g} $0 \le \sigma_a(x) < 1$ for~all $x \in [0,\infty)$,
    
    \item\label{it:oddness_of_g} $\sigma_a$ is odd, i.e.\ $\sigma_a(x)=-\sigma_a(-x)$,
    
    \item\label{it:monotonicity_of_g} $\sigma_a$ is strictly increasing on~$\mathbb R$, and
    \begin{equation}\label{eq:derivation_of_g}
    0 < \sigma_a'(x) = a\bigl(1-\sigma_a(x)^2\bigr)
    = \frac{a}{\cosh^2(ax)}
    \le a~\quad \forall x\in\mathbb R,
    \end{equation}
    and, for~$a>1$,
    \[
    \bar x = \frac{1}{a}\arccosh(\sqrt{a}),
    \]
    \item\label{it:concavity_of_g} $\sigma_a$ is strictly concave on~$\R^+$ and~strictly convex on~$\R^-$, and
    \begin{equation}\label{eq:second_der_of_g}
        \sigma_a''(x)=-2a\,\sigma_a(x)\,\sigma_a'(x)
        =-2a^2 \sigma_a(x)\bigl(1-\sigma_a(x)^2\bigr),
    \end{equation}
    
    \item\label{it:chain_rule_sigma_n'} for~every $n\in\N$ and~$x\in\mathbb R$, the~chain rule yields
    \[
        \big(\sigma_a^{\,n}(x) \big)'
        =\prod_{k=0}^{\,n-1} \sigma_a'\!\bigl(\sigma_a^{\,k}(x)\bigr).
    \]
\end{enumerate}
\end{claim}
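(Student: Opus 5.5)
All five items follow from elementary properties of $\tanh$ together with the chain rule. We verify them in order, working throughout with the explicit formula $\tanh(t) = (e^{t}-e^{-t})/(e^{t}+e^{-t})$ and the identity $\cosh^2 - \sinh^2 = 1$.

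\textbf{Items~\ref{it:range_of_g} and~\ref{it:oddness_of_g}.} For $x\ge 0$ and $a>0$ we have $ax\ge0$. Hence $e^{ax}\ge e^{-ax}$, which gives $\sigma_a(x)\ge 0$. Moreover, $e^{ax}-e^{-ax}<e^{ax}+e^{-ax}$, which gives $\sigma_a(x)<1$. Oddness follows because replacing $x$ by $-x$ swaps the two exponentials, so $\sigma_a(-x) = -\sigma_a(x)$.

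\textbf{Item~\ref{it:monotonicity_of_g}.} Differentiating gives $\tanh' = 1/\cosh^2 = 1-\tanh^2$. By the chain rule,
\[
\sigma_a'(x) = \frac{a}{\cosh^2(ax)} = a\bigl(1-\sigma_a(x)^2\bigr).
\]
Since $\cosh\ge 1$, this yields $0<\sigma_a'\le a$, and therefore $\sigma_a$ is strictly increasing. For $a>1$, the critical point $\bar x>0$ is defined by $\sigma_a'(\bar x)=1$, which is equivalent to $\cosh^2(a\bar x)=a$. As $\cosh$ is a bijection from $[0,\infty)$ onto $[1,\infty)$ and $\sqrt a>1$, this has the unique positive solution
\[
\bar x = \tfrac1a \arccosh(\sqrt a).
\]

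\textbf{Item~\ref{it:concavity_of_g}.} Differentiating $\sigma_a' = a(1-\sigma_a^2)$ once more gives
\[
\sigma_a'' = -2a\,\sigma_a\sigma_a' = -2a^2\sigma_a(1-\sigma_a^2).
\]
For $x>0$ we have $\sigma_a(x)>0$ by item~\ref{it:range_of_g} (strictly, because $x>0$), and $\sigma_a'(x)>0$. Hence $\sigma_a''<0$, so $\sigma_a$ is strictly concave on $\R^+$. Strict convexity on $\R^-$ then follows from oddness: $\sigma_a''$ is odd, so $\sigma_a''(x)>0$ for $x<0$.

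\textbf{Item~\ref{it:chain_rule_sigma_n'}.} We argue by induction on $n$. The case $n=1$ is immediate, since $\sigma_a^{\,0}(x)=x$. For the inductive step, write $\sigma_a^{\,n}=\sigma_a\circ\sigma_a^{\,n-1}$. The chain rule gives
\[
(\sigma_a^{\,n})'(x)=\sigma_a'\bigl(\sigma_a^{\,n-1}(x)\bigr)\,(\sigma_a^{\,n-1})'(x).
\]
Inserting the inductive hypothesis for $(\sigma_a^{\,n-1})'$ produces the product over $k=0,\dots,n-1$.

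No step presents a real obstacle. The only point needing care is the uniqueness of $\bar x$, which rests on the injectivity of $\cosh$ on $[0,\infty)$ and on $a>1$.
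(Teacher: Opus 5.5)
Your proposal is correct and follows the same route as the paper, which only remarks that every item follows from the explicit form of $\tanh$ and standard calculus identities. You have written out those routine verifications in full, including the identification of $\bar x$ via injectivity of $\cosh$ on $[0,\infty)$ and the induction for the chain-rule product.
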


\begin{proof}
All statements follow directly from~the~explicit form of~$\sigma_a$ and~standard calculus identities.
\end{proof}

\begin{lemma}[Fixed points of~$\sigma_a$]
If $0<a\le 1$, then $0$ is the~unique fixed point of~$\sigma_a$. For~$a>1$, the~function $\sigma_a$ has exactly two additional fixed points $\pm x_\star$ with $x_\star>0$. Moreover,
\[
x_\star>\bar x.
\]
\end{lemma}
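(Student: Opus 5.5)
We study the auxiliary function $h(x):=\sigma_a(x)-x=\tanh(ax)-x$. Since $\sigma_a$ is odd (Claim~\ref{cl:properties_of_sigma_and_sigma_n}, item~\ref{it:oddness_of_g}), so is $h$. Hence fixed points come in pairs $\pm x$, and it suffices to analyze $h$ on $(0,\infty)$. Note that $h(0)=0$. By~\eqref{eq:derivation_of_g} we have $h'(x)=a/\cosh^2(ax)-1$, and by~\eqref{eq:second_der_of_g} $h''=\sigma_a''<0$ on $(0,\infty)$. Thus $h$ is strictly concave on $[0,\infty)$.

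\textbf{Case $0<a\le 1$.} For $x>0$ we have $\cosh^2(ax)>1$, so $h'(x)<a-1\le 0$. Hence $h$ is strictly decreasing on $[0,\infty)$ and $h(x)<h(0)=0$ for $x>0$. There is no positive fixed point, and by oddness no negative one either. So $0$ is the unique fixed point.

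\textbf{Case $a>1$, existence.} Here $h'(0)=a-1>0$, so $h>0$ on some interval $(0,\delta)$. On the other hand, item~\ref{it:range_of_g} gives $h(x)<1-x\le 0$ for $x\ge 1$. By the intermediate value theorem there is a zero $x_\star\in(0,1)$.

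\textbf{Case $a>1$, uniqueness.} Suppose $0<x_1<x_2$ were two positive zeros of $h$. Writing $x_1$ as a strict convex combination of $0$ and $x_2$, strict concavity would force $h(x_1)>0$, a contradiction. Hence $x_\star$ is the unique positive fixed point, and $\pm x_\star$ together with $0$ are exactly the fixed points.

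\textbf{The inequality $x_\star>\bar x$.} This is the only step requiring care. The point $\bar x=\frac1a\arccosh(\sqrt a)$ is the unique positive solution of $h'(x)=0$, because $h'$ is strictly decreasing on $[0,\infty)$. Consequently $h$ is strictly increasing on $[0,\bar x]$, which gives $h(\bar x)>h(0)=0$, and strictly decreasing on $[\bar x,\infty)$. Since $h(x_\star)=0<h(\bar x)$ and $h$ is strictly increasing on $[0,\bar x]$ with $h(0)=0$, the zero $x_\star$ cannot lie in $(0,\bar x]$. Therefore $x_\star>\bar x$.

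The main point to get right is locating $\bar x$ relative to the sign change of $h$. Monotonicity of $h$ on both sides of $\bar x$ settles this directly, without needing any explicit value of $x_\star$.
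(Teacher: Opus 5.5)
Your proof is correct and follows essentially the same route as the paper. Both arguments study $h(x)=\sigma_a(x)-x$, use that $h'$ is strictly decreasing on $[0,\infty)$ with $h'(\bar x)=0$, and conclude from the resulting sign pattern of $h$. The only cosmetic difference is that you get existence from $h(1)<0$ and uniqueness from a strict-concavity chord argument, whereas the paper reads off existence, uniqueness and $x_\star>\bar x$ all at once from $h$ being increasing on $[0,\bar x]$, decreasing on $[\bar x,\infty)$, and tending to $-\infty$.
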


\begin{proof}
Set
\[
h(x)=\sigma_a(x)-x.
\]
Then $x_\star$ is a~fixed point of~$\sigma_a$ if and~only if $h(x_\star)=0$. Clearly,
\[
h(0)=0
\]
and
\[
\lim_{x\to-\infty}h(x)
=
+\infty,
\qquad
\lim_{x\to\infty}h(x)
=
-\infty
\]
for every $a>0$. Moreover, by~\eqref{eq:derivation_of_g} and~\eqref{eq:second_der_of_g},
\[
h'(x)=a(1-\sigma_a(x)^2)-1,
\qquad
h''(x)=-2a^2\sigma_a(x)(1-\sigma_a(x)^2).
\]
For~$x>0$, we have
\[
0<\sigma_a(x)<1,
\]
and therefore
\[
h''(x)<0.
\]
Hence $h'$ is strictly decreasing on~$(0,\infty)$ for~every $a>0$ and, by~oddness, strictly increasing on~$(-\infty,0)$.

If $0<a<1$, then
\[
h'(x)\le a-1<0
\qquad \text{for all } x\in\R.
\]
If $a=1$, then
\[
h'(0)=0,
\qquad
h'(x)<0
\quad \text{for } x\neq0.
\]
Thus, for~all $0<a\le1$, the~function $h$ is strictly decreasing on~$\R$, and~therefore
\[
h(x)=0
\iff
x=0.
\]

Now let $a>1$. Then
\[
h'(0)=a-1>0,
\qquad
h'(\bar x)=0
\]
by the~definition of~$\bar x$. Since $h'$ is strictly decreasing on~$(0,\infty)$, it follows that
\[
h'(x)>0
\quad \text{for } x\in(0,\bar x),
\]
and
\[
h'(x)<0
\quad \text{for } x>\bar x.
\]
Consequently, $h$ is strictly increasing on~$[0,\bar x]$ and~strictly decreasing on~$[\bar x,\infty)$. In~particular,
\[
h(\bar x)>h(0)=0.
\]
Together with
\[
\lim_{x\to\infty}h(x)=-\infty,
\]
this implies that there exists a~unique zero
\[
x_\star>\bar x.
\]
By oddness of~$h$, also $-x_\star$ is a~zero. Hence the~fixed points of~$\sigma_a$ are exactly $0$ and~$\pm x_\star$.
\end{proof}

\begin{lemma}[Limit shape of~$\sigma_a^{\,n}$]\label{lemma:limit_of_gn_sgn}
    For~every $x\in\mathbb{R}$,
    \begin{equation*}
        \lim_{n\to\infty} \sigma_a^{\,n}(x) = \operatorname{sign}(x)\, x_\star,
    \end{equation*}
    where for~$0<a\le1$ we use the~convention $x_\star=0$.
\end{lemma}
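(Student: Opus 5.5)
The plan is the standard monotone-sequence argument for iterating an increasing map, using the fixed-point lemma just proved together with Claim~\ref{cl:properties_of_sigma_and_sigma_n}. Because $\sigma_a$ is odd, it suffices to treat $x\ge 0$. The case $x=0$ is trivial: $0$ is a fixed point, and $\operatorname{sign}(0)=0$.

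\textbf{Case $x>0$: the orbit is monotone and bounded.} Set $h(x)=\sigma_a(x)-x$ as in the proof of the fixed-point lemma. That proof gives the sign pattern of $h$ on $(0,\infty)$:
\begin{itemize}
\item for $a>1$, $h>0$ on $(0,x_\star)$ and $h<0$ on $(x_\star,\infty)$;
\item for $0<a\le 1$, $h<0$ on $(0,\infty)$.
\end{itemize}
Since $\sigma_a$ is strictly increasing and fixes $x_\star$ (resp.\ $0$), it maps each of the intervals $(0,x_\star)$ and $(x_\star,\infty)$ (resp.\ $(0,\infty)$) into itself. For instance, $0<x<x_\star$ implies $0=\sigma_a(0)<\sigma_a(x)<\sigma_a(x_\star)=x_\star$.

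By induction, the sequence $x_n:=\sigma_a^{\,n}(x)$ therefore stays in the same interval as $x$. It is increasing when $h>0$ there and decreasing when $h<0$. In every case it is monotone and bounded: above by $x_\star$, or below by $x_\star$, or below by $0$.

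\textbf{Identifying the limit.} The sequence converges to some $\ell$. Continuity of $\sigma_a$ gives $\sigma_a(\ell)=\ell$, so $\ell$ is a fixed point, namely $0$ or $\pm x_\star$.
\begin{itemize}
\item If $a\le 1$, the only fixed point is $0$, so $\ell=0$.
\item If $a>1$ and $x\in(0,x_\star)$, the sequence increases from $x>0$, so $\ell\ge x>0$ and hence $\ell=x_\star$.
\item If $a>1$ and $x>x_\star$, the sequence stays above $x_\star$, so $\ell\ge x_\star$ and hence $\ell=x_\star$.
\item If $x=x_\star$, the sequence is constant.
\end{itemize}
For $x<0$, oddness gives $\sigma_a^{\,n}(x)=-\sigma_a^{\,n}(-x)\to -x_\star$.

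The only point requiring any care is excluding the unstable fixed point $0$ as a limit when $a>1$. This is handled by the monotone increase from a positive starting value, so I expect no real obstacle.
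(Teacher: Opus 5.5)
Your proof is correct and follows the paper's argument for $a>1$: the orbit is monotone and bounded by $x_\star$ on each of $(0,x_\star)$ and $(x_\star,\infty)$, and oddness handles $x<0$. There are two small differences. For $0<a\le 1$ you reuse the same monotone argument, whereas the paper invokes Banach's fixed-point theorem when $a<1$ and the bound $|\sigma_1(x)|<|x|$ when $a=1$. You also explicitly identify the limit as a fixed point via continuity and rule out the unstable fixed point $0$, a step the paper leaves implicit.
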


\begin{proof}
    The~claim is trivial for~$0<a<1$, since then $\sigma_a$ is a~contraction on~$\mathbb R$ and~$x_\star=0$ is the~unique fixed point. Hence, by~Banach's fixed point theorem,
    \[
        \sigma_a^{\,n}(x)\to x_\star=0
        \qquad \text{for all } x\in\mathbb R.
    \]
    For~$a=1$, we have $|\sigma_a(x)|<|x|$ for~all $x\neq0$, which implies again that $\sigma_a^{\,n}(x)\to0=x_\star$.
    
    Now let $a>1$. Due to~monotonicity and~convexity of~$\sigma_a$, we have $x < \sigma_a(x) < x_\star$ for~$x \in (0, x_\star)$ and~$x_\star < \sigma_a(x) < x$ for~$x \in (x_\star, \infty)$. Hence,
    \[
        \sigma_a^{\,n-1}(x) < \sigma_a^{\,n}(x) < x_\star \quad \text{for } x \in (0, x_\star),
        \qquad
        x_\star < \sigma_a^{\,n}(x) < \sigma_a^{\,n-1}(x) \quad \text{for } x \in (x_\star, \infty).
    \]
    Thus, $(\sigma_a^{\,n}(x))_n$ is increasing and~bounded above by~$x_\star$ for~$x \in (0, x_\star)$, and~decreasing and~bounded below by~$x_\star$ for~$x \in (x_\star, \infty)$. Therefore, $\sigma_a^{\,n}(x)$ converges to~$x_\star$. The~case $x<0$ follows by~odd symmetry of~$\sigma_a$, and~$x=0$ is trivial.
\end{proof}

\begin{figure}[ht]
\centering
\begin{tikzpicture}
\begin{axis}[
    width=7.8cm, height=6cm,
    xlabel={$x$},
    ylabel={$\sigma_{a}^{\,n}(x)$},
    xmin=-1.3, xmax=1.2,
    ymin=-1.2, ymax=1.2,
    axis lines=center,
    xtick={-1,-0.5,0,0.5,1},
    ytick={-1,-0.5,0,0.5,1},
    ylabel style={at={(0.52,1)}, anchor=west},
    tick label style={font=\footnotesize},
    label style={font=\small},
    legend style={
        at={(0.02,1)}, anchor=north west,
        font=\footnotesize, draw=gray!50,
        fill=white, fill opacity=0.9
    },
    legend cell align=left,
    grid=both,
    grid style={gray!15},
    clip=true,
]

\addplot[blue!65!black, thick, domain=-1.1:1.1, samples=200]
    {tanh(2*x)};
\addlegendentry{$n=1$}

\addplot[teal!80!black, thick, domain=-1.1:1.1, samples=200]
    {tanh(2*tanh(2*x))};
\addlegendentry{$n=2$}

\addplot[orange!90!black, thick, domain=-1.1:1.1, samples=200]
    {tanh(2*tanh(2*tanh(2*tanh(2*x))))};
\addlegendentry{$n=4$}

\addplot[purple!80!black, very thick, domain=-1.1:1.1, samples=400]
    {tanh(2*tanh(2*tanh(2*tanh(2*tanh(2*tanh(2*tanh(2*x)))))))};
\addlegendentry{$n=7$}

\addplot[black!35, dashed, thin, domain=-1.1:1.1, forget plot] {0.9575};
\addplot[black!35, dashed, thin, domain=-1.1:1.1] {-0.9575};
\addlegendentry{$y=\pm x_\star$}

\end{axis}
\end{tikzpicture}
\caption{Iterates of~$\sigma_a^{\,n}$ for~$a=2$, converging to~the~step-like limit $\operatorname{sign}(x)\,x_\star$.}
\label{fig:sigma_n_step_function}
\end{figure}

\begin{claim}[Jump condition for~$\tau$]\label{claim:tau_choosing}
Let $a>1$ be fixed. Since $\sigma_a(x)>x$ for~$x\in(0,x_\star)$, there exists a~constant
\[
\tau_{\max}\in \bar x \cdot
\left[
1 - \min\left\{
2 \rho(a),\;
\frac{1}{a \bar x}\cdot\operatorname{arctanh}\!\left(\frac{2a\bar x}{a+1}\right)
\right\},
\;
1-2 \chi(a)
\right]
\]
such that for~all $0<\tau\le\tau_{\max}$,
\begin{equation}\label{eq:jump}
\sigma_a(\bar x-\tau)>\bar x+\tau.
\end{equation}
By the~symmetry of~$\sigma_a$, it also follows that
$\sigma_a'(-\bar{x})=1$ and~$\sigma_a(-\bar{x}+\tau)<-\bar{x}-\tau$ for~all
$0<\tau\le\tau_{\max}$.
\end{claim}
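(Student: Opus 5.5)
The plan is to study the function $\psi(\tau):=\sigma_a(\bar x-\tau)-\bar x-\tau$ on $[0,\bar x]$. We will show that $\psi(0)>0$, that $\psi$ is strictly decreasing, and that $\psi(\bar x)<0$. The root $\tau_0$ of $\psi$ is then unique, and any $\tau_{\max}<\tau_0$ (or $\tau_{\max}=\tau_0$ with a strict inequality adjustment) satisfies \eqref{eq:jump} for all $0<\tau\le\tau_{\max}$. Since $\bar x<x_\star$ by the fixed-point lemma, we have $\sigma_a(\bar x)>\bar x$, so $\psi(0)>0$. Monotonicity is immediate: $\psi'(\tau)=-\sigma_a'(\bar x-\tau)-1<0$. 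At the other end, $\psi(\bar x)=\sigma_a(0)-2\bar x=-2\bar x<0$. This gives existence, and the substantive task is to locate $\tau_0$ inside the stated interval. Concretely, we need an explicit $\tau$ at which $\psi\ge 0$, giving a lower bound on $\tau_0$, and one at which $\psi<0$, giving an upper bound. Then $\tau_{\max}$ can be chosen as any admissible value inside the interval, for instance the left endpoint, provided that endpoint is $\le\tau_0$.

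\textbf{Upper endpoint $\bar x(1-2\chi(a))$.} Set $t=\bar x-\tau$, so that $\bar x+\tau=2\bar x-t$. Using $\sigma_a(t)\le \sigma_a(\bar x)+(t-\bar x)$ for $t\le\bar x$ (because $\sigma_a'\ge1$ on $[0,\bar x]$) is the wrong direction. Instead we use concavity on $\R^+$: $\sigma_a(t)\le at$. The condition $\sigma_a(t)>2\bar x-t$ then forces $at>2\bar x-t$, that is, $t>2\bar x/(a+1)=2\bar x\chi(a)$, so $\tau<\bar x(1-2\chi(a))$. Hence $\tau_0\le \bar x(1-2\chi(a))$, which is the upper endpoint of the interval.

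\textbf{Lower endpoint.} We need two sufficient conditions.

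The second entry of the minimum gives $t=\frac{1}{a}\operatorname{arctanh}\!\big(\frac{2a\bar x}{a+1}\big)$, which is exactly the point where $\sigma_a(t)=\frac{2a\bar x}{a+1}$. The argument of $\operatorname{arctanh}$ is $<1$ because $\bar x<x_\star<1$ and the relevant inequality can be checked from $\bar x=\frac1a\arccosh\sqrt a$. We then verify $\frac{2a\bar x}{a+1}\ge 2\bar x-t$, i.e.\ $t\ge \frac{2\bar x}{a+1}$. This holds because $\operatorname{arctanh}(y)\ge y$. So this $t$ satisfies $\psi\ge0$, and in fact we need $\tau$ below, so the lower endpoint is $\le\tau_0$ provided we take the larger $t$, that is, the minimum in the formula.

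The first entry corresponds to $t=2\bar x\rho(a)$. For it we use the tangent line at $\bar x$ together with a concavity bound, or equivalently the bound $\tanh(u)\ge$ a chord. Expanding $\sigma_a(t)\ge 2\bar x-t$ via a quadratic lower bound obtained from $\sigma_a''$ should reduce to the quadratic inequality whose root produces the expression $\rho(a)=1/(\sqrt{a^2-a^2(2\bar x)^2\cdots}+1)$. Indeed $a\bar x=\arccosh\sqrt a$, so $\frac{4a^2}{(a+1)^2}\arccosh^2\sqrt a=(2a\bar x/(a+1))^2$, matching the structure $1/(\sqrt{a^2-y^2}+1)$ with $y=\frac{2a\bar x}{a+1}$. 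This suggests using $\tanh$'s inverse bound $\operatorname{arctanh}(y)\le y/\sqrt{1-y^2}$, or the bound $\sigma_a(t)\ge at\sqrt{\cdots}$, at the point where $\sigma_a(t)=y$.

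The main obstacle is reverse-engineering exactly which elementary $\tanh$ inequality yields $\rho(a)$, and checking that the minimum of the two values is still a valid (nonnegative) choice of $\tau$. I would first try showing that $t=2\bar x\rho(a)$ satisfies $\sigma_a(t)\ge 2a\bar x/(a+1)\ge 2\bar x-t$ via the inequality $\tanh(u)\ge u/\sqrt{1+u^2}$. The negative side then follows by oddness of $\sigma_a$, together with $\sigma_a'(-\bar x)=\sigma_a'(\bar x)=1$ since $\sigma_a'$ is even.
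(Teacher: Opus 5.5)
\textbf{Gap: the $\rho$-entry of the lower endpoint is not proved, and your suggested route for it fails.}

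Several parts of your plan are fine and match the paper. The function $\psi(\tau)=\sigma_a(\bar x-\tau)-\bar x-\tau$ is strictly decreasing with a unique zero $\tau_0\in(0,\bar x)$. The bound $\sigma_a(t)\le at$ gives $\tau_0\le\bar x(1-2\chi(a))$. The $\operatorname{arctanh}$ entry follows from $\operatorname{arctanh}(y)>y$ with $y:=\frac{2a\bar x}{a+1}$; note that $y<1$ needs $\arccosh(\sqrt a)/(a+1)\le 31/100$, since $\bar x<1$ alone is not enough.

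Write $t_1:=2\bar x\rho(a)$ and $t_2:=\frac1a\operatorname{arctanh}(y)$. The left endpoint is $\bar x-\min\{t_1,t_2\}$, which is the \emph{larger} of the two candidate values of $\tau$. So both sufficient conditions must be proved; your remark about taking ``the larger $t$'' has this reversed.

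You leave the $t_1$ case open, and the route you propose, proving $\sigma_a(t_1)\ge y\ge 2\bar x-t_1$, cannot work. The inequality $\sigma_a(t_1)\ge y$ is equivalent to $t_1\ge t_2$, i.e. to the $\rho$-entry never being the minimum. It therefore fails exactly when that entry determines the endpoint, and this does happen. At $a=3/2$:
\[
a\bar x=\tfrac12\arccosh 2\approx 0.658,\quad y\approx 0.527,\quad \rho(a)\approx 0.440,\quad t_1\approx 0.386<t_2\approx 0.390,
\]
so $\sigma_a(t_1)\approx 0.522<y$. The same happens, narrowly, at $a=2$.

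\textbf{The fix (this is the paper's argument).} Keep $v:=2\bar x-t$ on the right-hand side, and use $v\le y$ only inside a factor.

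\begin{enumerate}
\item Since $\tanh u>u/\sqrt{1+u^2}$ for $u>0$, the jump at $t=\bar x-\tau$ follows from $a^2t^2(1-v^2)\ge v^2$.
\item Set $A:=a^2(1-y^2)$. Since $A\le a^2$, you get $\rho(a)\ge\chi(a)$. Hence $t\ge t_1$ gives $t\ge 2\bar x\chi(a)$, so $0<v\le y<1$ and $1-v^2\ge 1-y^2$.
\item It therefore suffices that
\[
A\,t^2\ge (2\bar x-t)^2
\quad\Longleftrightarrow\quad
t\ge \frac{2\bar x}{\sqrt A+1}=2\bar x\rho(a),
\]
where the last equality uses $a^2y^2=\frac{4a^2}{(a+1)^2}\arccosh^2(\sqrt a)$.
\item Since $A>1$, the endpoint $\bar x(1-2\rho(a))$ is positive.
\end{enumerate}

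This gives $\psi(\bar x-t_1)>0$, i.e. $\bar x(1-2\rho(a))<\tau_0$. Your monotonicity argument then finishes the proof with $\tau_{\max}$ equal to the left endpoint. Note that $\tau_{\max}=\tau_0$ is never admissible, since $\psi(\tau_0)=0$.
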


\begin{proof}
\textbf{Upper bound.}
Since $\sigma_a(x)\le ax$ for~all $x\ge 0$, a~necessary condition for~\eqref{eq:jump} is
\[
\bar x + \tau < \sigma_a(\bar x-\tau) \le a(\bar x-\tau),
\]
which yields
\begin{equation}\label{eq:upper_bound_on_tau}
\tau < \frac{a-1}{a+1}\,\bar x.
\end{equation}

\textbf{Lower bound.}
From~\cite[Lemma~4]{bagul2022tight}, for~all $x>0$,
\begin{equation}\label{eq:bounds_on_tanh}
\frac{x}{\sqrt{1+x^2}} \le \tanh(x).
\end{equation}
Hence a~sufficient condition for~\eqref{eq:jump} is
\[
\sigma_a(\bar x-\tau)
\ge \frac{a(\bar x-\tau)}{\sqrt{1+(a(\bar x-\tau))^2}}
\ge \bar x+\tau.
\]
This holds if
\[
a^2(\bar x-\tau)^2 \bigl(1-(\bar x+\tau)^2\bigr) \ge (\bar x+\tau)^2.
\]
Using \eqref{eq:upper_bound_on_tau}, it suffices to~impose the~stronger condition
\[
a^2(\bar x-\tau)^2\left(1-\left(\frac{2a\bar x}{a+1}\right)^2\right)\ge (\bar x+\tau)^2.
\]
Set
\[
A:=a^2-4\frac{a^4\bar x^2}{(a+1)^2}.
\]
By Lemma~\ref{lemma:arccosh_upper_bound},
\[
\arccosh(\sqrt a)\le \sqrt{a-1},
\]
and hence
\[
A
\ge
\frac{a^2}{(a+1)^2}(a^2-2a+5)
>0.
\]
The previous inequality then reduces to
\[
\tau^2(A-1)-2\bar x\tau(A+1)+\bar x^2(A-1)\ge0.
\]
Its discriminant is
\[
D = 16 \bar x^2 A,
\]
with roots
\[
\tau_{\pm}
=
\bar x\left(1 + 2\frac{1 \pm \sqrt{A}}{A-1}\right).
\]
Since $\tau_+ > \bar x$ is not admissible, we obtain $\tau_{\max} \ge \tau_-$. Using
\[
\frac{1-\sqrt{A}}{A-1} = -\frac{1}{\sqrt{A}+1},
\]
yields the~claimed lower bound.

\medskip
\emph{An alternative lower bound} can be derived as follows. Using \eqref{eq:upper_bound_on_tau}, it suffices to~require
\[
\sigma_a(\bar x-\tau)\ge \frac{2a\bar x}{a+1},
\]
which implies
\[
\tau_{\max} \ge \bar x - \frac{1}{a}\operatorname{arctanh}\!\left(\frac{2a\bar x}{a+1}\right).
\]
\end{proof}

\begin{lemma}[Bounds on~$\arccosh$]\label{lemma:arccosh_upper_bound}
For all $y\ge1$, we have
\[
\arccosh(y) \le \sqrt{2(y-1)},
\]
and, more crudely,
\[
\arccosh(y) \le \sqrt{y^2-1}.
\]
In particular,
\begin{equation}\label{eq:numerical_bound_arccosh/y+1}
\frac{\arccosh(\sqrt{y})}{y+1}
\le \frac{31}{100}.
\end{equation}
\end{lemma}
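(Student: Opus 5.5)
The plan is to prove the first inequality by a monotonicity argument. The second inequality then follows by an elementary comparison, and \eqref{eq:numerical_bound_arccosh/y+1} by inserting the first inequality and maximizing an explicit one-variable function.

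\textbf{Step 1.} Set $g(y):=\sqrt{2(y-1)}-\arccosh(y)$ on $[1,\infty)$. Then $g(1)=0$, and for $y>1$
\[
g'(y)=\frac{1}{\sqrt{2(y-1)}}-\frac{1}{\sqrt{y^2-1}} .
\]
Since $y^2-1=(y-1)(y+1)\ge 2(y-1)$ for $y\ge1$, we get $g'(y)\ge0$. Hence $g$ is nondecreasing, so $g\ge0$, which is the first bound. The cruder bound follows from $2(y-1)\le y^2-1$, which is equivalent to $(y-1)^2\ge0$.

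\textbf{Step 2.} Substitute $t:=\sqrt y\ge1$, so that $y+1=t^2+1$. The first bound applied at $t$ gives
\[
\frac{\arccosh(\sqrt y)}{y+1}\le \frac{\sqrt{2(t-1)}}{t^2+1}=\sqrt{h(t)},\qquad h(t):=\frac{2(t-1)}{(t^2+1)^2}.
\]
It therefore suffices to show $\max_{t\ge1}h(t)\le 0.31^2=0.0961$.

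\textbf{Step 3.} We have $h(1)=0$ and $h(t)\to0$ as $t\to\infty$, so the maximum is attained at an interior critical point. Setting $h'(t)=0$ gives $(t^2+1)-4t(t-1)=0$, that is $3t^2-4t-1=0$. Its unique root with $t\ge1$ is $t_0=(2+\sqrt7)/3\approx1.5486$. At this point $t_0-1\approx0.5486$ and $t_0^2+1\approx3.398$, hence $h(t_0)\approx 0.0950$ and $\sqrt{h(t_0)}\approx0.3083<0.31$.

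The only delicate point is this final numerical evaluation, since the margin is small. To make it rigorous, I would use the exact relation $t_0^2=(4t_0+1)/3$, which gives $t_0^2+1=(4t_0+4)/3$ and $t_0-1=(\sqrt7-1)/3$. Then $h(t_0)$ is an explicit algebraic number in $\sqrt7$. Bounding $\sqrt7$ between $2.6457$ and $2.6458$ should verify $h(t_0)<0.0961$ with room to spare.
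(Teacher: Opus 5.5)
Your proof is correct and follows essentially the paper's route. You use the same monotonicity argument for $\sqrt{2(y-1)}-\arccosh(y)$, correctly anchored at $y=1$ where the paper's text has the typo $\gamma(0)$, and the same maximization, whose critical point $t_0=(2+\sqrt7)/3$ corresponds to the paper's $y_0=(11+4\sqrt7)/9$. The only deviations are minor: you obtain the cruder bound directly from $(y-1)^2\ge 0$ instead of a second derivative argument, and you make the final numerical check $h(t_0)=\frac{27(\sqrt7-1)}{16(16+5\sqrt7)}\approx 0.0950<0.0961$ explicit, which the paper only asserts.
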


\begin{proof}
By~definition,
\[
\arccosh(y) = \ln\bigl(y+\sqrt{y^2-1}\bigr).
\]
Define
\[
\gamma(y) = \sqrt{2(y-1)} - \arccosh(y).
\]
A direct computation shows
\[
\gamma'(y)
=
\frac{\sqrt{y+1}-\sqrt{2}}{\sqrt{2(y^2-1)}}
> 0
\qquad \text{for } y > 1.
\]
Thus $\gamma$ is increasing and~$\gamma(y)\ge \gamma(0)=0$, which yields
\[
\arccosh(y) \le \sqrt{2(y-1)}.
\]

For the~second bound, define
\[
\delta(y) = \sqrt{y^2-1} - \arccosh(y).
\]
Then
\[
\delta'(y)
=
\frac{y-1}{\sqrt{y^2-1}}
> 0
\qquad \text{for } y> 1.
\]
Hence $\delta$ is increasing with $\delta(y)\ge\delta(1)=0$, which proves
\[
\arccosh(y) \le \sqrt{y^2-1}.
\]

Applying the first $\arccosh$ bound with argument $\sqrt{y}$ yields
\begin{equation}\label{eq:sharp_upper_bound_arccosh(sqrt(a))}
    \arccosh(\sqrt{y})\leq\sqrt{2(\sqrt{y}-1)}.
\end{equation}
Hence,
\[
\frac{\arccosh(\sqrt{y})}{y+1}
\leq
\frac{\sqrt{2(\sqrt{y}-1)}}{y+1}.
\]
The function $\frac{\sqrt{2(\sqrt{y}-1)}}{y+1}$ attains its maximum at~$y_0 = \frac{11 + 4\sqrt{7}}{9}$ and~this maximum is bounded by~$\frac{31}{100}$.
\end{proof}

\begin{corollary}\label{col:lower_bound_xi}
For $a>1$, it holds that
\[
\frac{a}{\cosh^2\!\left(\arccosh(\sqrt{a}) \cdot 2\chi(a)\right)}
\;\geq\;
\sigma_a'(\bar x - \tau_{\max})
\;\geq\;
\frac{a}{\cosh^2\!\left(\arccosh(\sqrt{a}) \cdot 2\rho(a)\right)} > 1,
\]
and
\[
\frac{a}{\cosh^2\!\left(\arccosh(\sqrt{a}) \cdot 2(1-\chi(a))\right)}
\;\leq\;
\sigma_a'(\bar x + \tau_{\max})
\;\leq\;
\frac{a}{\cosh^2\!\left(\arccosh(\sqrt{a}) \cdot 2(1-\rho(a))\right)} < 1.
\]
\end{corollary}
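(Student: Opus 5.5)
We write $\bar x - \tau_{\max}$ and $\bar x+\tau_{\max}$ in the form $\bar x\,\theta$, with $\theta$ an explicit factor controlled by the bracket in Claim~\ref{claim:tau_choosing}. Then we use the formula $\sigma_a'(x)=a/\cosh^2(ax)$ from Claim~\ref{cl:properties_of_sigma_and_sigma_n}, together with the identity $a\bar x=\arccosh(\sqrt a)$. Since $\cosh^2$ is increasing on $[0,\infty)$, the map $x\mapsto \sigma_a'(x)$ is decreasing on $[0,\infty)$. Hence a two-sided bound on $a(\bar x\mp\tau_{\max})$ in units of $\arccosh(\sqrt a)$ turns directly into the claimed two-sided bounds on the derivative, with the inequalities reversed.

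\textbf{First chain.} Claim~\ref{claim:tau_choosing} gives $\tau_{\max}\in \bar x\,[1-\min\{2\rho,\dots\},\,1-2\chi]$. The lower endpoint is at least $1-2\rho$, so
\[
\bar x-\tau_{\max}\in\bar x\,[\,2\chi(a),\;\min\{2\rho(a),\dots\}\,]\subseteq \bar x\,[2\chi(a),2\rho(a)].
\]
Multiplying by $a$ gives $a(\bar x-\tau_{\max})\in \arccosh(\sqrt a)\,[2\chi,2\rho]$. Because $\sigma_a'$ is decreasing on $[0,\infty)$, the upper bound of the corollary comes from the endpoint $2\chi$ and the lower bound from the endpoint $2\rho$. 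Positivity of $\bar x-\tau_{\max}$ is needed here; it follows from $\chi>0$.

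\textbf{Second chain.} In the same way, $\bar x+\tau_{\max}\in \bar x\,[\,2-\min\{2\rho,\dots\},\,2-2\chi\,]$, and the lower endpoint is at least $2-2\rho$. So $a(\bar x+\tau_{\max})\in\arccosh(\sqrt a)\,[2(1-\rho),2(1-\chi)]$, and monotonicity gives both bounds.

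\textbf{Strict inequalities.} These are the main obstacle. The cleanest route avoids computing $\rho$ and uses the jump property instead.

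For the first strict inequality we need $2\rho(a)<1$, i.e.\ the point $\bar x\cdot2\rho$ lies strictly inside $(0,\bar x)$. There $\sigma_a'>1$, because $\sigma_a'$ is decreasing on $[0,\infty)$ with $\sigma_a'(\bar x)=1$. Now $2\rho<1$ is equivalent to $\sqrt{A}>1$, where $A=a^2-\frac{4a^2}{(a+1)^2}\arccosh^2(\sqrt a)$ as in the definition of $\rho$. Using Lemma~\ref{lemma:arccosh_upper_bound} in the form $\arccosh^2(\sqrt a)\le a-1$, the proof of Claim~\ref{claim:tau_choosing} already gives
\[
A\ \ge\ \frac{a^2(a^2-2a+5)}{(a+1)^2}.
\]
Checking that this exceeds $1$ is the polynomial inequality $a^4-2a^3+5a^2>(a+1)^2$ for $a>1$. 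I expect it to hold via $a^4-2a^3+4a^2-2a-1>0$; at $a=1$ this expression equals $0$, and its derivative is positive for $a\ge1$.

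For the second strict inequality we need $2(1-\rho)>1$, which is the same condition $\rho<1/2$. The point then lies beyond $\bar x$, where $\sigma_a'<1$.

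If the polynomial check were delicate near $a=1$, I would fall back on a more direct argument. By~\eqref{eq:jump} the interval is nonempty, so $\tau_{\max}>0$. Hence $\bar x-\tau_{\max}<\bar x$ gives $\sigma_a'>1$ there, and the $\rho$-bound is then compared with it.
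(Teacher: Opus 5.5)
Your proposal is correct and follows the paper's route: read off $2\bar x\chi(a)\le \bar x-\tau_{\max}\le 2\bar x\rho(a)$ and $2\bar x(1-\rho(a))\le \bar x+\tau_{\max}\le 2\bar x(1-\chi(a))$ from Claim~\ref{claim:tau_choosing}, then use $a\bar x=\arccosh(\sqrt a)$ and the fact that $\sigma_a'(x)=a/\cosh^2(ax)$ is decreasing on $[0,\infty)$ (your orientation of the second chain is correct; the paper's displayed version has those inequalities reversed). What you add is the check $\rho(a)<\tfrac12$, i.e.\ $A>1$, which the strict bounds $>1$ and $<1$ require and which the paper leaves implicit. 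Your polynomial argument settles it, most cleanly via $a^4-2a^3+4a^2-2a-1=(a-1)(a^3-a^2+3a+1)>0$ for $a>1$. Your fallback would not work, because $\sigma_a'(\bar x-\tau_{\max})>1$ (which is all the paper states) says nothing about whether the $\rho$-bound below it exceeds $1$.
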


\begin{proof}
From Claim~\ref{claim:tau_choosing} we obtain the~bounds
\[
2\bar x\chi(a) \leq \bar x - \tau_{\max} \le 2\bar x\rho(a)
\]
and
\[
2\bar x \!\left(1-\rho(a)\right)
\ge
\bar x + \tau_{\max}
\ge
2\bar x\!\left(1 - \chi(a)\right).
\]
Since $\operatorname{arccosh}(\sqrt a)=\ln(\sqrt a+\sqrt{a-1})$, substituting the~above bounds on~$\bar x \pm \tau_{\max}$ into \eqref{eq:derivation_of_g}
yields the~stated inequalities. In~particular,
$\sigma_a'(\bar x-\tau_{\max})>1$ and~$\sigma_a'(\bar x+\tau_{\max})<1$.
\end{proof}

\begin{corollary}\label{col:lower_bound_xi_simplier}
For $a>1$, the~following simpler bounds hold:
\[
\sigma_a'(\bar x + \tau_{\max})
\le
\frac{a}{\pi(a)\cdot\left[\frac{1+\pi(a)^{-1}}{2}\right]^2}
=
O(a^{-1}),
\qquad
\sigma_a'(\sigma_a(\bar x + \tau))
\le
\frac{a}{\cosh^2 \left( a\cdot \frac{\pi(a)-1}{\pi(a)+1}\right)}
=
O\left( \frac{a}{e^{2a}}\right).
\]

\end{corollary}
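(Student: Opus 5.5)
The plan is to rewrite both bounds of Corollary~\ref{col:lower_bound_xi} in terms of the single quantity $\ell:=\arccosh(\sqrt a)=\ln(\sqrt a+\sqrt{a-1})$. The key identity is
\[
e^{4(1-\rho(a))\ell}=\bigl(\sqrt a+\sqrt{a-1}\bigr)^{4(1-\rho(a))}=\pi(a),
\]
so that $e^{\pm 2(1-\rho)\ell}=\pi^{\pm 1/2}$. Throughout I read $\tau$ in the second bound as $\tau=\tau_{\max}$, since that is the case the argument covers.

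\textbf{First bound.} I start from the upper estimate $\sigma_a'(\bar x+\tau_{\max})\le a/\cosh^2(2(1-\rho)\ell)$ in Corollary~\ref{col:lower_bound_xi}. Writing $\cosh$ through exponentials gives
\[
\cosh\bigl(2(1-\rho)\ell\bigr)=\tfrac12\bigl(\sqrt\pi+\pi^{-1/2}\bigr)=\sqrt\pi\,\tfrac{1+\pi^{-1}}{2}.
\]
Squaring yields exactly $\pi(a)\bigl[\tfrac{1+\pi(a)^{-1}}{2}\bigr]^2$ in the denominator.

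For the asymptotics, note that $\rho(a)\to0$ as $a\to\infty$. This is because the radicand in $\rho$ is $a^2\bigl(1-4\ell^2/(a+1)^2\bigr)$, and $\ell^2/(a+1)^2\to0$ by Lemma~\ref{lemma:arccosh_upper_bound}. Hence $\pi(a)\sim(2\sqrt a)^{4}=16a^2$, and the bound is $O(a/a^2)=O(a^{-1})$.

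\textbf{Second bound.} Since $\sigma_a'(y)=a/\cosh^2(ay)$ is decreasing in $|y|$, it suffices to show
\[
\sigma_a(\bar x+\tau_{\max})\ge \frac{\pi-1}{\pi+1}=\tanh\bigl(2(1-\rho)\ell\bigr).
\]
The last equality follows again from $e^{4(1-\rho)\ell}=\pi$. By monotonicity of $\tanh$, this reduces to $a(\bar x+\tau_{\max})\ge 2(1-\rho)\ell$.

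The proof of Corollary~\ref{col:lower_bound_xi} gives $\bar x+\tau_{\max}\ge 2\bar x(1-\chi(a))$, i.e.\ $a(\bar x+\tau_{\max})\ge 2(1-\chi)\ell$, using $a\bar x=\ell$ from Claim~\ref{cl:properties_of_sigma_and_sigma_n}. So it remains to check $\chi(a)\le\rho(a)$, that is,
\[
\sqrt{a^2-\tfrac{4a^2}{(a+1)^2}\ell^2}\le a.
\]
This is immediate because the subtracted term is nonnegative. With the argument inequality in hand, plugging it into $a/\cosh^2(a\,\cdot)$ gives the claim. The rate $O(a e^{-2a})$ follows from $(\pi-1)/(\pi+1)\to1$ and $\cosh^2(z)\ge e^{2z}/4$.

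\textbf{Main obstacle.} The delicate point is the direction of the inequalities in the second bound. The upper bound on $\bar x+\tau_{\max}$ from Corollary~\ref{col:lower_bound_xi} is useless here. One must instead use the lower bound through $\chi$ and then compare $\chi\le\rho$. If the statement is meant for general $0<\tau\le\tau_{\max}$, this route does not apply directly. In that case I would try to get the needed lower bound on $\sigma_a(\bar x+\tau)$ from the jump condition \eqref{eq:jump} together with $\sigma_a(y)>y$ on $(0,x_\star)$. I expect that to require an additional estimate, since $\sigma_a(\bar x)=\sqrt{1-1/a}$ alone is too weak for large $a$.
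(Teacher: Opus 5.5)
Your first bound is correct and matches the paper's argument. The second bound, however, rests on a false inequality. You take $\bar x+\tau_{\max}\ge 2\bar x(1-\chi(a))=\frac{2a}{a+1}\bar x$ from the display in the proof of Corollary~\ref{col:lower_bound_xi}, but that display is misprinted with both inequalities reversed. Claim~\ref{claim:tau_choosing} puts $\tau_{\max}\in\bar x\,[1-\min\{2\rho(a),\dots\},\,1-2\chi(a)]$. Since $1-\min\{u,v\}\ge 1-u$, this gives
\[
2\bar x\,(1-\rho(a))\;\le\;\bar x+\tau_{\max}\;<\;2\bar x\,(1-\chi(a)).
\]
The right inequality is strict by \eqref{eq:upper_bound_on_tau}, because $\tau_{\max}$ itself satisfies \eqref{eq:jump}. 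So your step $a(\bar x+\tau_{\max})\ge 2(1-\chi)\ell$ is false, and the comparison $\chi\le\rho$ cannot rescue it. Your own observation $\chi<\rho$ (strict, since $\ell>0$) already shows that the printed chain $2\bar x(1-\rho)\ge\bar x+\tau_{\max}\ge 2\bar x(1-\chi)$ is contradictory. Your ``main obstacle'' paragraph therefore has the roles exactly backwards: the bound through $\rho$ is the lower bound, and it is precisely the one you need.

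The repair is one line. The bound $\tau_{\max}\ge\bar x(1-2\rho(a))$ together with $a\bar x=\ell$ gives $a(\bar x+\tau_{\max})\ge 2(1-\rho(a))\ell$ directly, and the rest of your chain then goes through. This is what the paper implicitly uses when it writes $\sigma_a(\bar x+\tau_{\max})\ge 1-\frac{2}{\pi(a)+1}$. Your first part already depends on this fact. Since $\sigma_a'$ is strictly decreasing on $(0,\infty)$, the estimate $\sigma_a'(\bar x+\tau_{\max})\le a/\cosh^2(2(1-\rho)\ell)$ is equivalent to $\bar x+\tau_{\max}\ge 2\bar x(1-\rho)$. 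So the inequality you dismiss as useless is exactly the one your first part relies on.

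The same monotonicity shows that the second inequality of the statement is equivalent to $\tau\ge\bar x(1-2\rho(a))$. It holds for $\tau_{\max}$, and with equality for the choice $\tau=\bar x(1-2\rho)$ made in the proof of Theorem~\ref{thm:existence_of_Phi_for_M>}. It fails for every smaller $\tau>0$. So your reading $\tau=\tau_{\max}$ is the right one, and no additional estimate can extend it to general admissible $\tau$.

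Finally, for the rate $O(ae^{-2a})$ you need $\frac{2a}{\pi(a)+1}=a\bigl(1-\frac{\pi-1}{\pi+1}\bigr)$ to stay bounded, not merely $\frac{\pi-1}{\pi+1}\to1$. This follows from $\pi(a)\gtrsim a^2$, which requires $\rho(a)\ln a\to0$, not just $\rho(a)\to0$. That holds because $\rho(a)=O(a^{-1})$.
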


\begin{proof}
The first estimate follows directly from
\[
e^{\arccosh(\sqrt{a})} = \sqrt{a} + \sqrt{a-1}.
\]
For~the~second estimate, we use the~identity
\[
\tanh(y) = 1 - \frac{2}{e^{2y} + 1}
\]
and obtain
\[
\sigma_a(\bar x + \tau_{\max}) 
\ge
1-\frac{2}{\big( \sqrt{a}+\sqrt{a-1} \big) ^{4(1-\rho(a))}+1}.
\]
Combining this with \eqref{eq:derivation_of_g} yields the~final bound.
\end{proof}

\begin{lemma}[Exit time bounds]\label{lemma:n_0(x)}
    Let $a>1$. For~all $x\in I$, it holds that
    \begin{equation}\label{eq:bounds_on_n_0(x)}
       \frac{\ln\left(\frac{\bar{x}-\tau}{|x|}\right)}{\ln a} \leq n_0(x) \leq \frac{\ln\left(\frac{\bar{x}-\tau}{|x|}\right)}{\ln \xi}.
    \end{equation}
\end{lemma}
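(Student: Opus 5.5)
By oddness of $\sigma_a$ (Claim~\ref{cl:properties_of_sigma_and_sigma_n}) and the symmetry of $I$, it suffices to treat $x\in I$ with $x>0$; the case $x=0$ is degenerate, with $n_0=\infty$ on both sides. The plan is to compare the orbit $\sigma_a^{\,n}(x)$ with the two geometric sequences $a^n x$ and $\xi^n x$. As long as the orbit stays in $I\cap(0,\infty)$, each step multiplies its value by a factor in $[\xi,a]$.

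\textbf{Step 1: a per-step multiplicative bound on $I$.} For $0<y<\bar x-\tau$, the mean value theorem together with $\sigma_a(0)=0$ gives $\sigma_a(y)=\sigma_a'(\theta)\,y$ for some $\theta\in(0,y)\subset I$. Hence
\[
\xi\, y\le \sigma_a(y)\le a\,y,
\]
using $\sigma_a'\le a$ from \eqref{eq:derivation_of_g} and the definition $\xi=\inf_I\sigma_a'$. The orbit stays positive, and $\xi>1$ follows from Corollary~\ref{col:lower_bound_xi}, since $\sigma_a'$ is decreasing on $[0,\infty)$ by concavity, so the infimum over $I$ is approached at the endpoint $\bar x-\tau$.

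\textbf{Step 2: induction.} Every iterate $\sigma_a^{\,k}(x)$ with $k<n$ that still lies in $I$ can be fed back into Step 1. By induction,
\[
\xi^n x\le \sigma_a^{\,n}(x)\le a^n x
\qquad\text{for all } n\le n_0(x).
\]
The orbit is increasing for $0<x<x_\star$, so once it leaves $I$ it never returns. The set $\{n:\sigma_a^{\,n}(x)\in I\}$ is therefore an initial segment, and $n_0(x)$ is its last index; this index is finite because of the growth $\xi^n x$.

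\textbf{Step 3: read off the bounds.} Since $\sigma_a^{\,n_0}(x)\in I$, we have $\xi^{n_0}x\le\sigma_a^{\,n_0}(x)<\bar x-\tau$, which gives $n_0\le \ln((\bar x-\tau)/x)/\ln\xi$. Since $\sigma_a^{\,n_0+1}(x)\notin I$, we have $a^{n_0+1}x\ge\sigma_a^{\,n_0+1}(x)\ge\bar x-\tau$, which gives $n_0+1\ge\ln((\bar x-\tau)/x)/\ln a$.

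\textbf{Main obstacle.} The argument only yields $n_0+1$ on the left, not $n_0$. This off-by-one is the delicate point. I would resolve it by reading $n_0$ as the exit time in the sense of the table ("first exit time"), i.e.\ the first index $n$ with $\sigma_a^{\,n}(x)\notin I$. With that reading the lower bound holds exactly as stated, and the upper bound still holds because $\sigma_a^{\,n_0-1}(x)\in I$ gives $\xi^{n_0-1}x<\bar x-\tau$. If the convention is fixed as the supremum, I would state the lower bound with $n_0+1$ and note that the subsequent use is insensitive to this shift.
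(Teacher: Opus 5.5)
Your Steps 1--3 are correct and match the paper's argument for the upper bound: the mean value theorem from $0$, slope at least $\xi$ on $I$, and a monotone orbit, so the visits to $I$ form an initial segment. Write $T_a:=\ln\bigl((\bar x-\tau)/|x|\bigr)/\ln a$ and $T_\xi:=\ln\bigl((\bar x-\tau)/|x|\bigr)/\ln\xi$. Under the table's convention $n_0(x)=\sup\{n:\sigma_a^{\,n}(x)\in I\}$, your steps prove
\[
T_a-1\;\le\;n_0(x)\;\le\;T_\xi .
\]

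The gap is in your proposed repair of the off-by-one. If you redefine $n_0$ as the first index with $\sigma_a^{\,n}(x)\notin I$, the upper bound fails. Your own inequality $\xi^{n_0-1}x<\bar x-\tau$ gives only $n_0<T_\xi+1$, not $n_0\le T_\xi$.

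In fact no integer-valued convention can make \eqref{eq:bounds_on_n_0(x)} true. Since $1<\xi<a$, we have $0<T_a<T_\xi$, and $T_\xi\to0$ as $x\uparrow\bar x-\tau$. So for such $x$ the interval $[T_a,T_\xi]$ contains no integer.

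Concretely, take $a=2$, $\tau=0.1$ (which satisfies \eqref{eq:jump}) and $x=0.33\in I$. Then $\sigma_a(x)\approx0.58\notin I$, so the supremum convention gives $n_0(x)=0$ and the first-exit convention gives $1$, while $T_a\approx0.046$ and $T_\xi\approx0.12$. The lower bound in the statement is therefore false as written. The paper's one-line argument for it does not establish it either: the fact that $a^nx<\bar x-\tau$ forces $\sigma_a^{\,n}(x)\in I$ only yields $n_0(x)\ge\lceil T_a\rceil-1$.

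The correct way out is your fallback, not your preferred reading. Keep the supremum convention and record the lower bound as $n_0(x)+1\ge T_a$. This is also the convention the paper relies on later: Lemma~\ref{lemma:basin_of_attraction} uses $\bar x-\tau\le\sigma_a^{\,n_0(x)+1}(x)$. Moreover, the only part of this lemma used downstream, in Lemma~\ref{lemma:error_of_basin_of_attraction}, is the upper bound $n_0(x)\le T_\xi$. That is precisely the inequality your reinterpretation would destroy.
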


\begin{proof}
    Applying the~mean value theorem with the~fact $\sigma_a^{\,n}(0)=0$ and~by~the~chain rule Claim~\ref{cl:properties_of_sigma_and_sigma_n}, item~\ref{it:chain_rule_sigma_n'}, we obtain
    \[
        \bar{x} - \tau > \sigma_a^{\,n}(x) \ge \inf_{y \in (0,\, \bar{x} - \tau)} \big(\sigma_a^{\,n}(y)\big)' \cdot x 
        \ge \dots \ge \xi^n x.
    \]

    For~the~lower bound, the~necessary condition is
    \[\bar x-\tau > a^{n} \, x \ge  (\sup_{y \in (0,\, \bar{x} - \tau)} \sigma_a'(y))^{n}\cdot x \ge \sigma_a^{\,n}(x),\] which implies the~result.

    For~$x<0$, the~argument is analogous and~follows by oddness of $\sigma_a^{\,n}$.
\end{proof}

\begin{lemma}[Attraction estimates]\label{lemma:basin_of_attraction}
For every $a>1$, the following hold:
\begin{enumerate}
\item Let $x\in I$ and~$k\in\mathbb{N}$ with $k\ge3$. Then
\[
x_\star-\operatorname{sign}(x)\,\sigma_a^{\,n_0(x)+k}(x)
\le
\eta\,\bar\eta^{k-3}(x_\star-\bar x-\tau).
\]

\item For every $n\in\mathbb{N}$ and
$\bar x+\tau\le |x|\le x_\star$,
\[
\mu^n(x_\star-\operatorname{sign}(x)x)
\le
x_\star-\operatorname{sign}(x)\sigma_a^{\,n}(x)
\le
\sigma_a'(x)\,(\sigma_a'(\sigma_a(x)))^{n-1}
(x_\star-\operatorname{sign}(x)x).
\]

\item For every $n\in\mathbb{N}$ and~$|x|\ge x_\star$,
\[
(\sigma_a'(x))^n
\left(\operatorname{sign}(x)x - x_\star \right)
\le
\operatorname{sign}(x)\sigma_a^{\,n}(x) - x_\star
\le
\mu^{n}(\operatorname{sign}(x)x-x_\star).
\]
\end{enumerate}
\end{lemma}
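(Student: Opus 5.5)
By oddness of $\sigma_a$ (Claim~\ref{cl:properties_of_sigma_and_sigma_n}, item~\ref{it:oddness_of_g}), we may assume $x\ge 0$ throughout, so $\operatorname{sign}(x)=1$. All three items then come from the mean value theorem applied to $\sigma_a$ around the fixed point $x_\star$, combined with monotonicity of $\sigma_a'$ on $[0,\infty)$. By \eqref{eq:derivation_of_g}, $\sigma_a'(y)=a/\cosh^2(ay)$ is strictly decreasing for $y>0$. We also use the ordering $0<\bar x+\tau<x_\star$, with iterates monotone toward $x_\star$ (proof of Lemma~\ref{lemma:limit_of_gn_sgn}).

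\textbf{Item 3.} Take $x\ge x_\star$, so $x_\star\le \sigma_a^{\,n}(x)\le x$. Write $x_\star-\sigma_a(y)=\sigma_a(x_\star)-\sigma_a(y)$. For $y\ge x_\star$ the mean value theorem gives $\sigma_a(y)-x_\star=\sigma_a'(\zeta)(y-x_\star)$ with $\zeta\in[x_\star,y]$. Hence
\[
\sigma_a'(y)(y-x_\star)\le \sigma_a(y)-x_\star\le \mu(y-x_\star).
\]
Apply this with $y=\sigma_a^{\,k}(x)$ for $k=0,\dots,n-1$. Since $x_\star\le\sigma_a^{\,k}(x)\le x$ and $\sigma_a'$ is decreasing, each lower factor is at least $\sigma_a'(x)$. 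Iterating gives item 3.

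\textbf{Item 2.} Take $\bar x+\tau\le x\le x_\star$. The iterates increase to $x_\star$ and stay in $[x,x_\star]$. The same mean value theorem step gives
\[
x_\star-\sigma_a(y)=\sigma_a'(\zeta)(x_\star-y),\qquad \zeta\in[y,x_\star],
\]
so $\mu(x_\star-y)\le x_\star-\sigma_a(y)\le\sigma_a'(y)(x_\star-y)$. Iterating the lower bound gives $\mu^n(x_\star-x)$. For the upper bound, the first step contributes $\sigma_a'(x)$. For $k\ge1$, $\sigma_a^{\,k}(x)\ge\sigma_a(x)$, so by monotonicity $\sigma_a'(\sigma_a^{\,k}(x))\le\sigma_a'(\sigma_a(x))$. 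This yields $\sigma_a'(x)\,(\sigma_a'(\sigma_a(x)))^{n-1}(x_\star-x)$.

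\textbf{Item 1.} Let $x\in I$ with $x>0$; the case $x=0$ is excluded implicitly, since $n_0$ would be infinite. Set $z=\sigma_a^{\,n_0(x)}(x)\in I$. Then $z\in(0,\bar x-\tau)$, and by the definition of $n_0$ the next iterate leaves $I$. The main obstacle is controlling where $z$ lands after leaving $I$: a priori $\sigma_a(z)$ could lie in the gap $[\bar x-\tau,\bar x+\tau]$ and not yet in $J$. I would handle this by spending the extra steps built into $k\ge3$.

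\emph{Escape in a bounded number of steps.} If $\sigma_a(z)\in[\bar x-\tau,\bar x+\tau)$, then monotonicity and the jump condition \eqref{eq:jump} give $\sigma_a^{\,2}(z)\ge\sigma_a(\bar x-\tau)>\bar x+\tau$. So after at most two steps past exit the orbit satisfies $w:=\sigma_a^{\,n_0+2}(x)\in[\bar x+\tau,x_\star]$. Alternatively, if $\sigma_a(z)>\bar x+\tau$ already, just use the smaller index. Either way, at most $2$ steps are consumed, leaving at least $k-2\ge1$ steps with the orbit inside $J$ and below $x_\star$.

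\emph{Contraction for the remaining steps.} Apply item 2 to $w$ with $n=k-2$:
\[
x_\star-\sigma_a^{\,k-2}(w)\le\sigma_a'(w)\,\sigma_a'(\sigma_a(w))^{k-3}(x_\star-w).
\]
Here $w\in J$, so $\sigma_a'(w)\le\eta$ and $\sigma_a'(\sigma_a(w))\le\bar\eta$. Finally $x_\star-w\le x_\star-\bar x-\tau$. This gives the claimed bound $\eta\,\bar\eta^{k-3}(x_\star-\bar x-\tau)$. If the escape happened earlier, there are extra contraction steps; since $\bar\eta<1$, they only improve the bound.

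The care needed lies in the index bookkeeping: fix the exact $n_0$ convention (supremum versus first exit), and verify that $\bar\eta$ is indeed the relevant sup over $J$ so that the exponent $k-3$ is reached.
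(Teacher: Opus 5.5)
Your proof is correct and follows essentially the paper's argument: the mean value theorem at the fixed point $x_\star$ combined with monotonicity of $\sigma_a'$ on $(0,\infty)$ for items 2 and 3, and for item 1 the jump condition \eqref{eq:jump} placing $\sigma_a^{\,n_0(x)+2}(x)$ in $(\bar x+\tau,x_\star)$, followed by the item-2 contraction over the remaining $k-2$ steps. The only difference is that you apply the mean value theorem one step at a time and telescope, whereas the paper applies it once to the composed map $\sigma_a^{\,n}$ and bounds the chain-rule product along the orbit; your case split on whether $\sigma_a(z)>\bar x+\tau$ is also unnecessary, since $\sigma_a^{\,n_0(x)+2}(x)>\bar x+\tau$ holds in either case.
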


\begin{proof}
We prove the three statements and~restrict to $x>0$ by oddness.

\textbf{Key estimate.}
For any $n\ge1$ and~$z$,
\[
(\sigma_a^{\,n}(z))'
=
\prod_{j=0}^{n-1}\sigma_a'(\sigma_a^{\,j}(z)).
\]
Using monotonicity of $\sigma_a'$ along forward orbits, we obtain
\[
(\sigma_a^{\,n}(z))'
\le
\sigma_a'(x)\,(\sigma_a'(\sigma_a(x)))^{n-1}
\quad\text{if } z\ge x,
\]
and
\[
(\sigma_a^{\,n}(z))'
\ge
(\sigma_a'(x))^n
\quad\text{if } z\le x.
\]

\medskip
\textbf{1.}
Let $x\in(0,\bar x-\tau)$. By definition of the first exit time,
\[
\bar x-\tau 
\le
\sigma_a^{\,n_0(x)+1}(x)
,
\]
hence by \eqref{eq:jump},
\[
\bar x+\tau < \sigma_a^{\,n_0(x)+2}(x) < x_\star.
\]
Applying the mean value theorem to $\sigma_a^{k-2}$ on the interval between
$\sigma_a^{\,n_0(x)+2}(x)$ and~$x_\star$, we obtain
\[
x_\star-\sigma_a^{\,n_0(x)+k}(x)
=
(\sigma_a^{\,k-2}(z))'
\bigl(x_\star-\sigma_a^{\,n_0(x)+2}(x)\bigr),
\]
for some $z$ in that interval. The key estimate then yields
\[
x_\star-\sigma_a^{\,n_0(x)+k}(x)
\le
\eta\,\bar\eta^{k-3}(x_\star-\bar x-\tau).
\]

\medskip
\textbf{2.}
Fix $\bar x+\tau\le x\le x_\star$. Apply the mean value theorem to $\sigma_a^n$ between $x$ and~$x_\star$ and~use the key estimate.

\medskip
\textbf{3.} Let $|x|\ge x_\star$. The same argument as in part 2 applies, using the reversed monotonicity direction, which yields the bounds with $(\sigma_a'(x))^n$ and~$\mu^n$.
\end{proof}

\begin{lemma}[Basin of~attraction]\label{lemma:error_of_basin_of_attraction}
Let $n \in \mathbb{N}$ and~$\kappa \in (0,1)$.
Then for~all $x\in\R$ with $|x|\ge \xi_n$, it holds that
\begin{equation}\label{eq:error_of_basin_of_attraction}
|x_\star - \operatorname{sign}(x)\,\sigma_a^{\,n}(x)|
\;\le\; \e_n.
\end{equation}
\end{lemma}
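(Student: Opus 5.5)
Throughout, fix $a>1$ and $\tau\in(0,\tau_{\max}]$, and by oddness of $\sigma_a^{\,n}$ restrict to $x>0$. I split into three regimes according to where $x$ lies relative to $I$ and $J$:
\[
\xi_n\le x<\bar x-\tau,\qquad \bar x-\tau\le x\le x_\star,\qquad x\ge x_\star.
\]
(Note that $\xi_n\le \xi^{-1}(\bar x-\tau)<\bar x-\tau$, so the first range is meaningful.) In each regime the goal is to bound $|x_\star-\sigma_a^{\,n}(x)|$ by $\eta\,\bar\eta^{\lfloor(1-\kappa)n\rfloor-3}$ using Lemma~\ref{lemma:n_0(x)} and Lemma~\ref{lemma:basin_of_attraction}. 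Since $\eta,\bar\eta<1$ and the distances to $x_\star$ are bounded, I treat the case $\lfloor(1-\kappa)n\rfloor\le 3$ separately: there the bound should be nearly trivial because $\bar\eta^{\text{negative}}\ge1$, but it still has to be checked.

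\textbf{Regime 1} ($x\in I$, $x\ge\xi_n$). By the upper bound in Lemma~\ref{lemma:n_0(x)},
\[
n_0(x)\le \frac{\ln((\bar x-\tau)/x)}{\ln\xi}\le \frac{\ln(\xi^{\lceil\kappa n\rceil})}{\ln \xi}=\lceil\kappa n\rceil .
\]
Hence the remaining number of iterations is $k:=n-n_0(x)\ge n-\lceil\kappa n\rceil=\lfloor(1-\kappa)n\rfloor$. If $k\ge3$, part~1 of Lemma~\ref{lemma:basin_of_attraction} gives
\[
0\le x_\star-\sigma_a^{\,n}(x)\le \eta\,\bar\eta^{k-3}(x_\star-\bar x-\tau)\le \eta\,\bar\eta^{\lfloor(1-\kappa)n\rfloor-3}\,(x_\star-\bar x-\tau).
\]
The extra factor is at most $x_\star<1$ by Claim~\ref{cl:properties_of_sigma_and_sigma_n}, so it can be dropped. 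For $k<3$ I would use the crude bound $|x_\star-\sigma_a^{\,n}(x)|\le x_\star<1$ together with $\eta\,\bar\eta^{k-3}\ge \eta\,\bar\eta^{-1}$. This step is doubtful: it needs $\eta\ge\bar\eta$, or the small case must be handled by a direct estimate. I expect this bookkeeping for small $n$ to be the main obstacle. If it fails, I would argue via monotonicity that $\sigma_a^{\,n}(x)\ge\sigma_a^{\,n}(\xi_n)$ and bound the distance directly.

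\textbf{Regime 2} ($\bar x+\tau\le x\le x_\star$). Apply part~2 of Lemma~\ref{lemma:basin_of_attraction} to get
\[
x_\star-\sigma_a^{\,n}(x)\le \sigma_a'(x)\,\sigma_a'(\sigma_a(x))^{n-1}(x_\star-x)\le \eta\,\bar\eta^{\,n-1},
\]
since $x\in\overline J$ gives $\sigma_a'(x)\le\eta$ and $\sigma_a'(\sigma_a(x))\le\bar\eta$, and $x_\star-x<1$. Because $n-1\ge\lfloor(1-\kappa)n\rfloor-3$, this is at most $\varepsilon_n$.

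The gap $\bar x-\tau\le x<\bar x+\tau$ needs separate treatment. Here $n_0(x)=0$ in the sense of the exit time (the orbit starts outside $I$), and by the jump condition~\eqref{eq:jump}, $\sigma_a(x)\ge\sigma_a(\bar x-\tau)>\bar x+\tau$. Therefore the same argument as in part~1 applies with $n_0=0$, giving the bound $\eta\,\bar\eta^{\,n-3}$.

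\textbf{Regime 3} ($x\ge x_\star$). Part~3 of Lemma~\ref{lemma:basin_of_attraction} gives
\[
0\le \sigma_a^{\,n}(x)-x_\star\le \mu^n(x-x_\star).
\]
This is problematic as stated because $x$ is unbounded. I would instead apply the estimate after one step: $\sigma_a(x)\in[x_\star,1)$, so
\[
\sigma_a^{\,n}(x)-x_\star\le\mu^{n-1}(1-x_\star)\le\bar\eta^{\,n-1}.
\]
Here I use $\mu=\sigma_a'(\sigma_a(x_\star))\le\bar\eta$, which holds because $x_\star\in J$. The remaining factor $\eta$ would then need to be absorbed via $\sigma_a'(x)\le\eta$ from the mean value theorem in the first step. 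Combining the three regimes, and using oddness for $x<0$, yields~\eqref{eq:error_of_basin_of_attraction}.
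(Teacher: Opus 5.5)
Your proof follows the paper's route. On $I$ you bound the exit time by $n_0(x)\le\lceil\kappa n\rceil$ via Lemma~\ref{lemma:n_0(x)}, which is more direct than the paper's detour through $x(\kappa n)$, and then apply part~1 of Lemma~\ref{lemma:basin_of_attraction}. Outside $I$ you use parts~2 and~3.

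You are in fact more careful than the printed proof at three points:
\begin{enumerate}
\item The paper applies part~1 without checking $k\ge3$.
\item It covers the strip $\bar x-\tau\le|x|<\bar x+\tau$ only by declaring $n_0=0$, although parts~2 and~3 require $|x|\ge\bar x+\tau$.
\item For $|x|\ge x_\star$ it bounds $\mu^n\le\eta\bar\eta^{n-3}$ but silently drops the unbounded factor $|x|-x_\star$ from part~3.
\end{enumerate}
Your one-step trick using $\sigma_a(x)<1$ repairs the third point. That hole is harmless in the paper's application, where the argument of $\sigma_{\tilde c}^{\,n}$ lies in $(-1,1)$.

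The two points you left open both close in one line.

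\emph{First, $\bar\eta<\eta$ holds.} By monotonicity and continuity of $\sigma_a'$ on $(0,\infty)$, $\eta=\sigma_a'(\bar x+\tau)$ and $\bar\eta=\sigma_a'(\sigma_a(\bar x+\tau))$. By \eqref{eq:jump}, $\bar x+\tau<\sigma_a(\bar x-\tau)<x_\star$, so $\sigma_a(\bar x+\tau)>\bar x+\tau$, which gives $\bar\eta<\eta$. Consequently, whenever $\lfloor(1-\kappa)n\rfloor\le 2$ you have $\e_n\ge\eta\bar\eta^{-1}>1$. On the other hand, $|x_\star-\operatorname{sign}(x)\sigma_a^{\,n}(x)|<1$ for every $x\neq0$ and $n\ge1$. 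So all small cases are trivial, and you may assume $\lfloor(1-\kappa)n\rfloor\ge3$ throughout. This makes $k\ge3$ automatic in Regime~1 and forces $n\ge3$, which is what your argument in the strip needs.

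\emph{Second, in Regime~3 do not recover the factor $\eta$ from a mean value step at $x$.} That step gives $\sigma_a(x)-x_\star\le\mu\,(x-x_\star)$ and reintroduces the unbounded quantity you just eliminated. It is also unnecessary: with $m:=\lfloor(1-\kappa)n\rfloor\le n$,
\[
\bar\eta^{\,n-1}=\bar\eta^{\,n-m+2}\,\bar\eta^{\,m-3}\le\bar\eta^{\,2}\,\bar\eta^{\,m-3}<\eta\,\bar\eta^{\,m-3}=\e_n .
\]
Alternatively, $1-x_\star<1-\sigma_a(\bar x+\tau)^2=\eta/a<\eta$ supplies the factor $\eta$ directly.

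With these two remarks your argument is a complete proof of the lemma for all $x$ with $|x|\ge\xi_n$.
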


\begin{proof}
Define
\[
x(\kappa n) \;=\; \sup\Big\{ |x| : x \in (-x_\star, x_\star)\ \wedge\ n_0(x) \ge \lceil \kappa n \rceil\Big\}.
\] 
By Lemma~\ref{lemma:n_0(x)}, any $x$ with $n_0(x)\ge \lceil \kappa n \rceil$ satisfies
\[
\lceil \kappa n \rceil \ln\xi \le \ln\!\Big(\frac{\bar x-\tau}{|x|}\Big),
\]
hence $|x|\le \xi^{-\lceil  \kappa n \rceil }(\bar x-\tau)=:\xi_n$. Thus $x(\kappa n)\le\xi_n$.

Now take $x\in I$ with $|x|\ge x(\kappa n)$. Then $n_0(x)\le \lceil \kappa n \rceil$, and~setting
$k:=n-n_0(x)\ge \lfloor (1-\kappa)n \rfloor$, Lemma~\ref{lemma:basin_of_attraction} yields
\[
x_\star - \operatorname{sign}(x)\sigma_a^{\,n}(x)
\le
\eta\,\bar\eta^{\,k-3}(x_\star-\bar x-\tau)
\le
\eta\,\bar\eta^{\,\lfloor(1-\kappa)n\rfloor-3}(x_\star-\bar x-\tau)
<
\e_n.
\]
Since $x(\kappa n)\le \xi_n$, the same bound holds for~all $x\in I$ with $|x|\ge \xi_n$.

It remains to extend the estimate outside $I$. We set $n_0(x)=0$ for~$x\in\mathbb{R}\setminus I$. On $(0,\infty)$ the function $\sigma_a'$ is decreasing, and~since $\bar x+\tau<x_\star$ by \eqref{eq:jump}, we have
\[
\mu:=\sigma_a'(x_\star) < \eta,\bar\eta.
\]
Hence,
\[
\mu^n \le \eta\,\bar\eta^{\,n-3},
\qquad
\sigma_a'(x)\,(\sigma_a'(\sigma_a(x)))^{n-1}
\le \eta\,\bar\eta^{\,n-3}
\quad \text{for } \bar x+\tau\le x\le x_\star.
\]
Applying Lemma~\ref{lemma:basin_of_attraction}, parts 2 and~3, completes the extension to $x\in\mathbb{R}\setminus I$.
\end{proof}

 \subsection{Analysis of~the~coarse bump}\label{subsection:bump_analysis}

In this subsection we analyze the~coarse bump constructed in~the~second layer. We study its behavior in~higher dimensions and~derive conditions ensuring sufficient decay outside a~prescribed cube. In~particular, we show that, for~a~suitable choice of~parameters, the~bump can be made arbitrarily small outside $[-y,y]^d$. The~result relies on~several auxiliary lemmas, which are proved subsequently.

\begin{lemma}[Coarse bump separation]\label{lemma:condition_on_alpha_and_M}
Let $q,d,B\in\N$ and~$c>0$. For~simplicity, denote $A = B^{-1/q}(c-\tanh(\frac c2))$ and~$C = B^{-1/q}\tanh(\frac c2)$ with $K:=\cosh(2C)$. Let
\begin{equation}\label{eq:assumption_xi_L}
    \tilde \xi_n \le \frac{\tanh(C)}{d}\cdot\frac{\cosh(2A)-1}{\cosh(2A)+K}
\end{equation}
and
\begin{equation}\label{eq:assumption_on_alpha}
    0
    <
\alpha
\leq
\frac{1}{2A}\arccosh\!\left(1+\frac{1+K}{2d}\cdot\frac{\cosh(2A)-1}{\cosh(2A)+K}\right)\,.
\end{equation}
Then for~all
\[
\sqrt{\frac{10(1+\cosh(2C))}{9\tanh(C)A^2}\,d \,\tilde\xi_n}\le y<1,
\]
it holds that
\begin{equation}\label{eq:cube_condition}
\frac{d-1}{d}f(0)+\frac{1}{d}f(y)-f(\alpha y)\le -\tilde\xi_n .
\end{equation}
\end{lemma}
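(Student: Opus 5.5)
The plan is to reduce \eqref{eq:cube_condition} to a one-variable inequality by using an explicit closed form for $f$, and then to treat the two terms with the assumptions \eqref{eq:assumption_on_alpha} and \eqref{eq:assumption_xi_L}, respectively.

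\textbf{Step 1: closed form.} With $b=\tanh(c/2)$ we have $f(t)=\tanh(At+C)-\tanh(At-C)$. The identity $\tanh(u+C)-\tanh(u-C)=\frac{2\sinh(2C)}{\cosh(2u)+\cosh(2C)}$ gives $f(t)=\frac{2\sinh(2C)}{\cosh(2At)+K}$, so $f$ is even and decreasing in $|t|$. Define
\[
g(t):=f(0)-f(t)=2\tanh(C)\,\frac{\cosh(2At)-1}{\cosh(2At)+K},
\]
using $\sinh(2C)/(1+\cosh(2C))=\tanh(C)$. Then \eqref{eq:cube_condition} is equivalent to
\[
g(\alpha y)\le \tfrac1d\, g(y)-\tilde\xi_n .
\]
I will prove this in the stronger split form $g(\alpha y)\le \frac{1}{2d}g(y)$ together with $\frac{1}{2d}g(y)\ge\tilde\xi_n$.

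\textbf{Step 2: the $\alpha$-term.} Write $\cosh(2s)-1=2\sinh^2 s$. For fixed $\alpha\in(0,1)$, the ratio $s\mapsto \sinh(\alpha s)/\sinh(s)$ is nonincreasing on $(0,\infty)$, which follows from the log-derivative comparison $\alpha\coth(\alpha s)\ge\coth(s)$. Since $y<1$, this gives
\[
\cosh(2A\alpha y)-1\le \frac{\cosh(2A\alpha)-1}{\cosh(2A)-1}\,\bigl(\cosh(2Ay)-1\bigr).
\]
Assumption \eqref{eq:assumption_on_alpha}, rewritten through monotonicity of $\cosh$, bounds the prefactor by $\frac{1+K}{2d}\cdot\frac{1}{\cosh(2A)+K}$. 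Bounding the denominator of $g(\alpha y)$ from below by $1+K$ and that of $g(y)$ from above by $\cosh(2A)+K$ then yields
\[
g(\alpha y)\le \frac{\tanh(C)}{d}\cdot\frac{\cosh(2Ay)-1}{\cosh(2A)+K}\le \frac{1}{2d}\,g(y).
\]

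\textbf{Step 3: the $\tilde\xi_n$-term.} Set $r:=d\tilde\xi_n/\tanh(C)$ and $u:=\cosh(2Ay)-1\ge 2A^2y^2$. The required inequality $\frac{1}{2d}g(y)\ge\tilde\xi_n$ reads $\frac{u}{u+1+K}\ge r$, i.e.
\[
u\ge \frac{(1+K)\,r}{1-r}.
\]
Assumption \eqref{eq:assumption_xi_L} says precisely that $r\le h(1)$, where $h(t)=\frac{\cosh(2At)-1}{\cosh(2At)+K}$; in particular $r<1$. The lower bound on $y$ gives $2A^2y^2\ge \frac{20}{9}(1+K)\,r$. It therefore suffices to show $\frac{1}{1-r}\le\frac{20}{9}$, i.e. $r\le\frac{11}{20}$.

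\textbf{Main obstacle.} This last inequality, with the specific constant $10/9$, is where I expect the real difficulty. The constraint $y<1$ together with the $y$-lower bound forces $r\le \frac{9A^2}{10(1+K)}$, and \eqref{eq:assumption_xi_L} forces $r\le h(1)$, but neither obviously gives $r\le 11/20$ for all $A$. If this fails, I will split into two cases. If $Ay$ is small, I will use $u\le$ a small multiple of $A^2y^2$ and the quadratic Taylor bound on $\cosh$, which is presumably where $10/9$ enters. If $Ay$ is not small, I will instead use the monotonicity of $h$ directly: $h(y)$ is then bounded below by a fixed fraction of $h(1)\ge r$. Either case should close the argument.
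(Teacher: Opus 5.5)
Your proposal has two concrete problems. The second one is fatal for the strategy as you set it up.

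First, the key inequality in Step 2 points the wrong way. You are right that $s\mapsto\sinh(\alpha s)/\sinh(s)$ is nonincreasing. The reason is that $x\coth x$ is increasing, which gives $\alpha\coth(\alpha s)\le\coth(s)$, not $\ge$. Since $Ay<A$, monotonicity then gives $\sinh(\alpha Ay)/\sinh(Ay)\ge\sinh(\alpha A)/\sinh(A)$, that is,
\[
\cosh(2A\alpha y)-1\;\ge\;\frac{\cosh(2A\alpha)-1}{\cosh(2A)-1}\bigl(\cosh(2Ay)-1\bigr)\qquad\text{for all } y\in(0,1),
\]
and this holds strictly. As $y\to0$ the left side behaves like $\alpha^2(\cosh(2Ay)-1)$, while the prefactor on the right is $<\alpha^2$. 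The usable tool is the paper's: $(\cosh t-1)/t^2$ is increasing, so $\cosh(2A\alpha y)-1\le y^2(\cosh(2A\alpha)-1)$. Write $h(t):=\frac{\cosh(2At)-1}{\cosh(2At)+K}$ and $r:=d\tilde\xi_n/\tanh(C)$. Together with \eqref{eq:assumption_on_alpha}, this gives $2d\,h(\alpha y)\le y^2h(1)$. Reaching your target $2d\,h(\alpha y)\le h(y)$ would still require proving that $h(t)/t^2$ is nonincreasing, which you have not done.

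The more serious problem is the other half of your split, $\frac{1}{2d}g(y)\ge\tilde\xi_n$, i.e. $h(y)\ge r$. It is false on part of the admissible range, so no case analysis can establish it.
\begin{itemize}
\item Hypothesis \eqref{eq:assumption_xi_L} allows $r=h(1)$. At the left endpoint one then has $A^2y^2=\frac{10(1+K)}{9}r$, which stays bounded.
\item Hence $h(y)$ stays bounded away from $1$, while $r=h(1)\to1$ as $A\to\infty$.
\item Concretely, take $C$ small (so $K\approx1$ and $h(t)\approx\tanh^2(At)$) and $A=5$. Then $r\approx0.9998$, $y\approx0.298$, $Ay\approx1.49$, and $h(y)\approx0.82<r$.
\end{itemize}
Your fallback only yields $h(y)\ge\theta h(1)$ with $\theta<1$, which is not what the split requires. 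The lemma itself is fine at this point. The exact condition is $2d\,h(\alpha y)+r\le 2h(y)$, and here the $\alpha$-term is at most $y^2h(1)\approx0.09$. What fails is the rigid 50/50 allocation.

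The missing idea is the paper's two-point reduction. By Lemma~\ref{lemma:extrema_of_F}, the function $G(y)=\frac{d-1}{d}f(0)+\tilde\xi_n+\frac1d f(y)-f(\alpha y)$ decreases and then increases, so $\{G\le0\}$ is an interval.
\begin{itemize}
\item At $y=1$, your split is exactly what the hypotheses deliver: $2d\,h(\alpha)\le h(1)$ and $r\le h(1)$.
\item Near the left endpoint, the paper uses only $\cosh(2A\alpha y)\le 1+\beta y^2$ with $\beta=\frac{1+K}{2d}h(1)$, and $\cosh(2Ay)\ge 1+2A^2y^2$. This turns the condition into a quadratic inequality in $y^2$.
\item Its smaller root is controlled by the lower bound $\frac{9A^2}{10d(1+K)}$ on the negative of the middle coefficient (Lemma~\ref{lemma:bound_on_-B}). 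This is where the constant $10/9$ enters.
\end{itemize}
Unimodality then covers all intermediate $y$, which is precisely the regime where a pointwise split breaks down.
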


\begin{proof}
Using
\[
\tanh(y)=\frac{\sinh(y)}{\cosh(y)}
\]
together with the~identities
\[
\sinh(u)+\sinh(v)=2\sinh\!\left(\frac{u+v}{2}\right)\cosh\!\left(\frac{u-v}{2}\right)
\]
and
\[
\cosh(u)+\cosh(v)=2\cosh\!\left(\frac{u+v}{2}\right)\cosh\!\left(\frac{u-v}{2}\right),
\]
and using that $\sinh$ is odd, we obtain
\begin{equation}\label{eq:f(y)}
f(y)
= \frac{\sinh(Ay+C)}{\cosh(Ay+C)} - \frac{\sinh(Ay-C)}{\cosh(Ay-C)}
= \frac{2\sinh(2C)}{\cosh(2Ay)+\cosh(2C)}.
\end{equation}
Then, condition \eqref{eq:cube_condition} is equivalent to
\begin{equation}\label{eq:red}
\frac{d-1}{d}\frac{1}{1+K}
+
\frac1d\frac{1}{\cosh(2Ay)+K}
-
\frac{1}{\cosh(2A\alpha y)+K}
\le
-\frac{\tilde\xi_n}{2\sinh(2C)}.
\end{equation}

\medskip
\noindent
\textbf{Step 1: Verification of~sharp inequality of~\eqref{eq:red} at~$y=1$.}

By definition of~$\alpha$,
\[
\cosh(2A\alpha)
\leq
1+\frac{1+K}{2d}\cdot
\frac{\cosh(2A)-1}{\cosh(2A)+K}.
\]
Since $\tilde\xi_n \le \frac{\tanh(C)}{2d}\frac{\cosh(2A)-1}{\cosh(2A)+K}$ and
\begin{equation}\label{eq:tanh/sinh}
    \frac{\tanh(C)}{\sinh(2C)}=\frac{1}{1+K},
\end{equation}
we obtain
\[
-\frac{\tilde\xi_n}{2\sinh(2C)}
\ge
-\frac{1}{2d}\frac{1}{1+K}
\frac{\cosh(2A)-1}{\cosh(2A)+K}.
\]
Hence it suffices to~verify, for~$y=1$, that
\begin{equation}\label{eq:goal_y1}
\frac{1}{\cosh(2A\alpha)+K}
>
\frac{d-1}{d}\frac{1}{1+K}
+
\frac1d\frac{1}{\cosh(2A)+K}
+
\frac{1}{2d}\frac{1}{1+K}
\frac{\cosh(2A)-1}{\cosh(2A)+K}.
\end{equation}

For the left-hand side of~\eqref{eq:goal_y1}, we have
\[
\frac{1}{\cosh(2A\alpha)+K}
\geq
\frac{2d(\cosh(2A)+K)}
{(1+K)\bigl(2d(\cosh(2A)+K)+\cosh(2A)-1\bigr)}.
\]
The right-hand side of~\eqref{eq:goal_y1} equals
\[
\frac{2d(\cosh(2A)+K)-\cosh(2A)+1}
{2d(1+K)(\cosh(2A)+K)}.
\]
Therefore, to prove \eqref{eq:goal_y1}, it suffices to verify that
\[
\frac{2d(\cosh(2A)+K)}
{2d(\cosh(2A)+K)+\cosh(2A)-1}
\ge
\frac{2d(\cosh(2A)+K)-\cosh(2A)+1}
{2d(\cosh(2A)+K)}.
\]
Multiplying by the positive denominators, this inequality is equivalent to
\[
(\cosh(2A)-1)^2 \ge 0,
\]
which clearly holds. This proves
\eqref{eq:goal_y1} and~therefore establishes the sharp inequality in
\eqref{eq:red} for~$y=1$.

\medskip
\noindent
\textbf{Step 2: Lower bound on~$y$.}

Define
\[
G(y):=\frac{d-1}{d}f(0)+ \tilde\xi_n+\frac{1}{d}f(y)-f(\alpha y)
= \frac{d-1}{d}f(0)+ \tilde\xi_n + F(y),
\]
where $F$ is the~same function as~in~Lemma~\ref{lemma:extrema_of_F}. Therefore, $G$ is continuous on~$(0,\infty)$. From Step 1 we know that $G(1)<0$, while
\[
G(0) = \tilde\xi_n \quad \text{and} \quad \lim_{y\to\infty}G(y)=\frac{d-1}{d}f(0)+\tilde\xi_n>0.
\]
Since $G(0)>0$ and~$G(1)<0$, there exists $y_1\in(0,1)$ with $G(y_1)=0$. Moreover, since $G(1)<0$ and~$\lim_{y\to\infty}G(y)>0$, there exists also $y_2>1$ such that $G(y_2)=0$. Since $G'(y)=F'(y)$, both functions share the~same critical points. By~Lemma~\ref{lemma:extrema_of_F}, the~function $F$ is either increasing, or~it has a~unique critical point $y_0>0$, and~the~latter occurs if and~only if $\alpha^2 d<1$.

Since $G$ has at~least two distinct zeros, the~monotone case is impossible. Hence $\alpha^2 d<1$, and~$F$ (and $G$) has a~unique global minimum at~some $y_0>0$, so $G$ is strictly decreasing on~$(0,y_0]$ and~strictly increasing on~$[y_0,\infty)$. This implies the~existence of~exactly two points $y_1<y_0<y_2$ such that
\[
G(y_1)=G(y_2)=0.
\]

It remains to~derive an~explicit upper bound on~$y_1$.

\medskip

To do so, we use quadratic bounds for~the~hyperbolic cosine. Since $2A\alpha$ is controlled by~assumption, the~argument of~$\cosh(2A\alpha y)$ remains in~a~regime where a~quadratic estimate is sufficiently accurate. Observe that
\[
\frac{\cosh(t)-1}{t^2}
=
\frac12+\frac{t^2}{4!}+\frac{t^4}{6!}+\cdots
\]
is increasing on~$\R^+$. Hence, for~$0<y<1$,
\[
\frac{\cosh(2A\alpha y)-1}{(2A\alpha y)^2}
\le
\frac{\cosh(2A\alpha)-1}{(2A\alpha)^2},
\]
which implies
\[
\cosh(2A\alpha y)-1
\le
y^2\bigl(\cosh(2A\alpha)-1\bigr).
\]
Using the~definition of~$\alpha$,
\[
\cosh(2A\alpha)-1
\le
\frac{1+K}{2d}
\frac{\cosh(2A)-1}{\cosh(2A)+K},
\]
and therefore
\begin{equation}\label{eq:quadratic_bound_cosh}
\cosh(2A\alpha y)
\le
1+\beta y^2,
\end{equation}
where
\[
\beta
:=
\frac{1+K}{2d}
\frac{\cosh(2A)-1}{\cosh(2A)+K}.
\]

On the~other hand, since all terms in~the~Taylor expansion of~$\cosh$ are nonnegative, as can be seen above, we get for~$y<1$,
\[
\cosh(2Ay)
\ge
1+2A^2y^2.
\]
Using the~elementary bound
\[
\frac{1}{1+u}\ge 1-u
\quad \text{for } u\ge0,
\]
we obtain
\[
\frac{1}{\cosh(2A\alpha y)+K}
\ge
\frac{1}{1+K+\beta y^2}
\ge
\frac{1}{1+K}\left(1-\frac{\beta}{1+K}y^2\right).
\]

Hence a~sufficient condition for~\eqref{eq:red} is
\[
\frac{1}{1+K}\left(1-\frac{\beta}{1+K}y^2\right)
-
\frac1d\frac{1}{1+K+2A^2y^2}
\ge
\frac{\tilde\xi_n}{2\sinh(2C)}
+
\frac{d-1}{d}\frac{1}{1+K}.
\]
After clearing denominators this reduces to
\[0\geq \frac{2A^2\beta}{(1+K)^2}y^4 + \left( \frac{\beta}{(1+K)} + \frac{\tilde\xi_n}{2\sinh(2C)}2A^2 - \frac{1}{d(1+K)}2A^2 \right) y^2 + \frac{1+K}{2\sinh(2C)}\tilde\xi_n,\]
which can be denoted as
\[
0\ge
\eta\,(y^2)^2+\mu\,(y^2)+\nu,
\]
with $\eta,\nu>0$. Hence necessarily $\mu<0$. Using the~assumption on~$\tilde\xi_L$ with \eqref{eq:tanh/sinh} one obtains
\[
-\mu
\ge
\frac{1}{d(1+K)}
\left(
2A^2
-
\frac12
\frac{\cosh(2A)-1}{\cosh(2A)+K}(A^2+1+K)
\right).
\]
By Lemma~\ref{lemma:bound_on_-B} for~$x:=A$ this yields
\[
-\mu\ge \frac{9A^2}{10d(1+K)}.
\]

The roots satisfy
\[
y_\pm^2=\frac{-\mu\pm\sqrt{\mu^2-4\eta\nu}}{2\eta}.
\]
Since we seek the~smallest admissible $y$, we consider $y_-$. Using
\[
-\mu\ge\sqrt{\mu^2-4\eta\nu}\ge0,
\]
we obtain
\[
\frac{\nu}{-\mu}\le y_-^2
=
\frac{4\eta\nu}{2\eta(-\mu+\sqrt{\mu^2-4\eta\nu})}
\le
\frac{2\nu}{-\mu}.
\]
Using the~upper bound on~the~root gives the~sufficient lower bound
\[
\sqrt{\frac{10(1+K)^2}{9\sinh(2C)}\cdot
\frac{d\,\tilde\xi_n}{A^2}}
\le y<1.
\]
Finally, application of \eqref{eq:tanh/sinh} completes the~proof.
\end{proof}

\begin{lemma}\label{lemma:extrema_of_F}
Let $q,d,B\in\N$, $0<\alpha<1$ and~$c>0$. For~simplicity, denote $A = B^{-1/q}(c-\tanh(\frac c2))$ and~$C = B^{-1/q}\tanh(\frac c2)$ with $K:=\cosh(2C)$. Define
\[
F(y)=\frac1d f(y)-f(\alpha y).
\]
Then:
\begin{enumerate}
    \item $F(0) = -\frac{d-1}{d} f(0) \le 0$, $\lim_{y\to\infty}F(y)=0$ and~$F$ is continuous on~$\R^+$.
    \item If $\alpha^2 d<1$, the~function $F$ has a~unique critical point $y_0>0$, which is a~global minimum; otherwise, $F$ is increasing on~$(0,\infty)$. In~particular, for~$d=1$ the~former case always occurs.
\end{enumerate}

\end{lemma}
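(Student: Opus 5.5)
We work with the closed form \eqref{eq:f(y)},
\[
f(y)=\frac{2\sinh(2C)}{\cosh(2Ay)+K},
\]
which makes all claims elementary. It is convenient to substitute $t:=\cosh(2Ay)$ and $s:=\cosh(2A\alpha y)$, both strictly increasing functions of $y\ge0$.

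\textbf{Part 1.} Since $f(0)=\frac{2\sinh(2C)}{1+K}$, we get
\[
F(0)=\Bigl(\tfrac1d-1\Bigr)f(0)=-\tfrac{d-1}{d}f(0)\le0 .
\]
Because $A>0$ and $\alpha>0$, both $\cosh(2Ay)$ and $\cosh(2A\alpha y)$ tend to $\infty$, so $f(y)\to0$ and $f(\alpha y)\to0$, which gives $F(y)\to0$. Continuity is clear.

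\textbf{Part 2.} Differentiating,
\[
f'(y)=-\frac{4A\sinh(2C)\sinh(2Ay)}{(\cosh(2Ay)+K)^2}.
\]
The condition $F'(y)=0$ for $y>0$ is therefore equivalent to
\[
\frac{1}{d}\,\frac{\sinh(2Ay)}{(\cosh(2Ay)+K)^2}=\alpha\,\frac{\sinh(2A\alpha y)}{(\cosh(2A\alpha y)+K)^2}.
\]
Setting
\[
R(y):=\frac{\sinh(2Ay)\,(\cosh(2A\alpha y)+K)^2}{\sinh(2A\alpha y)\,(\cosh(2Ay)+K)^2},
\]
critical points are exactly the solutions of $R(y)=\alpha d$, and $F'(y)$ has the sign of $\alpha-\frac{1}{d}\cdot\frac{\text{(first)}}{\text{(second)}}$, i.e.\ $F'>0$ iff $R(y)<\alpha d$.

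As $y\to0^+$, $\sinh(2Ay)/\sinh(2A\alpha y)\to1/\alpha$ and the squared factors tend to $1$, so $R(0^+)=1/\alpha$. As $y\to\infty$, $R(y)\sim e^{-2A(1-\alpha)y}\cdot\mathrm{const}\to0$ because $\alpha<1$.

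The key step is to show that $R$ is strictly decreasing on $(0,\infty)$. Write
\[
\ln R=\phi(2Ay)-\phi(2A\alpha y),\qquad \phi(u):=\ln\sinh u-2\ln(\cosh u+K).
\]
Then
\[
\frac{d}{dy}\ln R=2A\bigl(\phi'(2Ay)-\alpha\,\phi'(2A\alpha y)\bigr),
\]
so it suffices to show that $u\mapsto u\,\phi'(u)$ is strictly decreasing on $(0,\infty)$. Indeed, with $u_1=2Ay>u_2=2A\alpha y$ this gives $u_1\phi'(u_1)<u_2\phi'(u_2)$, and multiplying by $1/(2Ay)$ yields $\phi'(2Ay)<\alpha\phi'(2A\alpha y)$. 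Now
\[
u\,\phi'(u)=u\coth u-\frac{2u\sinh u}{\cosh u+K}.
\]
The function $u\coth u$ is increasing, so it is not obvious that the whole expression is decreasing. This is where I expect the main obstacle. I would try to verify $(u\phi'(u))'<0$ directly using $K\ge1$, reducing it to an inequality between hyperbolic polynomials and checking it by Taylor or series comparison, as was done elsewhere in the paper. If this fails, the fallback is to show that $F'$ has at most one sign change via a substitution in $t=\cosh(2Ay)$, using that a Descartes-type count applies.

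Granting monotonicity, $R$ decreases continuously from $1/\alpha$ to $0$, so the equation $R=\alpha d$ has a unique solution exactly when $\alpha d<1/\alpha$, i.e.\ when $\alpha^2d<1$. In that case $F'<0$ before $y_0$ (since $R>\alpha d$ there) and $F'>0$ after. Thus $y_0$ is the global minimum on $[0,\infty)$, being a value below both $F(0)$ and the limit $0$. When $\alpha^2d\ge1$ we have $R<1/\alpha\le\alpha d$ everywhere, so $F'>0$ and $F$ is increasing. For $d=1$ we have $\alpha^2<1$, so the first case always occurs.
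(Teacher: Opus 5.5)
\paragraph{Verdict.} There is a genuine gap: the one nontrivial step is left unproven, and the route you sketch for it cannot work.

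\paragraph{What matches the paper.} Your reduction is exactly the paper's. You show $F'(y)\gtrless0$ iff $R(y)\lessgtr\alpha d$ with $R=\lambda(y)/\lambda(\alpha y)$, compute $R(0^+)=1/\alpha$ and $R(\infty)=0$, and then the case split on $\alpha^2d<1$ is immediate.

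\paragraph{The missing step.} You need that $u\mapsto u\,\phi'(u)$, with $\phi(u)=\ln\sinh u-2\ln(\cosh u+K)$, is strictly decreasing on $(0,\infty)$. You flag this as the obstacle but do not prove it, so as written the argument is conditional.

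\paragraph{Why ``using $K\ge1$'' cannot work.} The claim is false for large $K$. Expanding at $0$,
\[
u\,\phi'(u)=1+\Bigl(\frac13-\frac{2}{1+K}\Bigr)u^2+O(u^4),
\]
so for $K>5$ the function $u\phi'(u)$ increases near $0$. Then $(\ln R)'(y)=\frac1y\bigl[g(2Ay)-g(2A\alpha y)\bigr]>0$ for small $y$, where $g(u)=u\phi'(u)$, so $R$ is not monotone. The lemma's conclusion itself would also fail: with $\alpha^2d=1$ one gets $R>\alpha d$ near $0$, hence $F'<0$ there. So no argument that uses only the lower bound $K\ge1$ can succeed.

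\paragraph{Why the fallback does not work.} A Descartes count in $t=\cosh(2Ay)$ is not available, because $\cosh(2A\alpha y)$ is not a polynomial in $t$ for general $\alpha\in(0,1)$.

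\paragraph{How to close the gap.} The missing ingredient is an upper bound on $K$. Since $C=B^{-1/q}\tanh(c/2)<1$, you have $1<K=\cosh(2C)<\cosh 2<4$. With this bound, the paper proves the monotonicity in Lemma~\ref{lemma:x*omega(x)_decreasing} as follows.
\begin{enumerate}
\item Clear denominators in $(u\Lambda(u))'<0$, where $\Lambda=\lambda'/\lambda$, to get the explicit hyperbolic expression $F(x,\Upsilon)$ in \eqref{eq:F}, with $\Upsilon=K$.
\item Observe that this expression is convex in $\Upsilon$, since $\partial_\Upsilon^2F=2(\cosh x\sinh x-x)>0$. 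It therefore suffices to check the endpoints $\Upsilon=1$ and $\Upsilon=4$.
\item At $\Upsilon=1$, the inequality follows termwise from $\sinh x<x\cosh x$.
\item At $\Upsilon=4$, it follows from $\partial_xF(x,4)<0$ together with $F(0,4)=0$.
\end{enumerate}
You need to supply this argument, or an equivalent one that genuinely uses $K<5$, to complete the proof.
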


\begin{proof}
Since $f(0)=2\tanh(C)$, we have
\[
F(0)=\left(\tfrac1d-1\right)f(0) \le 0,
\qquad
\lim_{y\to\infty}f(y)=0,
\]
hence $\lim_{y\to\infty}F(y)=0$.

Define
\[
\lambda(y)
=
\frac{\sinh(2Ay)}{\left(\cosh(2Ay)+\cosh(2C)\right)^2}.
\]
Using \eqref{eq:f(y)}, direct computation yields
\begin{equation}\label{eq:der_of_f}
    f'(y)=-4A\sinh(2C)\,\lambda(y),
\end{equation}
and thus
\[
F'(y)
=
\frac{4A\sinh(2C)\,\lambda(\alpha y)}{d}
\left(
\alpha d - \frac{\lambda(y)}{\lambda(\alpha y)}
\right).
\]
Hence,
\[
F'(y)\gtrless 0
\quad\Longleftrightarrow\quad
R(y):=\frac{\lambda(y)}{\lambda(\alpha y)} \lessgtr \alpha d.
\]

We next study the~monotonicity of~$R$. Differentiating yields
\[
(\ln R(y))'
=
\frac{\lambda'(y)}{\lambda(y)}
-
\alpha
\frac{\lambda'(\alpha y)}
     {\lambda(\alpha y)}.
\]
Since $2C\le 2$, set $\upsilon := 2C \in (0,2)$. Then $\lambda$ is of~the~form in~Lemma~\ref{lemma:x*omega(x)_decreasing}, so
\[
x\mapsto x\frac{\lambda'(x)}{\lambda(x)}
\]
is decreasing. Since $\alpha y<y$, we have
\[
\alpha y
\frac{\lambda'(\alpha y)}
     {\lambda(\alpha y)}
>
y
\frac{\lambda'(y)}
     {\lambda(y)}.
\]
Dividing by~$y>0$ gives
\[
(\ln R(y))'<0.
\]
Hence $R$ is strictly decreasing on~$(0,\infty)$.

Moreover,
\[
\lim_{y\to 0^+}R(y)=\frac{1}{\alpha},
\qquad
\lim_{y\to\infty}R(y)=0.
\]
Now we distinguish cases.

If $\alpha^2 d \ge 1$, then $\alpha d \ge \frac1\alpha$, hence
\[
R(y)\le \frac1\alpha \le \alpha d \quad \text{for all } y>0,
\]
so $F'(y)\ge 0$ and~$F$ is increasing on~$(0,\infty)$.

If $\alpha^2 d < 1$, then $\alpha d < \frac1\alpha$. Since $R$ is continuous, strictly decreasing, and
\[
R(0^+)=\frac1\alpha > \alpha d > 0 = \lim_{y\to\infty}R(y),
\]
there exists a~unique $y_0>0$ such that
\[
R(y_0)=\alpha d.
\]
Equivalently $F'(y_0)=0$, and~monotonicity of~$R$ implies
\[
F'(y)<0 \text{ for~} y<y_0,
\qquad
F'(y)>0 \text{ for~} y>y_0.
\]
Thus $y_0$ is a~unique global minimum.
\end{proof}

\begin{lemma}\label{lemma:x*omega(x)_decreasing}
Let $0<\upsilon\le2$ and~define
\[
\lambda(x)
=
\frac{\sinh(x)}
{\bigl(\cosh(x)+\cosh(\upsilon)\bigr)^2},
\qquad x>0.
\]
Then the~mapping
\[
x\mapsto x\,\frac{\lambda'(x)}{\lambda(x)}
\]
is decreasing on~$\R^+$.
\end{lemma}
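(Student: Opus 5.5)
The plan is to compute the logarithmic derivative explicitly and reduce the monotonicity claim to a single inequality between elementary functions. Write $k:=\cosh(\upsilon)\in(1,\cosh 2]$, and abbreviate $s=\sinh x$, $c=\cosh x$ for $x>0$. Since $\ln\lambda(x)=\ln\sinh x-2\ln(\cosh x+k)$, we have
\[
g(x):=x\frac{\lambda'(x)}{\lambda(x)} = x\coth x-\frac{2x\sinh x}{\cosh x+k}.
\]
It then suffices to show $g'(x)\le 0$ on $(0,\infty)$. Differentiating term by term gives
\[
(x\coth x)'=\frac{sc-x}{s^2},
\qquad
\Bigl(\frac{x s}{c+k}\Bigr)'=\frac{sc+ks+x+kxc}{(c+k)^2}.
\]
For the second derivative I use $c^2-s^2=1$. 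Hence $g'\le0$ is equivalent, after multiplying by the positive denominators, to
\[
Q(k):=2s^2\bigl(sc+x\bigr)+2s^2k\bigl(s+xc\bigr)-(sc-x)(c+k)^2\;\ge\;0 .
\]

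The key structural observation concerns the dependence on $k$. For fixed $x$, $Q$ is a quadratic polynomial in $k$ with leading coefficient $-(sc-x)<0$, because $\sinh x\cosh x>x$ for $x>0$. So $Q$ is concave in $k$, and it suffices to verify $Q\ge0$ at the two endpoints $k=1$ and $k=\cosh 2$. This is precisely where the hypothesis $\upsilon\le 2$ enters.

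At $k=1$ I would use the half-angle identities $c+1=2\cosh^2(x/2)$ and $s=2\sinh(x/2)\cosh(x/2)$. A common factor $\cosh^2(x/2)$ should then cancel, leaving an inequality that follows from term-wise comparison of power series with nonnegative coefficients. Alternatively I would expand everything in $e^{x}$, or write $Q(1)$ as a power series in $x$ and check that all coefficients are nonnegative.

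The endpoint $k=\cosh 2\approx 3.76$ is the step I expect to be the main obstacle, since it is a genuinely numerical inequality. I would first check the two asymptotic regimes.
\begin{itemize}
\item As $x\to0^+$, the leading terms are $Q\sim 4(1+k)x^3-\tfrac23(1+k)^2x^3$. This is positive exactly when $k<5$, so there is margin.
\item As $x\to\infty$, $Q\sim e^{4x}/8-e^{4x}/16>0$.
\end{itemize}
For the intermediate range, I would first try to expand $Q(\cosh 2)$ as a power series in $x$ and show that all coefficients are nonnegative. This is plausible, because the cubic coefficient already has a comfortable margin and higher-order terms are dominated by the $s^2\cdot sc$ growth.

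If some coefficient turns out negative, the fallback is to split at a threshold $x_0$, say $x_0=1$. On $[0,x_0]$ I would use truncated Taylor bounds for $\sinh$ and $\cosh$ with explicit remainders. On $[x_0,\infty)$ I would use the bounds $\tfrac12e^x(1-e^{-2x})\le s\le c\le\tfrac12 e^x(1+e^{-2x})$ to show that the $e^{4x}$ terms dominate the lower-order terms.

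Combining the two endpoint verifications with concavity in $k$ gives $Q(k)\ge0$ for all $k\in[1,\cosh2]$ and all $x>0$. Hence $g'\le0$, and $x\mapsto x\lambda'(x)/\lambda(x)$ is decreasing on $\R^+$.
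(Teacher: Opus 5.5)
Your proposal follows the paper's proof essentially step for step: your $Q(k)$ is exactly the negative of the paper's $F(x,\Upsilon)$ with $k=\Upsilon$, and the paper likewise uses that the $\Upsilon^2$-coefficient $\cosh x\sinh x-x$ is positive to reduce to the endpoints. The paper takes $\Upsilon=1$ and $\Upsilon=4$ rather than $\cosh 2$, and verifies them by grouping terms with $\sinh x<x\cosh x$ at $\Upsilon=1$ and via $\partial_xF(x,4)<0$ at $\Upsilon=4$. Your power-series route for the endpoints works without the fallback: the linear coefficient of $Q(k)$ vanishes, and for $n\ge1$ the coefficient of $x^{2n+1}/(2n+1)!$ is $2^{4n-1}+6n\,2^{2n-1}+\tfrac{k}{2}(2n+1)(9^n+3)-2k-k^2 4^n$ (e.g.\ $4(5-k)(1+k)$ for $n=1$). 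This is nonnegative for every $k\in[1,4]$, so this check even makes the concavity reduction unnecessary.
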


\begin{proof}
Set
\[
C_x=\cosh(x),\qquad
S_x=\sinh(x),\qquad
\Upsilon=\cosh(\upsilon).
\]
Since $0<\upsilon\le2$,
\[
1\le \Upsilon\le \cosh(2)<4.
\]

Denote $\Lambda(x) := \frac{\lambda'(x)}{\lambda(x)}$. A~direct computation yields
\[
\Lambda(x)
=
\frac{C_x}{S_x}
-
2\frac{S_x}{C_x+\Upsilon},
\qquad
\Lambda'(x)
=
-\frac1{S_x^2}
-
2\frac{1+C_x\Upsilon}{(C_x+\Upsilon)^2}.
\]
To prove the~claim, it suffices to~show
\[
(x\Lambda(x))'
=
\Lambda(x)+x\Lambda'(x)
<0.
\]
Multiplying by~the~positive quantity $S_x^2(C_x+\Upsilon)^2$ and~using
\eqref{eq:cosh^2-sinh^2=1}, this is equivalent to
\[
F(x,\Upsilon)<0,
\]
where
\begin{equation}\label{eq:F}
F(x,\Upsilon)
=
-C_x^3S_x
-2xC_x^3\Upsilon
+C_xS_x(2+\Upsilon^2)
-3xC_x^2
+2S_x\Upsilon
+x(2-\Upsilon^2).
\end{equation}

We next study the~dependence on~$\Upsilon$. For~fixed $x$, differentiating twice yields
\[
\partial_\Upsilon^2F(x,\Upsilon)
=
2(C_xS_x-x).
\]
Since $S_x>x$ and~$C_x\ge1$, we have $C_xS_x>x$, hence
\[
\partial_\Upsilon^2F(x,\Upsilon)>0.
\]
Thus, for~each fixed $x>0$, the~function $\Upsilon\mapsto F(x,\Upsilon)$
is convex on~$[1,4]$. Therefore,
\[
F(x,\Upsilon)
\le
\max\{F(x,1),F(x,4)\},
\]
and it suffices to~show
\[
F(x,1)<0,
\qquad
F(x,4)<0.
\]

For $\Upsilon=1$, \eqref{eq:F} becomes
\[
F(x,1)
=
-C_x^3S_x
-2xC_x^3
+3C_xS_x
-3xC_x^2
+2S_x+x.
\]
Using
\begin{equation}\label{eq:sinh<x*cosh}
    S_x<xC_x
\end{equation}
(equivalently $\tanh(x)<x$), we obtain
\[
-C_x^3S_x+x<0,
\qquad
-2xC_x^3+2S_x<0,
\qquad
3C_xS_x-3xC_x^2<0.
\]
Summing these inequalities yields $F(x,1)<0$.

For $\Upsilon=4$,
\[
F(x,4)
=
-C_x^3S_x
-8xC_x^3
+18C_xS_x
-3xC_x^2
+8S_x
-14x.
\]
Differentiation with respect to~$x$ and~simplification using \eqref{eq:cosh^2-sinh^2=1} with the bound \eqref{eq:sinh<x*cosh} give
\begin{align*}
\partial_x F(x,4)
    & = -3C_x^2S_x^2 - C_x^4 - 8C_x^3 - 24xC_x^2S_x + 18S_x^2+18C_x^2 - 3C_x^2 - 6xC_xS_x + 8C_x-14 \\
& <
-4C_x^4- 32C_x^3 + 30C_x^2 + 32C_x - 26 = 
-2(C_x^2-1)(2C_x^2+16C_x-13).
\end{align*}
Since $C_x\ge1$, we obtain
\[
\partial_x F(x,4)<0.
\]
Moreover $F(0,4)=0$, hence
\[
F(x,4)<0
\qquad \text{for } x>0.
\]

Therefore $F(x,\Upsilon)<0$ for~all admissible $\Upsilon$, implying
\[
(x\Lambda(x))'<0.
\]
Hence $x\mapsto x\Lambda(x)$ is decreasing on~$\R^+$.
\end{proof}

\begin{claim}\label{claim:bounds_on_cosh/cosh}
    Let $0<N(x) \leq x$ for~$x>0$. Then for~all $x>0$,
    \begin{equation}\label{eq:cosh/cosh<tanh^2}
        \frac{1}{1+\tanh^2(N(x))} \tanh^2(x) \leq\frac{\cosh(2x)-1}{\cosh(2x)+\cosh(2N(x))} \leq \tanh^2(x).
    \end{equation}
\end{claim}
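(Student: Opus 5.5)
The plan is to rewrite the middle quotient in terms of $\tanh^2$ via the double-angle identities. We use $\cosh(2x)-1=2\sinh^2(x)$ and $\cosh(2x)+\cosh(2N)=2\cosh^2(x)+2\sinh^2(N)$, with $N=N(x)$. Dividing numerator and denominator by $2\cosh^2(x)$ gives
\[
\frac{\cosh(2x)-1}{\cosh(2x)+\cosh(2N)}
=\frac{\tanh^2(x)}{1+\sinh^2(N)/\cosh^2(x)}.
\]
Both inequalities in \eqref{eq:cosh/cosh<tanh^2} then reduce to bounds on the nonnegative quantity $\sinh^2(N)/\cosh^2(x)$.

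For the upper bound, note that $\sinh^2(N)/\cosh^2(x)\ge 0$. Hence the denominator is at least $1$, and the quotient is at most $\tanh^2(x)$.

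For the lower bound, it suffices to show $\sinh^2(N)/\cosh^2(x)\le \tanh^2(N)$. This is equivalent to $\sinh^2(N)\cosh^2(N)\le \sinh^2(N)\cosh^2(x)$, i.e.\ $\cosh(N)\le\cosh(x)$. That holds because $0<N\le x$ and $\cosh$ is increasing on $\R^+$. Substituting this into the denominator gives $1+\sinh^2(N)/\cosh^2(x)\le 1+\tanh^2(N)$, which yields the claimed lower bound.

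The only step needing care is the algebraic identity for the denominator. We verify it by writing $\cosh(2x)=2\cosh^2(x)-1$ and $\cosh(2N)=1+2\sinh^2(N)$. The rest is elementary monotonicity of $\cosh$ on $\R^+$, so I expect no real obstacle.
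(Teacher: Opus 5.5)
Your proposal is correct and follows essentially the same route as the paper. The paper writes the quotient as $\tanh^2(x)\cdot\frac{2}{2+r(x)}$ with $r(x)=2\sinh^2(N(x))/\cosh^2(x)$, which is exactly your $\tanh^2(x)/\bigl(1+\sinh^2(N)/\cosh^2(x)\bigr)$. Like you, it then bounds the correction term below by $0$ and above by $\tanh^2(N)$, using $N\le x$ and the monotonicity of $\cosh$ on $\mathbb{R}^+$.
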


\begin{proof}
    Define
\[
r(x):=\frac{\cosh(2N(x))-1}{\cosh^2(x)}>0 .
\]
Using
\begin{equation}\label{eq:identities_cosh^2_sinh^2}
    2\cosh^2(x)=\cosh(2x)+1,\qquad
2\sinh^2(x)=\cosh(2x)-1,
\end{equation}
we obtain
\begin{equation*}
    \frac{\cosh(2x)-1}{\cosh(2x)+\cosh(2N(x))}
=
\frac{2\sinh^2(x)}{2\cosh^2(x)+\cosh(2N(x))-1}
=
\tanh^2(x)\frac{2}{2+r(x)}
\le \tanh^2(x).
\end{equation*}

For the~lower bound, note that $\cosh(2N(x))-1=2\sinh^2(N(x))$, hence
\[
r(x)
=
2\frac{\sinh^2(N(x))}{\cosh^2(x)}
\le
2\tanh^2(N(x)),
\]
where we used the~assumption $N(x)\le x$. Therefore,
\[
\frac{2}{2+r(x)} \ge \frac{1}{1+\tanh^2(N(x))},
\]
which yields the~desired lower bound.
\end{proof}

\begin{lemma}\label{lemma:bound_on_-B}
Let $0<N(x) \leq 1$ for~$x>0$. Then for~all $x>0$,
\[
2x^2-\frac12\frac{\cosh(2x)-1}{\cosh(2x)+\cosh(2N(x))}\bigl(x^2+1+\cosh(2N(x))\bigr)
\ge
\frac{9}{10}\,x^2.
\]
\end{lemma}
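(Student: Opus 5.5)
The plan is to prove the equivalent inequality
\[
\tfrac12\,R(x)\bigl(x^2+1+K\bigr)\le \tfrac{11}{10}\,x^2,
\qquad
R(x):=\frac{\cosh(2x)-1}{\cosh(2x)+K},\quad K:=\cosh(2N(x)),
\]
by removing the dependence on $N$ first and then reducing to a one-variable estimate.

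\textbf{Step 1: rewrite in terms of $S:=\sinh^2(x)$.} By \eqref{eq:identities_cosh^2_sinh^2} we have $\cosh(2x)-1=2S$ and $\cosh(2x)+K=2S+1+K$. The left-hand side therefore becomes
\[
\Psi(S,K):=\frac{S\,(x^2+1+K)}{2S+1+K}.
\]
I will not use the crude bound $R\le\tanh^2(x)$ from Claim~\ref{claim:bounds_on_cosh/cosh}. For small $x$ it loses the factor $1+K$ in the denominator, and that factor is exactly what makes the inequality hold near $0$: there $\Psi\approx x^2$.

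\textbf{Step 2: eliminate $K$.} Since $0<N(x)\le 1$, we have $1\le K\le\cosh 2=:a_0$, with $a_0+1<4.77$. Differentiating $\Psi$ in $K$ gives
\[
\partial_K\Psi=\frac{S\,(2S-x^2)}{(2S+1+K)^2}.
\]
This is positive because $\sinh^2 x\ge x^2>x^2/2$. Hence $\Psi$ is increasing in $K$, and it suffices to treat $K=a_0$. Setting $a:=1+a_0<4.77$, the claim reduces to
\[
S\,(x^2+a)\le \tfrac{11}{10}x^2\,(2S+a).
\]
Rearranged, this reads
\[
S\,\bigl(a-\tfrac{6}{5}x^2\bigr)\le \tfrac{11}{10}\,a\,x^2 .
\]

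\textbf{Step 3: large $x$.} If $x^2\ge 5a/6$ (for instance $x\ge 2$ suffices, since $5a/6<3.98$), the left-hand side is nonpositive. The inequality is then trivial.

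\textbf{Step 4: the bounded range $0<x<2$ (main obstacle).} The remaining task is to show
\[
\frac{\sinh^2 x}{x^2}\le \frac{\tfrac{11}{10}\,a}{a-\tfrac65 x^2}
\qquad\text{whenever } a-\tfrac65 x^2>0 .
\]
At $x\to0$ the left side tends to $1$ and the right side to $1.1$, so there is slack near the origin. The difficulty is to verify this uniformly up to the point where the right-hand denominator vanishes.

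I would first try to compare power series. We have $\sinh^2x/x^2=\sum_{k\ge0} \frac{2^{2k+1}}{(2k+2)!}x^{2k}$. The right side, $1.1\sum_k (1.2x^2/a)^k$, is a geometric series. Comparing coefficients termwise, the needed inequalities $\frac{2^{2k+1}}{(2k+2)!}\le 1.1\,(1.2/a)^k$ hold for all $k\ge0$, with $1.2/a>1/4$. The factorial eventually dominates, and a few initial terms can be checked directly.

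If a termwise check fails for some small $k$, the fallback is to split $(0,2)$ into a few subintervals. On each subinterval both sides are monotone, so it suffices to compare the value of the left side at the right endpoint with the value of the right side at the left endpoint. This finishes the proof.
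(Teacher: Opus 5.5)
Your Steps 1--3 are correct and amount to the paper's own reduction. The paper rewrites the claim as $\frac{11}{10}x^2\cosh^2(N)+\sinh^2(x)\bigl(\frac35x^2-\cosh^2(N)\bigr)\ge0$. This is your inequality $S\bigl(a-\frac65x^2\bigr)\le\frac{11}{10}ax^2$ with $a$ replaced by $1+K=2\cosh^2(N)$, and the paper's trivial case $x^2\ge\frac53\cosh^2(N)$ is your Step 3. Removing $K$ first via $\partial_K\Psi>0$ is a tidy version of what the paper does with $\cosh^2(N)\le\cosh^2(1)<\frac{12}{5}$.

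The gap is in Step 4, where all the actual work lies. The termwise comparison you assert is false at $k=1$. The left coefficient is $\frac13$, while the right one is $1.1\cdot\frac{1.2}{a}\approx0.277$; your observation $\frac{1.2}{a}>\frac14$ only yields $0.275$. Checking more terms will not fix this. For the worst case $a=1+\cosh 2$ that Step 2 forces, the only surplus near $x=0$ is in the constant term ($1.1$ versus $1$). That surplus has to be traded against the $x^2$-deficit rather than compared coefficient by coefficient.

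Your fallback (a monotone subinterval check) is legitimate in principle, but it is an uncertified numerical verification you have not carried out. The difference between the two sides dips to about $0.08$ near $x\approx0.8$, so the check needs on the order of ten carefully evaluated nodes. As it stands, ``this finishes the proof'' is not yet earned.

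A clean repair within your framework: set $t=x^2$, $c_k=\frac{2^{2k+1}}{(2k+2)!}$ and $d_k=1.1(1.2/a)^k$. Then $d_0-c_0=0.1$, $d_1-c_1\ge-0.057$ and $d_2-c_2\ge0.025$. Moreover $d_k\ge c_k$ for all $k\ge2$, since $\frac{c_{k+1}}{c_k}=\frac{4}{(2k+3)(2k+4)}\le\frac1{14}<\frac{1.2}{a}$. Hence $\sum_k(d_k-c_k)t^k\ge0.1-0.057t+0.025t^2>0$, because $0.057^2<4\cdot0.1\cdot0.025$.

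Alternatively, follow the paper. Use $a\le\frac{24}{5}$ to get a right-hand side of at least $\frac{22}{5(4-x^2)}$. Bound $\frac{\sinh^2x}{x^2}\le1+\frac{x^2}{3}+\frac{x^4}{16}$ on $[0,2]$; this holds because the normalized Taylor remainder is increasing and stays below $\frac1{16}$ at $x=2$. Clearing denominators then reduces everything to $-\frac{x^6}{16}-\frac{x^4}{12}+\frac{x^2}{3}-\frac25\le0$, whose maximum is about $-0.21$.
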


\begin{proof}
By \eqref{eq:identities_cosh^2_sinh^2}, we have
\[
\frac{\cosh(2x)-1}{\cosh(2x)+\cosh(2N(x))} = \frac{\sinh^2(x)}{\cosh^2(x) + \sinh^2(N(x))},
\]
and therefore
\[
\frac12\frac{\cosh(2x)-1}{\cosh(2x)+\cosh(2N(x))}\bigl(x^2+1+\cosh(2N(x))\bigr)
=
\frac{\sinh^2(x)}{\cosh^2(x) + \sinh^2(N(x))} \left(\frac12 x^2 + \cosh^2(N(x)) \right).
\]
Using
\begin{equation}\label{eq:cosh^2-sinh^2=1}
    \cosh^2(x) - \sinh^2(x) = 1,
\end{equation}
a direct rearrangement yields
\[
2x^2-\frac12\frac{\cosh(2x)-1}{\cosh(2x)+\cosh(2N(x))}\bigl(x^2+1+\cosh(2N(x))\bigr)
=
\frac32 x^2 - \frac{\cosh^2(N(x))}{\cosh^2(x) + \sinh^2(N(x))} \bigl( \sinh^2(x) - \frac 12 x^2 \bigr).
\]
Since $x\le\sinh(x)$, we have $\sinh^2(x) - \frac12 x^2 \ge 0$, so it suffices to~show
\begin{equation}\label{eq:(Lx)^2_to_prove}
\frac{\cosh^2(N(x))}{\cosh^2(x) + \sinh^2(N(x))} \bigl( \sinh^2(x) - \frac12 x^2 \bigr)
\le
\frac 35 x^2.
\end{equation}

Using \eqref{eq:cosh^2-sinh^2=1} again, this is equivalent to
\[
0
\le
\frac{11}{10}x^2 \cosh^2(N(x)) + \sinh^2(x)\left(\frac35x^2-\cosh^2(N(x))\right).
\]
If $x^2 \ge \frac53 \cosh^2(N(x))$, then both terms are nonnegative and~the~claim follows. Hence, assume $x^2 < \tfrac 53 \cosh^2(N(x))$.
It suffices to~prove
\begin{equation}\label{eq:final_(Lx)^2_to_prove}
\frac{\sinh^2(x)}{x^2}
\le
\frac{\frac{11}{10}\cosh^2(N(x))}{\cosh^2(N(x)) - \frac 35 x^2}
\qquad \text{for } x^2 \le \frac53\cosh^2(N(x)).
\end{equation}
Since $\cosh^2(N(x))\le \cosh^2(1) < \frac{12}{5}$, we have for~$x^2 \le \frac53\cosh^2(N(x))$,
\[
\frac{\frac{11}{10}\cosh^2(N(x))}{\cosh^2(N(x)) - \frac 35 x^2}
\ge
\frac{22}{5(4-x^2)}.
\]
Note that $|x|\le 2$. On the~other hand, using the~Taylor expansion
\[
\frac{\sinh^2(x)}{x^2}
= 1+\frac13x^2 + \frac{2}{45}x^4 + \frac{1}{315}x^6+\cdots,
\]
all coefficients are nonnegative, hence the~remainder
\[
\frac{\sinh^2(x)}{x^2} - 1-\frac{1}{3}x^2
\]
is increasing for~$x\ge 0$. Therefore, the~function
\[
x \mapsto \frac{\frac{\sinh^2(x)}{x^2} - 1-\frac{1}{3}x^2}{x^4}
\]
attains its supremum on~$[0,\sqrt{5/3}\cosh(N(x))]$ at~the~boundary. A~direct evaluation gives
\[
\beta := \sup_{|x|\le 2}
\frac{\frac{\sinh^2(x)}{x^2} - 1-\frac{1}{3}x^2}{x^4}
< \frac{1}{16}.
\]
Hence
\[
\frac{\sinh^2(x)}{x^2} \le 1 +\frac{1}{3}x^2 + \frac{1}{16}x^4
\qquad \forall |x|\le \sqrt{\frac53}\cosh(N(x)) \le 2.
\]

It remains to~show
\[
1 +\frac{1}{3}x^2 + \frac{1}{16}x^4 \le \frac{22}{5(4-x^2)},
\]
which is equivalent to
\[
p(x):=-\frac{1}{16}x^6 - \frac{1}{12}x^4 + \frac{1}{3}x^2 - \frac25 \le 0.
\]
A direct computation shows that $p'(x)=0$ iff $x=0$ or~$x^2=\frac{4}{9}(\sqrt{10}-1)$, so the~only positive critical point is $x_0=\left(\frac{4}{9}(\sqrt{10}-1)\right)^{1/2}$. Evaluating $p$ at~$x_0$, we obtain $p(x_0)<-0.21$.
Thus $p(x)\le 0$.

Hence \eqref{eq:final_(Lx)^2_to_prove} holds, which implies
\eqref{eq:(Lx)^2_to_prove} and~completes the proof.
\end{proof}

\subsection{Completing the~final lower bound}\label{subsection:final_lower_bound}

In this subsection we combine the~results of~the~previous subsections to~complete the~proof of~Theorem~\ref{thm:main_result_full_statement}. In~particular, we show that the~coarse bump constructed in~Subsection~\ref{subsection:bump_analysis} can be sharpened by~iterating $\sigma_a$ with $a>1$, as studied in~Subsection~\ref{subsection:properties_of_sigma_n}. This refinement yields positivity on~the~mass cube and~exponential convergence to~the~fixed point $-x_\star$ outside the~region of~interest. The~overall procedure is illustrated in~Figure~\ref{fig:sharpening}.

\begin{theorem}[Bump sharpening]\label{thm:num_of_samples_with_bump_property}
Let $n,q,d,B\in\N$ and~$\kappa\in(0,1)$ with $c>0$ with $\tilde c :=B^{1-2/q}c>1$ and~$b:=\tanh\left(\frac c2 \right)$. Let $\bar x_{\tilde c},\tau_{\tilde c}, \xi_{n, \tilde c}, \e_{n, \tilde c}$ be associated with slope $\tilde c$. Define moreover
\[
\xi_{\frac{c}{rB^{1/q}}}
:=
\frac{c}{rB^{1/q}}\left(1-\left(\bar x_{\tilde c}-\tau_{\tilde c}\right)^2\right)\,,
\qquad
\tilde\xi_n
:=
\xi_{\frac{c}{rB^{1/q}}}^{-1}\,
\xi_{n, \tilde c} \, .
\]
Let
\[\alpha = \frac35 \cdot \frac{\tanh(B^{-1/q}(c-b))}{B^{-1/q}(c-b)\sqrt d}\]
and
\begin{equation}\label{eq:assumption_xi_n_theorem}
    \tilde \xi_n \le \frac{\tanh(2B^{-1/q}b)}{2d}\tanh^2(B^{-1/q}(c-b)).
\end{equation}
If
\begin{equation}\label{eq:necessary_edge_length}
\sqrt{\frac{10(1+\cosh(2B^{-1/q}b))}{9\tanh(B^{-1/q}b)(c-b)^2} \, B^{2/q} \,d\,\tilde\xi_n} \leq M \leq 1,
\end{equation}
then it holds:

\begin{enumerate}
\item for~all $x\in \alpha[-M,M]^d$,
\[
\sigma_{\tilde c}^{\,n} \circ \sigma_{\frac{c}{rB^{1/q}}}\!\left(\frac 1d\sum_{i=1}^d f(x_i)- f(\alpha M)\right)\;\ge\;0,
\]

\item for~all $x\notin [-M,M]^d$,
\[
\left|
\sigma_{\tilde c}^{\,n} \circ \sigma_{\frac{a}{rB^{1/q}}}\!\left(\frac 1d\sum_{i=1}^d f(x_i)-f(\alpha M)\right)
+ x_{\star,\tilde c}
\right|
\le \e_{n, \tilde c}.
\]
\end{enumerate}
\end{theorem}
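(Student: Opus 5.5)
Write $g(x):=\frac1d\sum_{i=1}^d f(x_i)-f(\alpha M)$ and $a_0:=\frac{c}{rB^{1/q}}$. The plan is to reduce both items to sign and magnitude information on $g$, and then push that information through $\sigma_{a_0}$ and $\sigma_{\tilde c}^{\,n}$ using the results of Subsection~\ref{subsection:properties_of_sigma_n}. The statement does not define $\sigma_{\frac{a}{rB^{1/q}}}$ in item~2 separately, so I read it as $\sigma_{a_0}$.

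\textbf{Item 1.} Recall $f(y)=\frac{2\sinh(2C)}{\cosh(2Ay)+\cosh(2C)}$ from \eqref{eq:f(y)}, with $A=B^{-1/q}(c-b)$ and $C=B^{-1/q}b$. This is even in $y$ and strictly decreasing in $|y|$. For $x\in\alpha[-M,M]^d$ we have $|x_i|\le\alpha M$ for every $i$, hence $f(x_i)\ge f(\alpha M)$ and therefore $g(x)\ge0$. The maps $\sigma_{a_0}$ and $\sigma_{\tilde c}$ are odd and increasing, so they map $[0,\infty)$ into itself. Hence the composition is nonnegative.

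\textbf{Item 2.} This is the substantive part. Take $x\notin[-M,M]^d$, so that some coordinate satisfies $|x_{i_0}|>M$. Since each $f(x_i)\le f(0)$ and $f(x_{i_0})\le f(M)$, we get
\[
g(x)\le \tfrac{d-1}{d}f(0)+\tfrac1d f(M)-f(\alpha M).
\]
I would then apply Lemma~\ref{lemma:condition_on_alpha_and_M} with $y=M$ to obtain $g(x)\le-\tilde\xi_n$. This requires checking the three hypotheses of that lemma with $A,C$ as above.

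First, assumption \eqref{eq:assumption_xi_L} follows from \eqref{eq:assumption_xi_n_theorem}. Use Claim~\ref{claim:bounds_on_cosh/cosh} with $N=C$ to get $\frac{\cosh(2A)-1}{\cosh(2A)+K}\ge \tanh^2(A)/(1+\tanh^2 C)$. Combine this with $\tanh(2C)=\frac{2\tanh C}{1+\tanh^2 C}$ and compare constants. The inequality $C\le A$ holds because $b\le c-b$, i.e.\ $\tanh(c/2)\le c/2$. If the factor-$2$ bookkeeping fails here, I would fall back on the slightly weaker condition actually used in Step~1 of that lemma.

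Second, the admissible range of $\alpha$ in \eqref{eq:assumption_on_alpha} must hold for the chosen $\alpha=\frac35\frac{\tanh A}{A\sqrt d}$. Using $\cosh(t)-1\le \frac{t^2}{2}\cosh t$, I would show $\cosh(2A\alpha)-1\le \frac{18}{25}\frac{\tanh^2 A}{d}\cosh(2A\alpha)$. The right-hand side of \eqref{eq:assumption_on_alpha} is at least $\frac{1}{2d}\tanh^2 A$ by Claim~\ref{claim:bounds_on_cosh/cosh}, since $(1+K)/(1+\tanh^2C)\ge1$.

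Third, the lower bound on $M$ in \eqref{eq:necessary_edge_length} is exactly the lemma's condition on $y$ once $A^2=B^{-2/q}(c-b)^2$ is substituted.

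Next, pass through $\sigma_{a_0}$. By the mean value theorem,
\[
\sigma_{a_0}(g)\le -\inf_{|t|\le\tilde\xi_n}\sigma_{a_0}'(t)\,\tilde\xi_n .
\]
I want this infimum to be at least $\xi_{a_0}=a_0(1-(\bar x_{\tilde c}-\tau_{\tilde c})^2)$, which amounts to showing $\sigma_{a_0}(\tilde\xi_n)\le \bar x_{\tilde c}-\tau_{\tilde c}$. This follows if $\tilde\xi_n$ is small enough, which I expect to be implied by \eqref{eq:assumption_xi_n_theorem}. For values $g\le-\tilde\xi_n$ that lie further out, monotonicity gives $\sigma_{a_0}(g)\le\sigma_{a_0}(-\tilde\xi_n)$ directly. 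Either way $\sigma_{a_0}(g(x))\le-\xi_{a_0}\tilde\xi_n=-\xi_{n,\tilde c}$. Finally, Lemma~\ref{lemma:error_of_basin_of_attraction} with slope $\tilde c$ applies to any argument $z$ with $|z|\ge\xi_{n,\tilde c}$ and $z<0$, and yields $|\sigma_{\tilde c}^{\,n}(z)+x_{\star,\tilde c}|\le\e_{n,\tilde c}$.

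\textbf{Main obstacle.} I expect the hardest step to be the constant-chasing needed to verify the hypotheses of Lemma~\ref{lemma:condition_on_alpha_and_M}, especially the admissibility of the explicit $\alpha$, together with the derivative lower bound for $\sigma_{a_0}$ on the relevant range. The sign and iteration steps are routine given the earlier lemmas.
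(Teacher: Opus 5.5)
Your architecture is the paper's. Item~1 follows from monotonicity of $f$ in $|y|$. Item~2 bounds $g$ by $\frac{d-1}{d}f(0)+\frac1d f(M)-f(\alpha M)$, invokes Lemma~\ref{lemma:condition_on_alpha_and_M} at $y=M$, and then passes through $\sigma_{a_0}$ and Lemma~\ref{lemma:error_of_basin_of_attraction}. Your check of \eqref{eq:assumption_xi_L} works with no factor-$2$ issue, since $\frac{\tanh(2C)}{2d}\tanh^2A=\frac{\tanh C}{d}\cdot\frac{\tanh^2A}{1+\tanh^2C}$.

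\textbf{Main gap: admissibility of $\alpha$.} Your sketched estimates cannot verify \eqref{eq:assumption_on_alpha}. Put $\beta:=\frac{1+K}{2d}\cdot\frac{\cosh(2A)-1}{\cosh(2A)+K}$ and $\theta:=\tanh^2(A)/d$; you need $\cosh(2A\alpha)-1\le\beta$. Since $2A\alpha=\frac65\sqrt\theta$, you have $\cosh(2A\alpha)-1\ge\frac{(2A\alpha)^2}{2}=\frac{18}{25}\theta$. This already exceeds your lower bound $\beta\ge\theta/2$, so no upper estimate on $\cosh(2A\alpha)$ can be chained with it.

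The loss comes from $(1+K)/(1+\tanh^2C)\ge1$. In fact $\frac{1+K}{1+\tanh^2C}=\frac{2\cosh^4C}{\cosh(2C)}\ge2$, which gives $\beta\ge\theta$; this is the paper's $\frac{\cosh^2C}{1+\tanh^2C}\ge1$. Even with $\beta\ge\theta$, your bound $\cosh t-1\le\frac{t^2}{2}\cosh t$ only yields $\cosh(2A\alpha)-1\le\frac{(18/25)\theta}{1-(18/25)\theta}$. That exceeds $\theta$ once $\theta>\frac{7}{18}$, e.g.\ for $d=1$ and $A\ge1$.

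The paper instead uses boundedness: $\beta<\frac{1+\cosh 2}{2}$, from $C<1$ and $\tanh^2A\le1$. Hence $\cosh y\le1+\frac{25}{36}y^2$ on $[0,\arccosh(1+\beta)]$, so $\arccosh(1+\beta)\ge\frac65\sqrt\beta\ge\frac65\sqrt\theta=2A\alpha$.

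\textbf{Smaller gap: passing through $\sigma_{a_0}$.} You reduce to $\sigma_{a_0}(\tilde\xi_n)\le\bar x_{\tilde c}-\tau_{\tilde c}$ and expect this from \eqref{eq:assumption_xi_n_theorem}, but it does not follow in general. Note that $a_0\tilde\xi_n=\xi_{n,\tilde c}/\bigl(1-(\bar x_{\tilde c}-\tau_{\tilde c})^2\bigr)$ does not involve $a_0$. Since $\frac{w}{1-w^2}>\operatorname{arctanh}(w)$ for $w\in(0,1)$, this quantity exceeds $\operatorname{arctanh}(\bar x_{\tilde c}-\tau_{\tilde c})$ whenever $\xi_{n,\tilde c}$ is close enough to $\bar x_{\tilde c}-\tau_{\tilde c}$.

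The condition is also unnecessary. If it fails, then $\sigma_{a_0}(\tilde\xi_n)>\bar x_{\tilde c}-\tau_{\tilde c}\ge\xi_{n,\tilde c}$ directly, so a case split repairs your argument. The paper avoids the split by applying the mean value theorem to $\sigma_{a_0}^{-1}$ on $[0,\xi_{n,\tilde c}]\subset[0,\bar x_{\tilde c}-\tau_{\tilde c}]$. There $(\sigma_{a_0}^{-1})'(z)=\frac{1}{a_0(1-z^2)}\le\xi_{a_0}^{-1}$, which gives $\sigma_{a_0}^{-1}(\xi_{n,\tilde c})\le\tilde\xi_n$ unconditionally, and hence $\sigma_{a_0}(g)\le-\xi_{n,\tilde c}$ for $g\le-\tilde\xi_n$.
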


\begin{figure}[b]
\centering
\begin{tikzpicture}
\begin{axis}[
    width=7.8cm, height=6cm,
    xlabel={$x$},
    ylabel={$\sigma_{\tilde c}^{\,n}(h(x))$},
    axis lines=center,
    xtick={-2,-1,0,1,2},
    ytick={-0.9575, 0, 0.9575},
    yticklabels={$-x_{\star, \tilde c}$, $0$, $x_{\star, \tilde c}$},
    ylabel style={at={(0.5,1)}, anchor=south},
    ymax=1.1, ymin = -1.1,
    tick label style={font=\footnotesize},
    label style={font=\small},
    legend style={
        at={(-0.05,1.18)}, anchor=north west,
        font=\footnotesize, draw=gray!50,
        fill=white, fill opacity=0.9
    },
    legend cell align=left,
    grid=both,
    grid style={gray!15},
    clip=true,
]

\addplot[black!35, dashed, thin, domain=-2.8:2.8, samples=200] {-0.9575};
\addlegendentry{$y=- x_{\star, \tilde c}$}

\addplot[fill=purple!20!, fill opacity=0.45, draw=none, domain=-0.5:0.5, samples=50, forget plot] {tanh(2*tanh(2*tanh(2*tanh(2*tanh(2*tanh(2*tanh(2*( 
tanh((1-tanh(0.5))*x + tanh(0.5))
    - tanh((1-tanh(0.5))*x - tanh(0.5))
    - tanh((1-tanh(0.5))*0.5 + tanh(0.5))
    + tanh((1-tanh(0.5))*0.5 - tanh(0.5))
))))))))} \closedcycle;

\addplot[black, thick, domain=-2.8:2.8, samples=200]
    { tanh((1-tanh(0.5))*x + tanh(0.5))
    - tanh((1-tanh(0.5))*x - tanh(0.5))
    - tanh((1-tanh(0.5))*0.5 + tanh(0.5))
    + tanh((1-tanh(0.5))*0.5 - tanh(0.5)) };
\addlegendentry{$h(t)$}

\addplot[blue!65!black, thick, domain=-2.8:2.8, samples=200]
    {tanh(2*(
    tanh((1-tanh(0.5))*x + tanh(0.5))
    - tanh((1-tanh(0.5))*x - tanh(0.5))
    - tanh((1-tanh(0.5))*0.5 + tanh(0.5))
    + tanh((1-tanh(0.5))*0.5 - tanh(0.5))
    ))};
\addlegendentry{$n=1$}

\addplot[teal!80!black, thick, domain=-2.8:2.8, samples=200]
    {tanh(2*tanh(2*(
    tanh((1-tanh(0.5))*x + tanh(0.5))
    - tanh((1-tanh(0.5))*x - tanh(0.5))
    - tanh((1-tanh(0.5))*0.5 + tanh(0.5))
    + tanh((1-tanh(0.5))*0.5 - tanh(0.5))    
    )))};
\addlegendentry{$n=2$}

\addplot[orange!90!black, thick, domain=-2.8:2.8, samples=200]
    {tanh(2*tanh(2*tanh(2*tanh(2*(
    tanh((1-tanh(0.5))*x + tanh(0.5))
    - tanh((1-tanh(0.5))*x - tanh(0.5))
    - tanh((1-tanh(0.5))*0.5 + tanh(0.5))
    + tanh((1-tanh(0.5))*0.5 - tanh(0.5))
    )))))};
\addlegendentry{$n=4$}

\addplot[purple!80!black, very thick, domain=-2.8:2.8, samples=200]
    {tanh(2*tanh(2*tanh(2*tanh(2*tanh(2*tanh(2*tanh(2*(
    tanh((1-tanh(0.5))*x + tanh(0.5))
    - tanh((1-tanh(0.5))*x - tanh(0.5))
    - tanh((1-tanh(0.5))*0.5 + tanh(0.5))
    + tanh((1-tanh(0.5))*0.5 - tanh(0.5))
    ))))))))};
\addlegendentry{$n=7$}

\addplot[orange!70!black, dashed, thin, forget plot] coordinates {(0.5,-0.2)(0.5,1.7)};
\addplot[orange!70!black, dashed, thin] coordinates {(-0.5,-0.2)(-0.5,1.7)};
\addlegendentry{$x=\pm \frac12$}

\end{axis}
\end{tikzpicture}
\caption{
Illustration of~the~sharpening mechanism for~$d=1$.
Let $c = 1$ with $b=\tanh(c/2)$ and~$\tilde c=2$ with the~fixed point $x_{\star,\tilde c}\approx0.96$. Let $f(x)=f(x \, | \, 1, c, b)$ and~$h(x)=f(x)-f(\tfrac12)$
so that $h(\pm\tfrac12)=0$ and~$h(x)>0$ for~$|x|<\tfrac12$.
Applying the~iterates $\sigma_{\tilde c}^{\,n}$ (see Figure~\ref{fig:sigma_n_step_function}) to~$h$ produces a~localized bump: positive on~$[-\tfrac12,\tfrac12]$ and~converging to~$x_{\star, \tilde c}$, while approaching $-x_{\star, \tilde c}$ exponentially fast outside as $n$ increases.
}
\label{fig:sharpening}
\end{figure}

\begin{proof}
By \eqref{eq:der_of_f}, $f$ is decreasing on~$(0,\infty)$, hence on~$\alpha[-M,M]^d$ the~minimum of~$\frac{1}{d}\sum f(x_i)$ is attained at~the~boundary. Thus,
\[
\frac{1}{d}\sum_{i=1}^d f(x_i)-f(\alpha M) \ge f(\alpha M)-f(\alpha M)=0,
\]
which proves the~first claim.

Now let $x\notin[-M,M]^d$. Then the~maximum of~the~average is attained when one coordinate equals $M$ and~the~others are zero, so
\[
\frac{1}{d}\sum f(x_i)-f(\alpha M)
\le \frac{d-1}{d}f(0)+\frac{1}{d}f(M)-f(\alpha M).
\]
We show that the~right-hand side is bounded by~$-\tilde\xi_n$ by~verifying the~assumptions of~Lemma~\ref{lemma:condition_on_alpha_and_M}.

\textbf{Verification of~the~assumption on~$\tilde\xi_n$.}
We use the~same notation as in~Lemma~\ref{lemma:condition_on_alpha_and_M} and~set
\[
A:=B^{-1/q}(c-b),
\qquad
C:=B^{-1/q}b,
\qquad
K=\cosh(2C).
\]
Since $2\tanh(c/2)\le c$, we have $C\le A$. Therefore, Claim~\ref{claim:bounds_on_cosh/cosh} applies with $x:=A$ and~$N(x):=C$. Hence, using the~identity $\tanh(2y)=\frac{2\tanh(y)}{1+\tanh^2(y)}$, we obtain
\[
\frac{\tanh(2C)}{2d}\tanh^2(A)
=
\frac{1}{d}\cdot\frac{\tanh(C)}{1+\tanh^2(C)}\tanh^2(A)
\le
\frac{\tanh(C)}{d}\cdot
\frac{\cosh(2A)-1}{\cosh(2A)+\cosh(2C)}.
\]
Thus, the~condition~\eqref{eq:assumption_xi_L} of~Lemma~\ref{lemma:condition_on_alpha_and_M} is ensured by~\eqref{eq:assumption_xi_n_theorem}.

\textbf{Verification of~the~assumption on~$\alpha$.}
Define
\[
\beta
=
\frac{1+\cosh(2C)}{2d}\cdot
\frac{\cosh(2A)-1}{\cosh(2A)+K},
\qquad
y:=\arccosh(1+\beta).
\]
Since $C<1$, we obtain from \ref{eq:cosh/cosh<tanh^2} that
\[
\beta < \frac{1+\cosh(2)}{2}.
\]
Hence, analogously to~\eqref{eq:quadratic_bound_cosh}, we have
\[
\cosh(y)\le 1+\left(\frac{5}{6}\right)^2 y^2,
\]
which implies
\[
\frac{6}{5}\sqrt{\beta} \le \arccosh(1+\beta).
\]
Moreover, using again \eqref{eq:cosh/cosh<tanh^2} and~\eqref{eq:identities_cosh^2_sinh^2}, we obtain
\[
\frac{\cosh^2(C)}{1+\tanh^2(C)}\cdot\frac{\tanh^2(A)}{d}
\le \beta.
\]
By standard calculus,
\[
1 \le \frac{\cosh^2(C)}{1+\tanh^2(C)} \le \frac{\cosh^2(1)}{1+\tanh^2(1)} \le 1.51,
\]
and therefore
\[
\alpha = \frac{3}{5}\cdot\frac{\tanh(A)}{A\sqrt d}
\le \frac{1}{2A}\arccosh(1+\beta),
\]
which verifies assumption~\eqref{eq:assumption_on_alpha}.
Therefore, all assumptions of~Lemma~\ref{lemma:condition_on_alpha_and_M} are satisfied, and~together with \eqref{eq:necessary_edge_length} we obtain
\[
\frac{1}{d}\sum f(x_i)- f(\alpha M) \le -\tilde\xi_n.
\]

\textbf{Basin of~attraction property.}
We now aim to~apply Lemma~\ref{lemma:error_of_basin_of_attraction}. Set
\[
u := \frac{1}{d}\sum_{i=1}^d f(x_i)-f(\alpha M).
\]
Since we already proved that $u\le -\tilde\xi_n<0$, it remains to~estimate the~convergence of
$\sigma_{\frac{c}{rB^{1/q}}}(u)$ under the~iterates of~$\sigma_{\tilde c}$.
We therefore show that $\sigma_{\frac{c}{rB^{1/q}}}(u)$ already lies in~the~basin of~attraction of~$\sigma_{\tilde c}$.

Since
\[
I_{\tilde c}
=
(-\bar x_{\tilde c}+\tau_{\tilde c},\,\bar x_{\tilde c}-\tau_{\tilde c}),
\]
by the~definition of~$\xi_{n,\tilde c}$ we have
\[
\xi_{n,\tilde c}\le \bar x_{\tilde c}-\tau_{\tilde c}.
\]
Moreover, since
$\sigma^{-1}_{\frac{c}{rB^{1/q}}}(0)=0$,
the mean value theorem yields
\[
\sigma^{-1}_{\frac{c}{rB^{1/q}}}(\xi_{n,\tilde c})
\le
\sup_{z\in I_{\tilde c}}
\left(\sigma^{-1}_{\frac{c}{rB^{1/q}}}(z)\right)'
\,\xi_{n,\tilde c}.
\]
Using the~identity
\[
\left(\sigma^{-1}_{\frac{c}{rB^{1/q}}}(z)\right)'
=
\frac{1}{
\sigma'_{\frac{c}{rB^{1/q}}}
\!\left(
\sigma^{-1}_{\frac{c}{rB^{1/q}}}(z)
\right)},
\]
together with \eqref{eq:derivation_of_g}, we obtain
\[
\sigma'_{\frac{c}{rB^{1/q}}}
\!\left(
\sigma^{-1}_{\frac{c}{rB^{1/q}}}(z)
\right)
=
\frac{c}{rB^{1/q}}
\left(
1-z^2
\right)
\ge
\frac{c}{rB^{1/q}}
\left(
1-(\bar x_{\tilde c}-\tau_{\tilde c})^2
\right)
=
\xi_{\frac{c}{rB^{1/q}}}
\]
for all
$z\in I_{\tilde c}$.
Hence,
\[
\sigma^{-1}_{\frac{c}{rB^{1/q}}}(\xi_{n,\tilde c})
\le
\xi_{\frac{c}{rB^{1/q}}}^{-1}
\xi_{n,\tilde c}
=
\tilde\xi_n.
\]

Since $\sigma_{\frac{c}{rB^{1/q}}}$ is increasing and~odd, we conclude
\[
u\le -\tilde\xi_n
\le
-
\sigma^{-1}_{\frac{c}{rB^{1/q}}}(\xi_{n,\tilde c})
\qquad\Longrightarrow\qquad
\sigma_{\frac{c}{rB^{1/q}}}(u)
\le
-\xi_{n,\tilde c}.
\]
Hence, $\sigma_{\frac{c}{rB^{1/q}}}(u)$ already belongs to~the~basin of~attraction of~$\sigma_{\tilde c}$, and~therefore Lemma~\ref{lemma:error_of_basin_of_attraction} yields
\[
\left|
\sigma_{\tilde c}^{\,n}
\circ
\sigma_{\frac{c}{rB^{1/q}}}(u)
+
x_{\star,\tilde c}
\right|
\le
\e_{n, \tilde c} \, ,
\]
which proves item~2.

\end{proof}

\begin{remark}
Note that since $\frac1r < B^{1-\frac1q}$, we have $\frac{c}{rB^{1/q}}
<
B^{1-2/q}c,$
so the~derivative $\xi_{\frac{c}{rB^{1/q}}}$ may be smaller than one, which can unfortunately increase the~threshold $\tilde\xi_n$ required to enter the basin of attraction of $\sigma_{\tilde c}$.
\end{remark}

Using the~results of~the~previous subsections together with Theorem~\ref{thm:num_of_samples_with_bump_property}, we obtain the~following constructive result. It shows that one can explicitly construct a~localized $\tanh$ neural network which has positive mass on~a~cube of~width $2M$ and~is exponentially small outside this region, even for~potentially exponentially small values of~$M$.

\begin{theorem}[Construction of~bump networks]\label{thm:existence_of_Phi_for_M>}
Let $p,q,d,m,B,L,j,k\in\N$ and~$c>0$ satisfy $B\ge 2d$, $k\geq3$ and~$\tilde c :=B^{1-\frac{2}{q}}c>1$. Let $j$ satisfy
\begin{equation}\label{eq:assumption_on_j}
\frac{\ln\left( \frac{5 B^{\frac3q}\arccosh\left(\sqrt{\tilde c}\right)\,\rho(\tilde c)}{c^2\tanh(2B^{-1/q}\tanh(\frac c2))\tanh^2\left[B^{-1/q}(c - \tanh(\frac c2))\right]} \right)}
{\ln\left(  \frac
{\tilde c}
{\cosh^2\left[2\arccosh(\sqrt{\tilde c})\rho(\tilde c)\right]} \right)
}
\leq
j
\end{equation}
and $j+k=L-3$. Define constants
\[
\Theta := \frac{9\,\tanh\left(B^{-1/q}\tanh\left(\frac c2\right)\right)}{50\,\cosh^2\left(B^{-1/q}\tanh\left(\frac c2\right)\right)}
\qquad 
\text{and}
\qquad
\Omega := \frac35 \cdot \frac{\tanh\left(B^{-1/q}\left(c-\tanh\left(\frac c2\right)\right)\right)}{B^{-1/q}\left(c-\tanh\left(\frac c2\right)\right)}.
\]

Then for~all $y_\ell \in [0,1]^d$ and~$0<M \le1$ satisfying
\begin{equation}\label{eq:minimal_M_final_bound}
M
\ge 
\left(\Theta\cdot\frac{c^2\left(c-\tanh\!\left(\frac c2\right)\right)^2}{B^{\frac 5q}\arccosh\!\left(\sqrt{\tilde c}\right)\,\rho(\tilde c)}\right)^{-\frac 12}
\left(\frac
{\tilde c}
{\cosh^2\left[2\arccosh(\sqrt{\tilde c})\rho(\tilde c)\right]}\right)^{-\frac j2},
\end{equation}
then for~every $y_\ell \in [0,1]^d$ and~$\nu\in\{\pm1\}$, there exists a~$\tanh$ neural network $\Phi_{y_\ell,\nu}\colon \R^d\to\R$ of~width $B$ and~depth $L$ with $\|\Phi_{y_\ell,\nu}\|_{\ell^q} \le c$ such that:
\begin{enumerate}
\item $\Phi$ has positive $L^p$ mass, i.e.
\begin{equation}\label{eq:positive_norm_thm_M}
\|\Phi_{y_\ell,\nu}\|_{L^p([0,1]^d)} 
\ge 
c\frac{\sqrt{\tilde c^2-1}}{\tilde c} \cdot \left(\frac{2 \, \Omega \, M}{\sqrt d}\right)^{\frac{d}{p}}.
\end{equation}

\item at~the~same time, for~all $x\notin y_\ell + [-M,M]^d$, it holds that
\begin{equation*}\label{eq:exponential_small_error_thm_M}
    |\Phi_{y_\ell,\nu}(x)|
    \le
    \frac{c\,\tilde c^{k-2}}
    {\pi(\tilde c)\left(\frac{1+\pi(\tilde c)^{-1}}{2}\right)^2\,\cosh^{\,2( k-3)}\!\left( \tilde c \cdot\frac{\pi(\tilde c) - 1}{\pi(\tilde c )+1} \right)}
    = 
    O\!\left(\frac{c\,\tilde c^{k-4}}{e^{2(k-3)\tilde c}}\right).
\end{equation*}
\end{enumerate}
\end{theorem}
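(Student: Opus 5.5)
The plan is to realize the network of Figure~\ref{fig:network-architecture} explicitly and reduce both claims to Theorem~\ref{thm:num_of_samples_with_bump_property} with $n:=j+k=L-3$ and a suitable $\kappa$. Take $b=\tanh(c/2)$, $r=B/d$, and define the layers as follows. Layer~1 has $2d$ active neurons (padding with zero rows up to width $B$, allowed since $B\ge 2d$). Their preactivations are $\pm B^{-1/q}\bigl((c-b)(x_i-(y_\ell)_i)\pm b\bigr)$, so after $\tanh$ and the signed sum they produce $f(x_i-(y_\ell)_i)$. Layer~2 uses $W_2=\tfrac{c}{B^{1+1/q}}1_{B\times 2d}$ and $b_2=-\tfrac{c}{rB^{1/q}}f(\alpha M)1_B$, so every neuron outputs $\sigma_{c/(rB^{1/q})}\bigl(\tfrac1d\sum_i f(x_i-(y_\ell)_i)-f(\alpha M)\bigr)$. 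Layers $3,\dots,L-1$ use $W=\tfrac{c}{B^{2/q}}1_{B\times B}$ and $b=0$. Since all $B$ coordinates agree, each such layer acts as $\sigma_{\tilde c}$ on the common value. The last layer is $W_L=c\,\nu e_1^\top$, $b_L=c\,\nu\,x_{\star,\tilde c}$.

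First check the norm bound $\le c$ in $\ell^q$. For example, $\|W_3\|_{\ell^q}=cB^{-2/q}(B^2)^{1/q}=c$. The first-layer entries are bounded by $B^{-1/q}c$ on at most $2d\le B$ entries, and $|b_2|$ is controlled using $f\le 2\tanh(C)<2$. The case $q=\infty$ is read similarly.

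By construction, $\Phi_{y_\ell,\nu}(x)=c\,\nu\bigl(\sigma_{\tilde c}^{\,n}\circ\sigma_{c/(rB^{1/q})}(u)+x_{\star,\tilde c}\bigr)$ up to the sign convention, where $u=\tfrac1d\sum f-f(\alpha M)$. The sign may need adjusting by flipping $b_L$ so that the outside value $-x_\star$ is cancelled.

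Next verify the hypotheses of Theorem~\ref{thm:num_of_samples_with_bump_property}. Choose $\kappa$ so that $\lceil\kappa n\rceil=j$ and $\lfloor(1-\kappa)n\rfloor=k$. Then $\xi_{n,\tilde c}=\xi^{-j}(\bar x-\tau)$. Using Corollary~\ref{col:lower_bound_xi}, $\xi\ge \tilde c/\cosh^2[2\arccosh(\sqrt{\tilde c})\rho(\tilde c)]$, and $\bar x-\tau\le 2\bar x\rho=2\arccosh(\sqrt{\tilde c})\rho/\tilde c$. Also $\xi_{c/(rB^{1/q})}\gtrsim c\,d/B^{1+1/q}$ times $1-(\bar x-\tau)^2$, and this factor is bounded below. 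Substituting gives an explicit upper bound on $\tilde\xi_n$. Assumption \eqref{eq:assumption_on_j} on $j$ is precisely what makes this bound at most the right-hand side of \eqref{eq:assumption_xi_n_theorem}. Likewise \eqref{eq:minimal_M_final_bound} is what makes \eqref{eq:necessary_edge_length} hold, after inserting $1+\cosh(2C)\le 2\cosh^2(C)$ into $\Theta$.

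\textbf{The main obstacle} is this bookkeeping: matching constants such as $B^{3/q}$, $B^{5/q}$, the factor $9/50$, and the use of $r=B/d$ so that the $d$'s cancel. I expect to need crude bounds such as $1-(\bar x-\tau)^2\ge$ const (via \eqref{eq:numerical_bound_arccosh/y+1}) and $\tanh(C)\le C$.

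Then both items follow. For item~2, part~2 of Theorem~\ref{thm:num_of_samples_with_bump_property} (translated by $y_\ell$) gives $|\Phi|\le c\,\e_{n,\tilde c}=c\,\eta\bar\eta^{k-3}$. Bounding $\eta$ by the first estimate and $\bar\eta$ by the second estimate of Corollary~\ref{col:lower_bound_xi_simplier} yields the displayed expression (with $\tilde c^{k-2}=\tilde c\cdot\tilde c^{k-3}$).

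For item~1, on $y_\ell+\alpha[-M,M]^d$ part~1 gives a nonnegative argument, so the output has modulus at least $c\,x_{\star,\tilde c}$. I lower bound $x_{\star,\tilde c}$ by $\sqrt{\tilde c^2-1}/\tilde c$ times a constant. Using \eqref{eq:bounds_on_tanh}, $\tanh(\tilde c x)\ge x$ holds whenever $\tilde c^2(1-x^2)\ge1$, so $x_\star\ge\sqrt{1-\tilde c^{-2}}$. Integrating over the cube, intersected with $[0,1]^d$, gives volume at least $(\alpha M)^d$ up to a factor $2^{d}$ that depends on whether $y_\ell$ is interior. Since $\alpha=\Omega/\sqrt d$, this gives \eqref{eq:positive_norm_thm_M}. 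If the corner case loses factors of $2$, I would absorb them, given the stated constant $2\Omega M$.
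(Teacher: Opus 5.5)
Up to the last step your argument is the paper's proof: the same network, the same $\kappa$ with $\lceil\kappa n\rceil=j$ and $\lfloor(1-\kappa)n\rfloor=k$, and the same use of Corollaries~\ref{col:lower_bound_xi} and~\ref{col:lower_bound_xi_simplier}. The bookkeeping you defer closes exactly once you use $1-(\bar x_{\tilde c}-\tau_{\tilde c})^2\ge\frac45$, which the paper gets from $2\bar x_{\tilde c}\rho(\tilde c)\le\frac{11}{25}$. This gives $\tilde\xi_{L-3}\le\frac52\,\frac{B^{1+1/q}\arccosh(\sqrt{\tilde c})\,\rho(\tilde c)}{c\,\tilde c\,d}\,X^{-j}$ with $X=\tilde c/\cosh^2[2\arccosh(\sqrt{\tilde c})\rho(\tilde c)]$. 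Then $B^{1+1/q}/(c\tilde c)=B^{3/q}/c^2$ and $1+\cosh(2C)=2\cosh^2 C$ turn \eqref{eq:assumption_xi_n_theorem} and \eqref{eq:necessary_edge_length} into \eqref{eq:assumption_on_j} and \eqref{eq:minimal_M_final_bound} verbatim. The bound $\tanh C\le C$ is not needed.

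The step that fails is your final sentence: the boundary loss cannot be absorbed. The quantity $(2\Omega M/\sqrt d)^d$ is exactly the volume of the whole cube $y_\ell+\alpha[-M,M]^d$, so there is no slack. For a corner center the cube argument only yields $c\frac{\sqrt{\tilde c^2-1}}{\tilde c}(\Omega M/\sqrt d)^{d/p}$.

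This is a defect of the statement, not of your method. Take $d=2$, $p=1$, $y_\ell=0$, with $M$ small (admissible for $j$ large) and $k$ large. Then $|\Phi_{y_\ell,\nu}|\le 2c\,x_{\star,\tilde c}$ on $\{u\ge 0\}\cap[0,1]^2$, which is essentially a quarter disc of area $\approx\frac\pi2(\alpha M)^2$. On the rest of $[0,1]^2$ it is negligible, apart from a thin transition layer. So $\|\Phi_{y_\ell,\nu}\|_{L^1([0,1]^2)}$ is at most about $\pi c\,x_{\star,\tilde c}(\alpha M)^2$. For large $\tilde c$ this is below the claimed $4c\frac{\sqrt{\tilde c^2-1}}{\tilde c}(\alpha M)^2$.

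The paper's proof sidesteps the issue only by bounding $\|\Phi_{y_\ell,\nu}\|_{L^p(\R^d)}$. What both arguments actually establish is \eqref{eq:positive_norm_thm_M} in $L^p(\R^d)$, or in $L^p([0,1]^d)$ under the extra hypothesis $y_\ell+\alpha[-M,M]^d\subset[0,1]^d$. That is all Theorem~\ref{thm:main_result_full_statement} needs, since its grid centers satisfy $y_\ell+[-M,M]^d\subset[0,1]^d$. You should state this restriction rather than try to absorb constants.
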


\begin{proof}

\textbf{Construction of~the~network.}
We construct $\Phi_{y_\ell,\nu}$ as described in~Figure~\ref{fig:network-architecture}. First, we build a~coarse bump centered around $y_\ell$, which is then sharpened by~$\sigma_{\tilde c}^{L-3}$, which finally results in~a localized bump network $\Phi_{y_\ell,\nu}$ with the~bump sign of~$\nu$. In~the~first layer we use $2d$ neurons with weight matrix and~bias vector
\begin{equation*}
W_1 = \frac{c-b}{B^{\frac1q}}\cdot
\begin{pmatrix}
1 &0 & 0 & \cdots & 0 \\
-1 &0 & 0 & \cdots & 0 \\
0 & 1 & 0 & \cdots & 0 \\
0 & -1 & 0 & \cdots & 0 \\
\vdots & & \ddots & & \vdots \\
0 & 0 & 0 &  \cdots & 1 \\
0 & 0 & 0 &\cdots & -1
\end{pmatrix}
\in \R^{2d\times d}
\qquad
\text{and}
\qquad
b_1 
=
\frac{1}{B^{\frac1q}}\cdot
\begin{pmatrix}
    -(c-b)\,(y_\ell)_1  + b \\
    (c-b)\,(y_\ell)_1  + b \\
    -(c-b)\,(y_\ell)_2  + b \\
    (c-b)\,(y_\ell)_2  + b \\
    \vdots\\
    -(c-b)\,(y_\ell)_d  + b \\
    (c-b)\,(y_\ell)_d  + b \\
\end{pmatrix}
\in \R^{2d}
\end{equation*}
for
\[b:=\tanh\left(\frac c2 \right) < 1.\]
Since $y_\ell \in[0,1]^d$, we have $\|W_1\|_{\ell^q} \leq c$ and~$\|b_1\|_{\ell^q} \leq \|\Phi\|_{\ell^q}$. Moreover, note that $\frac c2 < c-b < c$.
The layer output is then
\begin{equation*}
\begin{pmatrix}
\sigma\left(\frac{c-b}{B^{1/q}}(x_1-(y_\ell)_1) + \frac{b}{B^{1/q}}\right) \\[5pt]
-\sigma\left(\frac{c-b}{B^{1/q}}(x_1-(y_\ell)_1) - \frac{b}{B^{1/q}}\right) \\[5pt]
\vdots \\
\sigma\left(\frac{c-b}{B^{1/q}}(x_d - (y_\ell)_d) + \frac{b}{B^{1/q}}\right) \\[5pt]
-\sigma\left(\frac{c-b}{B^{1/q}}(x_d - (y_\ell)_d) - \frac{b}{B^{1/q}}\right)
\end{pmatrix}.
\end{equation*}

In the~following layer we use $B$ neurons with weights
\[
W_2 
= 
\frac{c}{B^{1+\frac1q}}\, 1_{B\times 2d} \in \R^{B\times 2d}
\qquad
\text{and}
\qquad
b_2 = - \frac{c}{rB^{\frac1q}} \cdot f\!\left(\alpha\, M\right)\, 1_{ B \times 1 } \in \R^{B}.
\]
Clearly, $\|W_2\|_{\ell^q} = c B^{\frac1q - 1} \leq c$ and~$\|b_2\|_{\ell^q} \leq2\frac{c}{r} \tanh\left(\frac{b}{B^{1/q}}\right) \leq c.$ Using the~notation of~Table~\ref{tab:definitions}, denote
\[f(y) := \tanh\left(B^{-1/q}\big((c-b)y + b\big)\right) - \tanh\left(B^{-1/q}\big((c-b)y - b\big)\right).\]
The output of~this layer is, therefore,
\begin{equation*}
    \sigma\!\left( \frac{c}{rB^{1/q}} \begin{pmatrix}
\frac 1d \sum_{i=1}^d f(x_i-(y_\ell)_i) - f\!\left(\alpha M\right) \\
\vdots \\
\frac 1d \sum_{i=1}^d f(x_i-(y_\ell)_i) - f\!\left(\alpha M\right)
\end{pmatrix}\right).
\end{equation*}

In each of~the~subsequent layers we apply the~same weight matrix and~zero bias,
\[
W_i = \frac{c}{B^{\frac2q}} \cdot 1_{B\times B}  \in \R^{B\times B},
\qquad 
b_i = 0\cdot 1_{B\times 1},
\qquad 
i\in\{3,\dots,L-1\},
\]
together with the~$\tanh$ activation. By~repeated composition this yields
\begin{equation*}
    \begin{pmatrix}
\sigma_{\tilde c}^{\,L-3}\!\left( \sigma_{\frac{c}{rB^{1/q}}}\Bigl(\frac1d \sum_{i=1}^d f(x_i-(y_\ell)_i) - f(\alpha M)\Bigr) \right) \\
\vdots \\
\sigma_{\tilde c}^{\,L-3}\!\left( \sigma_{\frac{c}{rB^{1/q}}}\Bigl(\frac1d \sum_{i=1}^d f(x_i-(y_\ell)_i) - f(\alpha M)\Bigr) \right) 
\end{pmatrix}.
\end{equation*}

In the~final layer we select a~single coordinate and~shift it by~the~fixed point value. Concretely, we take
\[
W_{L} = c\,\nu \cdot
\begin{pmatrix}
    1&0&\dots&0
\end{pmatrix} \in\R^{1\times B}
\qquad 
\text{and}
\qquad
b_{L}= c\,\nu \, x_{\star, \tilde c},
\]
where $\nu\in\{\pm1\}$ and~$x_{\star, \tilde c}$ is 
the fixed point satisfying $\tanh(B\tilde c\cdot x_{\star, \tilde c})=x_{\star, \tilde c}<1$. Hence, we verified that $\|\Phi_{y_\ell,\nu}\|_{\ell^q} \le c$. 

Then, we naturally aim to~use Theorem~\ref{thm:num_of_samples_with_bump_property} with $n=L-3$. We choose $\kappa\in(0,1)$ such that
\[
\lceil \kappa n\rceil = j,
\qquad
\lfloor (1-\kappa)n\rfloor = k.
\]
This choice enforces the~decomposition $j+k=n$, which is assumed. Then any
\[
\kappa \in \left(\frac{j-1}{L-3},\,\frac{j}{L-3}\right]
\]
satisfies both relations.

Afterwards, we choose the~maximal $\tau_{\tilde c}$ from~Claim~\ref{claim:tau_choosing},
\[
\tau_{\tilde c}=\bar x_{\tilde c}(1-2\rho(\tilde c)),
\qquad
\bar x_{\tilde c}=\frac{1}{\tilde c}\arccosh(\sqrt{\tilde c}),
\]
where $\sigma_{\tilde c}'(\bar x_{\tilde c})=1$. Then, by~Corollary~\ref{col:lower_bound_xi},
\[
\tilde \xi_{L-3}
=
\frac{\left(\cosh^2\!\left[2\tilde c\bar x_{\tilde c}\rho(\tilde c)\right]\right)^{j}rB^{1/q}}
{\tilde c^{j} c}
\cdot \frac{2\bar x_{\tilde c}\rho(\tilde c)}{1-\left(2\bar x_{\tilde c}\rho(\tilde c)\right)^2}.
\]
Using Lemma~\ref{lemma:arccosh_upper_bound}, we have
\[
2\bar x_{\tilde c}\rho(\tilde c) \leq \frac{2\sqrt{\tilde c-1}}{\tilde c\left(\tilde c \mu + 1\right)}
\]
for $\mu = \sqrt{1-4\frac{31^2}{100^2}}$.
The function on~right-hand side is maximized at~$\tilde c_0=\frac{4\mu -1 + \sqrt{16\mu^2+16\mu+1}}{6\mu}$, which gives $2\bar x_{\tilde c}\rho(\tilde c) \leq\frac{11}{25}$. Hence,
\[
\frac{y}{1-y^2}
\leq
\frac{5}{4}y \quad \text{for } 0\leq y \leq \frac{11}{25}.
\]
Thus,
\[
\tilde \xi_{L-3}
\le
\frac{5}{2}
\frac{\cosh^{2j}\!\left[2\tilde c\bar x_{\tilde c}\rho(\tilde c)\right]}
{\tilde c^{j+1} c}
\cdot rB^{\frac1q}\arccosh(\sqrt{\tilde c})\,\rho(\tilde c).
\]
We now verify the~assumptions of~Theorem~\ref{thm:num_of_samples_with_bump_property}.

\textbf{Verification of~the~assumptions.}
We set $\alpha:= \Omega / \sqrt{d}$. By~assumption~\eqref{eq:assumption_on_j}, the~above bound on~$\tilde \xi_{L-3}$ ensures that condition~\eqref{eq:assumption_xi_n_theorem} holds (using $B=rd$). It remains to~verify the~edge-length condition~\eqref{eq:necessary_edge_length}. A~direct computation shows that it follows from
\[
M^2
\geq
\frac{20\,B^{\frac2q}\cosh^2\big(B^{-\frac1q}\tanh(\frac c2)\big)}{9\,\left(c-\tanh\left(\frac c2\right)\right)^2\tanh\big(B^{-\frac 1q}\tanh(\frac c2)\big)}
\cdot
\frac52
\frac{\cosh^{2j}\!\left[2\tilde c\bar x_{\tilde c}\rho(\tilde c)\right]}
{ \tilde c^{j+1} c}
\cdot
B^{1+\frac1q}\arccosh\big(\sqrt{\tilde c}\big)\,\rho\left(\tilde c\right),
\]
and this is guaranteed by~the~choice of~$M$ in~\eqref{eq:minimal_M_final_bound}. Hence all assumptions of~Theorem~\ref{thm:num_of_samples_with_bump_property} are satisfied.

Hence, by~Theorem~\ref{thm:num_of_samples_with_bump_property}, we obtain that for~all
\[
x\in y_\ell + \frac{\Omega}{\sqrt{d}} [-M, +M]^d,
\]
it holds that $|\Phi_{y_\ell,\nu}| \geq c \, x_{\star, \tilde c}$. Moreover, using \eqref{eq:bounds_on_tanh}, we obtain
\[
x_{\star, \tilde c}
=
\tanh(\tilde c\,x_{\star, \tilde c})
\ge
\frac{\tilde c\,x_{\star, \tilde c}}{\sqrt{1+(\tilde c\,x_{\star, \tilde c})^2}},
\]
which implies
\[
x_{\star, \tilde c}
\ge
\frac{\sqrt{\tilde c^2-1}}{\tilde c}.
\]
The lower bound on~the~norm now follows from~the~construction:
\[
\|\Phi_{y_\ell,\nu}\|_{L^p(\R^d)}^p
\geq
c^p x_{\star, \tilde c}^p \cdot \mu_d\left( \frac{\Omega}{\sqrt{d}} \cdot \left[ -M , M \right]^d \right),
\]
where $\mu_d$ denotes the~Lebesgue measure on~$\R^d$.

From the~second item of~Theorem~\ref{thm:num_of_samples_with_bump_property}, we also obtain that for~all
$x\notin y_\ell + [-M, +M]^d$,
\[
|\Phi_{y_\ell,\nu}(x)| \le c\, \e_{L-3, \tilde c}.
\]
Moreover, from~Corollary~\ref{col:lower_bound_xi_simplier}, we get
\[
\e_{L-3, \tilde c}
=
\eta_{\tilde c} \bar\eta_{\tilde c} ^{k -3} 
\leq
\frac{\tilde c^{k -2}}
{\pi(\tilde c)\left(\frac{1+\pi(\tilde c)^{-1}}{2}\right)^2\left( \cosh\left( \tilde c \cdot \frac{\pi(\tilde c) -1}{\pi(\tilde c)+1} \right) \right)^{2(k-3)}}.
\]
Multiplying by~$c$ (from the~outer scaling in~the~construction) yields the~stated bound.
\end{proof}

We combine the~localized bump construction with the~grid packing argument to~prove Theorem~\ref{thm:main_result_full_statement}. Choosing $M \asymp m^{-1/s}$, the~exponential decay from~Theorem~\ref{thm:existence_of_Phi_for_M>} ensures that all tail contributions can be pushed below machine precision $\e_p$. This yields a~family of~effectively disjoint bump networks. We then apply a~standard volume packing argument together with the~Markov-type technique from~\cite{berner2023learning} to~obtain the~final lower bound.

\begin{proof}[Proof of~Theorem~\ref{thm:main_result_full_statement}]\label{proof:main_thm}
We define $k=\lceil (4m)^{1/s} \rceil$ and~$M:=\frac{1}{2k}$. Set grid centers
\[
y_\ell = \frac{(1,\dots,1)}{2k} + \frac{\ell - (1,\dots,1)}{k}
\in [0,1]^d 
\qquad \text{for } \ell \in [k]^d.
\]
Since $m\le m_{\max}$, we have
\[
M \ge \frac{1}{4(4m_{\max})^{1/s}}.
\]
Hence, by~\eqref{eq:assumption_on_s_m_max}, all assumptions of~Theorem~\ref{thm:existence_of_Phi_for_M>} are satisfied with dimension parameter $d:=s$.
Moreover, by~\eqref{eq:assumption_on_k_num_of_lyers}, we have
\[
\frac{c\,\tilde c^{k-2}}
{\pi(\tilde c)\left(\frac{1+\pi(\tilde c)^{-1}}{2}\right)^2
\cosh^{2(k-3)}\!\left(\tilde c \cdot\frac{\pi(\tilde c)-1}{\pi(\tilde c)+1}\right)}
\le \e_p.
\]
Thus, for~every $y_\ell$ and~$\nu\in\{\pm1\}$, there exists a~neural network $\Phi_{y_\ell,\nu}$ centered at~$y_\ell$ such that
\begin{itemize}
    \item it exhibits a~positive/negative bump,
    \item it satisfies the~lower bound \eqref{eq:positive_norm_thm_M},
    \item and~outside $y_\ell + [-M,M]^d$ it is smaller than machine precision $\e_p$.
\end{itemize}

By the~grid construction (spacing $2M$), the~supports above machine precision are disjoint: for~all $\ell\neq\ell'$ in
\[
\Gamma := [k]^s \times \{1\}^{d-s} \subset [k]^d,
\]
we have
\begin{equation}\label{eq:different_l_empty_intersaction}
\{x:\Phi_{y_\ell,\nu}(x)>\e_p\}
\cap
\{x:\Phi_{y_{\ell'},\nu}(x)>\e_p\}
= \emptyset.
\end{equation}

\textbf{Deterministic algorithm.}
Let $A\in\operatorname{Alg}_{2m}(U,L^p([0,1]^d))$ be arbitrary and~$\mathbf{x}=(x_1,\dots,x_{2m})$.
Define
\[
I_{\mathbf{x}}
:=
\left\{
\ell \in \Gamma :
\forall i\in[2m],\ 
|\Phi_{y_\ell,\nu}(x_i)| \le \e_p
\right\}.
\]
By \eqref{eq:different_l_empty_intersaction}, at~most $2m$ indices violate this condition, hence
\[
|I_{\mathbf{x}}|
\ge |\Gamma|-2m.
\]
Since $k^s \ge 4m$, it follows that
\[
|I_{\mathbf{x}}|
\ge |\Gamma|\left(\frac{k^s - 2m}{k^s}\right)
\ge \frac12 |\Gamma|.
\]
For $\ell\in I_{\mathbf{x}}$, we have $Q_{\e_p}\big(\Phi_{y_\ell,\nu}(x_i)\big)=0$ for~all $i\in[2m]$ and~therefore the~algorithm cannot distinguish $\Phi_{y_\ell,\nu}$ from~$0$. Consequently,
\[
A(\Phi_{y_\ell,\nu}) = A(0).
\]
Therefore,
\begin{align*} 
\frac{1}{2|\Gamma|} \sum_{\ell \in \Gamma,\nu\in\{\pm1\}} \| \Phi_{\ell,\nu} - A(\Phi_{\ell,\nu})\|_{L^p} 
&\ge \frac{1}{2|\Gamma|} \sum_{\ell \in I_{\mathbf{x}}}\left( \| \Phi_{\ell,+1} - A(\Phi_{\ell,+1})\|_{L^p} + \| \Phi_{\ell,-1} - A(\Phi_{\ell,-1})\|_{L^p} \right) \\ 
&\ge
\frac{1}{4|I_\mathbf{x}|} \sum_{\ell \in I_{\mathbf{x}}} \left( \| \Phi_{\ell,+1} - A(0)\|_{L^p} + \| \Phi_{\ell,-1} - A(0)\|_{L^p} \right) \\ 
&\ge 
\frac{1}{2|I_\mathbf{x}|} \sum_{\ell \in I_{\mathbf{x}}} \| \Phi_{\ell,+1}\|_{L^p},
\end{align*}
where the~last inequality follows from~the~triangle inequality \[\|2\Phi_{\ell,+1} \pm A(0)\|_{L^p} \leq \|\Phi_{\ell,+1} - A(0)\|_{L^p} + \|\Phi_{\ell,+1} + A(0)\|_{L^p}\] and~the~fact that $\Phi_{\ell,+1} = -\Phi_{\ell,-1}$.
Using \eqref{eq:positive_norm_thm_M}, we obtain
\[
\|\Phi_{y_\ell,+1}\|_{L^p}
\ge
\frac{\sqrt{B^{2-4/q}c^2-1}}{B^{1-2/q}}
\left(\frac{\Omega}{2^{1+\frac2s}\sqrt{s}}\right)^{\frac{s}{p}}
m^{-\frac1p}.
\]

\textbf{Randomized setting.} Finally, we use that same trick as in~\cite[Theorem A.5]{berner2023learning}. Let $(\mathbf{A},\mathbf{m})\in\operatorname{Alg}^{MC}_m(U,L^p)$ be arbitrary on~a~probability space $(\Omega, \mathcal{F}, \mathbb P)$ and~define
\[
\Omega_0 := \{\omega : \mathbf{m}(\omega)\le 2m\},
\qquad \mathbb{P}(\Omega_0)\ge \tfrac12,
\]
where the estimate follows from Markov's inequality and~$\mathbb{E}(\mathbf{m})\le m$.
Thus, for~every $\omega\in\Omega$,
\begin{align*}
\sup_{u\in U}
\mathbb{E}\|u - A_\omega(u)\|_{L^p}
&\ge
\frac{1}{2|\Gamma|} \sum_{\ell \in \Gamma,\nu\in\{\pm1\}} \mathbb{E} \big(\| \Phi_{\ell,\nu} - A(\Phi_{\ell,\nu})\|_{L^p} \big) \\
&\ge
\mathbb{E} \left( \mathds{1}_{\Omega_0}(\omega) \ \frac{1}{2|\Gamma|} \sum_{\ell \in \Gamma,\nu\in\{\pm1\}} \| \Phi_{\ell,\nu} - A(\Phi_{\ell,\nu})\|_{L^p} \right) \\
&\ge
\mathbb{P}(\Omega_0)\cdot
\frac{\sqrt{B^{2-4/q}c^2-1}}{2B^{1-2/q}}
\left(\frac{\Omega}{2^{1+\frac2s}\sqrt{s}}\right)^{\frac{s}{p}}
m^{-\frac1p}.
\end{align*}
Since the~algorithm was arbitrary, the~proof is complete.
\end{proof}

\section*{Acknowledgements}We would like to~express our gratitude to~Samuel Lanthaler for~many inspiring discussions, which in~particular shaped the~idea to~use fixed point iterations for~designing localized bump functions.

\newpage
\printbibliography
\end{document}